\documentclass{article}                     

\usepackage{graphicx}

\usepackage{amsthm,amsmath,natbib}
\usepackage{natbib}
\RequirePackage[colorlinks,citecolor=blue,urlcolor=blue,hyperfootnotes=false]{hyperref}

\usepackage{latexsym}

\usepackage[dvipsnames]{xcolor}
\definecolor{Bleu}{RGB}{0,0,204}
\definecolor{Violet}{RGB}{102,0,204}
\definecolor{Rouge}{RGB}{204,0,0}
\definecolor{Highlight}{RGB}{251,0,0}
\definecolor{awesome}{rgb}{1.0, 0.13, 0.32}
\usepackage{zref-xr,zref-user} 


\usepackage{titletoc}

\usepackage{amsmath}
\usepackage{amssymb} 
\usepackage{booktabs}
\usepackage{afterpage} 
\usepackage{dsfont} 


\usepackage{rotating}
\usepackage{enumitem}

\usepackage[mathscr]{eucal} 

\usepackage{bm}

\usepackage{algorithm}
\usepackage[noend]{algpseudocode}

\newtheorem{theorem}{Theorem}

\newtheorem{lemma}[theorem]{Lemma}
\newtheorem{proposition}[theorem]{Proposition}

\theoremstyle{definition}

\oddsidemargin=0pt
\textwidth=455pt
\textheight=620pt
\voffset=-20pt
\marginparwidth=0pt
\marginparpush=0pt
\marginparsep=0pt
\evensidemargin=0pt

\newcommand{\openr}{\hbox{${\rm I\kern-.2em R}$}}
\newcommand{\openn}{\hbox{${\rm I\kern-.2em N}$}}


\DeclareMathOperator{\argmax}{argmax}
\DeclareMathOperator{\argmin}{argmin}

\DeclareMathOperator{\KL}{KL}

\DeclareMathOperator{\Kinf}{\mathcal{K}_{\textnormal{inf}}}
\DeclareMathOperator{\E}{\mathbb{E}}
\DeclareMathOperator{\Prob}{\mathbb{P}}

\DeclareMathOperator{\A}{\hat{\mathcal{A}}}
\DeclareMathOperator{\Supp}{Support}
\DeclareMathOperator{\Ind}{\mathds{1}}
\newcommand{\cc}{c}

\def\cV{\mathcal{V}}
\def\cA{\mathcal{A}}
\def\cF{\mathcal{F}}
\def\cD{\mathcal{D}}
\def\cL{\mathcal{L}}
\def\cM{\mathcal{M}}
\def\cN{\mathcal{N}}
\def\cS{\mathcal{S}}
\def\cU{\mathcal{U}}
\def \ind{\mathds{1}}
\newcommand{\bE}{\mathbb{E}}
\newcommand{\bP}{\mathbb{P}}
\def \Reg {\mathrm{Regret}}

\allowdisplaybreaks



\bibliographystyle{plainnat}

\usepackage[normalem]{ulem}
\usepackage{longtable}
\setlength{\marginparwidth}{3.25cm}




\begin{document}

\title{Asymptotically Optimal Algorithms for Budgeted \\ Multiple Play Bandits}

\author{Alex Luedtke$^1$, 
        Emilie Kaufmann$^2$   and Antoine Chambaz$^3$ \\        
        \small $^1$University of Washington, Department of Statistics. \\   \small$^2$CNRS \& Univ. Lille,  CRIStAL (UMR 9189), Inria Lille. \\ \small $^3$Universit\'{e} Paris Descartes, Laboratoire MAP5.}          
\date{}
        
\maketitle

\begin{abstract}
We study  a generalization of  the multi-armed bandit problem  with multiple
plays where there is  a cost associated with pulling each  arm and the agent
has a budget at each time that dictates how much she
can expect  to spend.  We  derive an asymptotic  regret lower bound  for any
uniformly efficient  algorithm in our  setting. We  then study a  variant of
Thompson sampling  for Bernoulli rewards  and a  variant of KL-UCB  for both
single-parameter  exponential  families   and  bounded,  finitely  supported
rewards. We show  these algorithms are asymptotically optimal,  both in rate
and  leading  problem-dependent constants,  including  in  the thick  margin
setting where multiple arms fall on the decision boundary.
\end{abstract}




\section{Introduction}

In the classical multi-armed bandit problem, an agent is repeatedly confronted with a set of $K$ probability distributions $\nu_1,\dots,\nu_K$ called \emph{arms} and must at each round select one of the available arms to pull based on their knowledge from previous rounds of the game. Each played arm presents the agent with a reward drawn from the corresponding distribution, and the agent's objective is to maximize the expected sum of their rewards over time or, equivalently, to minimize the total regret (the expected reward of pulling the optimal arm at every time step minus the expected sum of the rewards corresponding to their selected actions). To play the game well, the agent must balance the need to gather new information about the reward distribution of each arm (exploration) with the need to take advantage of the information that they already have by pulling the arm for which they believe the reward will be the highest (exploitation).

The bandit problem first
started receiving rigorous mathematical attention slightly under a century ago
\citep{Thompson1933}. This early  work focused on Bernoulli  rewards, that are
relevant  in  the  simplest  modeling  of a  sequential  clinical  trial,  and
presented a  Bayesian algorithm  now known as  Thompson sampling.   Since that
time,  many  authors  have  contributed  to  a  deeper  understanding  of  the
multi-armed bandit problem, both with Bernoulli and other reward distributions
and   either    from   a    Bayesian   \citep{Gittins1979}    or   frequentist
\citep{Robbins1952}  perspective. \cite{Lai&Robbins1985}  established a  lower
bound on  the (frequentist) regret of  any algorithm that satisfies  a general
uniform efficiency condition.  This lower  bound provides a concise definition
of  asymptotic  (regret)   optimality  for  an  algorithm:   an  algorithm  is
asymptotically      optimal     when      it      achieves     this      lower
bound. \cite{Lai1987} introduced  what are known  as upper
confidence bound  (UCB) procedures for deciding  which arm to pull  at a given
time step. In short, these procedures compute a UCB for the expected reward of
each arm at each time and pull the  arm with the highest UCB. Many variants of
UCB  algorithms   have  been  proposed   since  then  (see   the  Introduction
of  \citealp{Cappeetal2013}  for  a   thorough  review),  with  more
explicit indices and/or finite-time regret  guarantees.  Among them the KL-UCB
algorithm  \citep{Cappeetal2013} is  proved to  be asymptotically  optimal for
rewards   that   belong   to    a   one-parameter   exponential   family   and
finitely-supported  rewards.   Meanwhile, there  has  been  a recent  interest
in  the  theoretical  understanding   of  the  previously  discussed
Thompson  sampling  algorithm,  whose  first  regret  bound  was  obtained  by
\cite{Agrawal&Goyal2011}. Since then, Thompson Sampling  has been proved to be
asymptotically          optimal         for          Bernoulli         rewards
\citep{Kaufmannetal2012,Agrawal&Goyal2012}   and   for  reward   distributions
belonging to univariate exponential families \citep{Kordaetal2013}.

There has recently been a surge of interest in the multi-armed bandit problem,
due to its applications to (online) sequential content recommendation. In this
context each arm models the feedback  of an agent to a specific item
that can be displayed (e.g. an  advertisement). In this framework, it might be
relevant to display  \emph{several} items at a time, and  some variants of the
classical bandit problems that have  been proposed in the literature
may  be considered.  In  the \emph{multi-armed  bandit  with multiple  plays},
$m \geq 1$  out of $K$ arms are  sampled at each round and  all the associated
rewards   are   observed    by   the   agent,   who    receives   their   sum.
\cite{Anantharametal1987}  present  a regret  lower  bound  for this  problem,
together with a (non-explicit) matching strategy. More explicit strategies can
be  obtained  when  viewing  this  problem  as  a  particular  instance  of  a
\emph{combinatorial bandit problem  with semi-bandit feedback}.  Combinatorial
bandits,   originally  introduced   by   \cite{CesaBianchi&Lugosi2012}  in   a
non-stochastic setting, present the agent  with possibly structured subsets of
arms at  each round: once a  subset is chosen,  the agent receives the  sum of
their   rewards.   The   semi-bandit   feedback  corresponds   to   the   case
where
the  agent  is  able   to  see  the  reward  of  each   of  the  sampled  arms
\citep{Audibertetal2011}.   Several extensions  of  UCB  procedures have  been
proposed      for      the      combinatorial      setting      (see      e.g.
\cite{Chenetal2013,Combesetal2015b}),       with      logarithmic       regret
guarantees. However, existing regret upper bounds  do not match the lower bound
of \cite{Anantharametal1987}.
In  particular,  despite  the  strong practical  performance  of  KL-UCB-based
algorithms in  some combinatorial  settings (including  multiple-plays), their
asymptotic  optimality has  never been  established. Extending  the optimality
result  from the  single-play setting  has proven  challenging, especially  in
settings where the optimal set of $m$ arms in non-unique.
Recently, \cite{Komiyamaetal2015} proved the asymptotic optimality of Thompson
sampling for  multiple-play bandits with  Bernoulli rewards in the  case where
the  arm with  the $m^\textnormal{th}$  largest mean  is unique.  An important
consequence of the uniqueness of  the $m^\textnormal{th}$ largest mean is that
the optimal set of $m$ arms is  necessarily unique, which may not be plausible
in practice.

In  this  paper,  we  extend  the multiple  plays  model  in  two  directions,
incorporating a \emph{budget constraint} and an \emph{indifference
  point}. Given a known cost $c_a$ associated with pulling each arm $a$, at each round a
subset  of arms  $\hat{\cA}(t)$  is selected,  so that  the  expected cost  of
pulling the  chosen arms is  at most the  budget $B$.  More  formally, letting
$C(t)\equiv\sum_{a  \in \hat{\cA}(t)}  c_a$, one  requires $\E[C(t)]  \leq B$,
where the expectation  over the random selection of  the subset $\hat{\cA}(t)$
is taken conditionally on 
past observations.   The agent observes  the reward associated to  the selected
arms          and          receives           a          total          reward
$R(t) =  \sum_{a=1}^K Y_{a}(t) \ind_{(a \in  \hat{\cA}(t))}$, where $Y_{a}(t)$
is drawn from  $\nu_a$.  This reward is  then compared to what  she could have
obtained, had she spent the same budget  on some other activity, for which the
expect reward per cost unit is $\rho \geq 0$ (that is, the agent may prefer to
use that  money for some  purpose that has reward  to cost ratio  greater than
$\rho$ and is external  to the bandit problem). We note that, for positive reward distributions, choosing $\rho=0$ corresponds to taking an action at every round. The agent's  gain at round $t$
is thus defined as
\[G(t) =  R(t) -  \rho C(t)  = \sum_{a  \in \hat{\cA}(t)}\left(Y_{a}(t)  - c_a
    \rho\right).\] The  goal of  the agent  is to  devise a  sequential subset
selection strategy  that maximizes the expected  sum of her gains,  up to some
horizon $T$ and for which the budget constraint $\E[C(t)] \leq B$ is satisfied
at each round $t \leq T$.  In particular, arm $a$ is ``worth'' drawing (in the
sense that it increases the expected gain) only if its average reward per cost
unit, $\mu_a/\cc_a$ (where $\mu_a$ is the expectation of $\nu_a$), is at least
the indifference point $\rho$.

This  new  framework  no  longer  requires  the number  of  arm  draws  to  be
fixed. Rather,  the number  of arm  draws is selected  to exhaust  the budget,
which makes sense  in several online marketing scenarios. One  can imagine for
example a  company targeting a new  market on which  it is willing to  spend a
budget $B$ per  week. Each week, the  company has to decide  which products to
advertise for, and  the cost of the advertising campaign  may vary. After each
week, the income  associated to each campaign $a$ is  measured and compared to
the minimal  income of $\rho  c_a$ that can  be obtained when  targeting other
(known)  markets  or  investing  the   money  in  some  other  well-understood
venture.  Another possible scenario is that the  same item  can be
displayed on several marketplaces never explored before for different costs, and the  seller has to sequentially choose the different places he wants to display the items on while keeping the total budget spend smaller than $B$ and maintaining a profitability larger than what can be obtained on a reference market place with reward per cost unit $\rho$.

Our first contribution  is to characterize the best  attainable performance in
terms  of regret (with respect  to the  gain $G(t)$,  not the  total
  reward $R(t)$) in this multiple-play bandit scenario with cost constraints,
thanks to  a lower bound  that generalizes that  of \cite{Anantharametal1987}.
We then study natural extensions of two existing bandit algorithms (KL-UCB and
Thompson sampling) to  our setting.  We prove both  rate and problem-dependent
leading  constant  optimality for  KL-UCB  and  Thompson sampling.   The  most
difficult part  of the proof is  to show that  the optimal arms away  from the
margin are pulled in almost every  round (specifically, they are pulled in all
but a sub-logarithmic number  of rounds). \cite{Komiyamaetal2015} studied this
problem  for Thompson  sampling  in multiple-play  bandits  using an  argument
different than  that used in this  paper.  We provide a  novel proof technique
that  leverages the  asymptotic lower  bound  on the  number of  draws of  any
suboptimal arm.  While  this lower bound on suboptimal arm  draws is typically
used  to prove  an asymptotic  lower  bound on  the regret  of any  reasonable
algorithm, we use  it as a key  ingredient for our proof  of an asymptotically
optimal \textit{upper  bound} on the  regret of KL-UCB and  Thompson sampling,
i.e. to prove the asymptotic optimality of these two
algorithms. Also, throughout the manuscript, we do not assume that the set of optimal arms
is  unique, unlike  most  of  the existing  work  on (standard)  multiple-play
bandits.

The rest of the article 
is organized  as follows. Section~\ref{sec:methintro} outlines  our problem of
interest.   Section~\ref{sec:lb} provides  an  asymptotic lower  bound on  the
number  of suboptimal  arm draws  and on  the regret.   Section~\ref{sec:algs}
presents the  two sampling algorithms we  consider in this paper  and theorems
establishing  their asymptotic  optimality: KL-UCB  (Section~\ref{sec:kl}) and
Thompson  sampling  (Section~\ref{sec:thom}).  Section~\ref{sec:exp}  presents
numerical     experiments     supporting     our     theoretical     findings.
Section~\ref{sec:proofoutlines}   presents  the   proofs  of   our  asymptotic
optimality  (rate  and  leading  constant) results  for  KL-UCB  and  Thompson
Sampling.  Section~\ref{sec:conc} gives  concluding remarks.  Technical proofs
are postponed to  the Appendix.

\section{Multiple plays bandit with cost constraint} \label{sec:methintro}

We consider a  finite collection of arms $a\in\{1,\ldots,K\}$,  where each arm
has real-valued marginal  reward distribution $\nu_a$ whose mean  we denote by
both $\mu_a$  and $E(\nu_a)$. Each  arm belongs to a  (possibly nonparametric)
class  of  distributions  $\cD$.    We  use  $\mathcal{V}$  to  denote
$(\nu_1,\ldots,\nu_{K})$, where $\mathcal{V}$ belongs to 
any model $\cD_K$ that is
variation-independent  in  the  sense  that,  for  each  $a\in\{1,\ldots,K\}$,
knowing  the   joint  distribution  of   the  rewards  $a'\not=a$   places  no
restrictions on the collection of  possible marginal distributions of $\nu_a$,
i.e.  $\nu_a$ could be equal to any element in $\cD$. More formally, letting $\cD_{-a}$ denote the collection of joint distributions of the rewards $a'\not=a$ implied by at least one distribution in $\cD_K$, variation independence states that, for each $a\in\{1,\ldots,K\}$ it is true that, for every joint distribution $V_{-a}\in\mathcal{D}_{-a}$ and every distribution $\nu_a\in\cD$, there exists a distribution in $\mathcal{D}_K$ whose joint distribution of the rewards $a'\not=a$ is equal to $V_{-a}$ and whose marginal distribution of reward $a$ is equal to $\nu_a$. An example of a
statistical  model satisfying  this variation-independence  assumption is  the
distribution in which the  rewards of all of the arms  are independent and the
marginal distributions $\nu_a$ fall in  $\cD$ for all $a$, though this
assumption also  allows for high levels  of dependence between the  rewards of
the arms,  i.e. is not  to be confused  with the \textit{much  stronger} model
assumption of independence between the different arms.

\subsection{The sequential decision problem}

Let $\{(Y_1(t),\ldots,Y_K(t))\}_{t=1}^\infty$ be 
an independent and identically distributed (i.i.d.)  sample from the
distribution $\cV$. In the multiple-play bandit with cost constraint, each arm
$a$ is associated with a known  \emph{cost} $c_a>0$. The model also depends on
a  known  \emph{budget  per   round}  $B$  and  \emph{indifference  parameter}
$\rho \ge 0$.   At round $t$, the  agent selects a subset $\A(t)$  of arms and
subsequently observes  the action-reward pairs $\{(a,Y_a(t))  : a\in \A(t)\}$.
We emphasize that  the agent is aware that reward  $Y_a(t)$ corresponds to the
action $a\in\A(t)$.  This subset $\A(t)$ is drawn from a distribution $Q(t-1)$
over  $\cS_K$,  the set  of  all  subsets of  $\{1,\dots,K\}$,  that
depends  on the  observations gathered  at the  $(t-1)$ previous  rounds. More
precisely, $Q(t)$ is $\cF(t)$-measurable, where $\cF(t)$ is the $\sigma$-field
generated by all action-reward pairs  seen at times $1,\ldots,t$, and possibly
also  some exogenous  stochastic mechanism.   We  use $q_a(t)$  to denote  the
probability that arm $a$ falls in $\A(t+1)\sim Q(t)$.

Given the  budget $B$  and the  indifference parameter  $\rho$, at  each round
$(t+1)$ the distribution $Q(t)$ must respect the budget constraint
\begin{align}
\E_{\cA\sim Q(t)}\left[\sum_{a\in \cA} \cc_a \right] \le B, \ \ \ \text{or, equivalently,} \ \ \ \sum_{a=1}^{K} \cc_a q_a(t) \leq B.\label{SoftBC}
\end{align}
Upon    selecting     the    arms,    the    agent     receives    a    reward
$R(t+1)   =   \sum_{a    \in   \A(t+1)}   Y_a(t+1)$   and    incurs   a   gain
$G(t+1)  =\sum_{a  \in  \A(t+1)}  (Y_a(t+1) -  c_a\rho)$.  Given  a  (possibly
unknown)  horizon $T$,  the goal  of  the agent  is  to adopt  a strategy  for
sequentially selecting the distributions $Q(t)$ that maximizes
\[\bE\left[\sum_{t=1}^T G(t)\right],\]
while satisfying, at each round $t=0,\dots,T-1$ the budget constraint \eqref{SoftBC}. This constraint may be viewed as a `soft' budget constraint, as it allows the agent to (slightly) exceed the budget at some rounds, as long as the expected cost remains below $B$ at each round. We shall see below that considering a `hard' budget constraint, that is selecting at each round a deterministic subset $\hat{\mathcal{A}}(t)$  that satisfies  $\sum_{a=1}^K \cc_a \ind_{(a \in \hat{\cA}(t))} \leq B$, is a much harder problem. Besides, in the marketing examples described in the introduction, it makes sense to consider a large time horizon and to allow for minor budget crossings.  
Under the  soft budget  constraint \eqref{SoftBC},  if we  knew the  vector of
expected mean rewards  $\bm\mu \equiv (\mu_1,\dots,\mu_K)$, at  each round $t$
we would draw a subset from a distribution 
\begin{equation}Q^\star  \in  \underset{Q}{\text{argmax}}  \
  \bE_{S  \sim  Q}\left[\sum_{a  \in  S}  (\mu_a -  c_a  \rho)\right]  \  \  \
  \text{such that} \  \ \ \bE_{S \sim Q} \left[\sum_{a  \in S} c_a\right] \leq
  B.\label{OriginalPb}\end{equation}
Above, the argmax is over distributions $Q$ with support on the power set of $\{1,\ldots,K\}$. 
Noting that the two
expectations   only  depend   on  the   marginal  probability   of  inclusions
$q_a = \bP_{S \sim Q}\left( a \in S\right)$, it boils down to finding a vector
$\bm q^\star = (q_a)_{a =1}^K$ that  satisfies 
\begin{equation}\label{FractionalKnapsack}\bm q^\star \in  \underset{\bm q \in
    [0,1]^K}{\text{argmax}}  \ \sum_{a  =1}^K  q_a(\mu_a -  c_a \rho)  \ \  \
  \text{such that} \ \ \ \sum_{a =1}^K q_a c_a \leq B.\end{equation}
An oracle  strategy would  then draw  $S$ from  a distribution  $Q^\star$ with
marginal  probabilities of  inclusions given  by $\bm  q^\star$ (e.g.  including
independently  each arm  $a$ with  probability $q^\star_a$).  The optimization
problem \eqref{FractionalKnapsack}  is known as a  fractional knapsack problem
\citep{Dantzig1957}, and its solution is  a greedy strategy, that is described
below. It is expressed  in terms of the reward-to-cost ratio  of each arm $a$,
defined as $\rho_a \equiv \mu_a/c_a$.  

\begin{proposition}\label{prop:Oracle}Introduce
\[\rho^\star  \equiv \left\{ \begin{array}{cl}
                             &\rho  \ \text{ if } \ \sum_{a : \rho_a > \rho} c_a < B, \\
                             &\sup \{  r \geq 0  : \sum_{a  : \rho_a >  r} c_a
                               \geq B\} \ge \rho\ \text{ otherwise},\\ 
                            \end{array}
                          \right.\] and define the three sets
\begin{align*}
\mbox{optimal arms away from the margin: }\ \ &\cL\equiv \{a : \rho_a>\rho^{\star}\}, \\
\mbox{arms on the margin: }\ \ &\cM\equiv \{a : \rho_a=\rho^{\star}\}, \\
\mbox{suboptimal arms away from the margin: }\ \ &\cN\equiv \{a : \rho_a<\rho^{\star}\}.
\end{align*}
Then $\bm  q^\star$ is solution  to \eqref{FractionalKnapsack} if and  only if
$q_a^\star = 1$  for all $a \in \cL$,  $q^\star_b =0$ for all $b  \in \cN$ and
$\sum_{a   \in   \cM}   c_aq_a^\star   =   B    -   \sum_{a   \in   \cL}   c_a$   if
$\rho^\star > \rho$.
\end{proposition}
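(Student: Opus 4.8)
The plan is to treat \eqref{FractionalKnapsack} as a linear program over the box $[0,1]^K$, produce a tight upper bound on its value, and then read off the maximizers from the equality conditions. The first step is to rewrite the objective: since $\mu_a - c_a\rho = c_a(\rho_a-\rho)$, the objective equals $F(\bm q)=\sum_{a=1}^K q_a c_a(\rho_a-\rho)$, and I would split it around the threshold $\rho^\star$ as
\[
F(\bm q) = \sum_{a=1}^K q_a c_a(\rho_a - \rho^\star) \;+\; (\rho^\star - \rho)\sum_{a=1}^K q_a c_a .
\]
The point of this decomposition is that the first sum is controlled termwise by the sign of $\rho_a-\rho^\star$, which is exactly what defines $\cL,\cM,\cN$, while the second sum is controlled by the budget constraint together with $\rho^\star\ge\rho$.

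Before bounding, I would record the elementary facts about $\rho^\star$ that the argument needs. Consider the nonincreasing, right-continuous step function $r\mapsto f(r)\equiv\sum_{a:\rho_a>r}c_a$, whose downward jumps occur exactly at the distinct values of the $\rho_a$. In the first branch ($\sum_{a:\rho_a>\rho}c_a<B$) one has $\rho^\star=\rho$ and $\cL=\{a:\rho_a>\rho\}$, so $\sum_{a\in\cL}c_a=f(\rho)<B$. In the ``otherwise'' branch, $\rho\in\{r:f(r)\ge B\}$ forces $\rho^\star\ge\rho$; since $f$ is a right-continuous step function, the supremum defining $\rho^\star$ is attained at a jump point, i.e. $\rho^\star$ equals some $\rho_a$ (so $\cM\neq\emptyset$), and one gets the two-sided bound
\[
\textstyle\sum_{a\in\cL}c_a = f(\rho^\star) < B \le f(\rho^{\star-}) = \sum_{a\in\cL}c_a + \sum_{a\in\cM}c_a .
\]
In particular $0<B-\sum_{a\in\cL}c_a\le\sum_{a\in\cM}c_a$, which is precisely what guarantees that the on-margin condition $\sum_{a\in\cM}c_aq_a=B-\sum_{a\in\cL}c_a$ is achievable with $q_a\in[0,1]$.

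With these in hand I would bound $F$ termwise. For $a\in\cL$ the coefficient $c_a(\rho_a-\rho^\star)$ is strictly positive, so $q_ac_a(\rho_a-\rho^\star)\le c_a(\rho_a-\rho^\star)$ with equality iff $q_a=1$; for $a\in\cN$ it is strictly negative, so the term is $\le0$ with equality iff $q_a=0$; for $a\in\cM$ it vanishes. Since $\rho^\star-\rho\ge0$ and $\sum_a q_ac_a\le B$, the second sum is $\le(\rho^\star-\rho)B$, with equality iff $\rho^\star=\rho$ or the budget is saturated. Adding gives $F(\bm q)\le\sum_{a\in\cL}c_a(\rho_a-\rho^\star)+(\rho^\star-\rho)B=:F^\star$ for every feasible $\bm q$, so a feasible $\bm q$ is a maximizer if and only if both inequalities are equalities. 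The first equality is equivalent to $q_a=1$ on $\cL$ and $q_a=0$ on $\cN$. If $\rho^\star=\rho$ the second is automatic, leaving $q$ on $\cM$ free subject only to feasibility, matching the statement; if $\rho^\star>\rho$ the second forces $\sum_a q_ac_a=B$, which combined with the first gives exactly $\sum_{a\in\cM}c_aq_a=B-\sum_{a\in\cL}c_a$, and the bounds of the previous step show such $q$ exists and is feasible, so $F^\star$ is attained. I expect the only delicate point to be the step-function bookkeeping for $\rho^\star$ in the ``otherwise'' branch—keeping the strict versus non-strict inequalities straight (that $\sum_{a\in\cL}c_a<B\le\sum_{a\in\cL}c_a+\sum_{a\in\cM}c_a$) and confirming $\cM\neq\emptyset$—since everything else reduces to a one-line termwise comparison. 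A fully equivalent route would be to invoke LP duality, taking the multiplier $\lambda=\rho^\star-\rho\ge0$ for the budget constraint and reading the conclusion off the KKT/complementary-slackness conditions.
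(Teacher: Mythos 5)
Your proof is correct and is in substance the same as the paper's: the paper likewise writes the optimality gap as the nonnegative sum $\sum_{a\in\cL}c_a(\rho_a-\rho^\star)(1-q_a)+\sum_{b\in\cN}c_b(\rho^\star-\rho_b)q_b$ and characterizes solutions by when this vanishes, the only structural difference being that it packages your budget-slack term $(\rho^\star-\rho)\left(B-\sum_a c_a q_a\right)$ as a pseudo-arm $K+1$ with $c_{K+1}=B$, $\mu_{K+1}=B\rho$ and works with an equality constraint on the extended problem. Your explicit step-function bookkeeping—showing $\rho^\star$ is attained at some $\rho_a$, hence $\cM\neq\emptyset$, and that $\sum_{a\in\cL}c_a<B\le\sum_{a\in\cL\cup\cM}c_a$—is carried out in more detail than in the paper, which asserts these facts essentially by definition of $\cL$ and $\cM$.
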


We would like to emphasize that, just like the quantities $Q^\star$, $q^\star$ or $\rho_a$ defined above, the quantity $\rho^\star$ defined in Proposition~\ref{prop:Oracle} depends on the value of $\rho$, the vector of cost and on the vector of means $\bm\mu$. When we need to materialize this dependency in $\bm\mu$ we shall use the notation $\rho^\star(\bm\mu)$, but it is sometimes omitted for the sake of readability. 

From Proposition~\ref{prop:Oracle}, proved in  Appendix~\ref{proofs:Oracle},  the optimal  strategy sorts  the
items  by  decreasing  order  of  $\rho_a$,   and  includes  them  one  by  one
($q^\star_a=1$),  as  long as  the  value  increases  and  the budget  is  not
exceeded. Then  we can
identify two situations: if $\rho^\star(\bm\mu)  = \rho$, there are not enough
interesting items (i.e. such that $\rho_a > \rho$) to saturate the budget, and
the  optimal   strategy  is  to   include  all  the  interesting   items.   If
$\rho^\star(\bm\mu)> \rho$, some probability of  inclusion is further given to
the items on the  margin in order to saturate the  budget constraint.  In that
case,  the  margin is  always  non-empty:  there  exist  items $a$  such  that
$\rho^\star(\bm\mu) = \rho_a$.

\paragraph{Recovering the multiple-play bandit model.}   By choosing $c_a = 1$
for all  arm $a$, $B=m$ and  $\rho=0$, we recover the  classical multiple-play
bandit model.  In  that case $\rho^\star(\bm\mu) = \mu_{[m]}$,  where $[m]$ is
the arm with the $m^{\textnormal{th}}$ largest mean  and $Q^\star = \delta_{\{[1],\dots,[m]\}}$ is a
solution to \eqref{OriginalPb}: the corresponding oracle strategy always plays
the $m$ arms with largest means. 

\paragraph{Hard and soft constraints.} 
Under hard budget constraints, if we  knew the vector of expected mean rewards
$\bm\mu$, at each round $t$ we would pick a subset 
\begin{equation}S^\star  \in \underset{S  \in \cS_K}{\text{argmax}}  \ \sum_{a
    \in S} (\mu_a - c_a \rho) \ \ \  \text{such that} \ \ \ \sum_{a \in S} c_a
  \leq B.\label{Knapsack}\end{equation}
This is a $0/1$ knapsack problem, that
is much harder  to solve than the above fractional  knapsack problem. In fact,
$0/1$ knapsack problems are NP-hard, though  they are, admittedly, some of the
easiest 
problems   in  this   class,  and   reasonable  approximation   schemes  exist
\citep{Karp1972}.   Nonetheless,  the  greedy   strategy  (including  arms  by
decreasing  order of  $\rho_a$ while  the budget  is not  exceeded, with  ties
broken arbitrarily) is not generally  a solution to \eqref{Knapsack}. However,
using Proposition~\ref{prop:Oracle},  one can identify  some examples
  where there exist
deterministic  solutions to  \eqref{FractionalKnapsack}, i.e.   solutions such
that $q_a^\star \in \{0,1\}$ that are therefore solutions to \eqref{Knapsack}:
if  $\rho^\star(\bm\mu)=\rho$  or  if  there  exists $m  \in  \cM$  such  that
$\sum_{a \in \cL\cup \{m\}} c_a = B$. Hence the multiple-play bandit model can
be viewed as a particular instance of the multiple plays model under both hard
or soft budget constraint.  In the rest of the article, 
we only consider  soft budget constraints, as there is  generally no tractable
oracle under hard budget constraints.

\paragraph{High-probability bound on the budget spent by a finite horizon $T$.} In Appendix~\ref{app:finitehorizon}, we outline how one could analyze the regret of algorithms that respect the soft budget constraint \eqref{SoftBC} at each time $t$ in a finite-horizon problem in which the requirement that \eqref{SoftBC} hold at each time $t$ is replaced by the hard budget constraint that $\sum_{t=1}^T\sum_{a\in \hat{\cA}(t)} \cc_a\le BT$ almost surely. Our argument suggests that the regret in these settings should be no worse than $O(\sqrt{T})$.
 
\subsection{Regret decompositions}\label{sec:regretdecomp}

The best achievable (oracle) performance  consists in choosing, at every round
$t$, $Q(t)$  to be the  optimal distribution $Q^\star$ whose  probabilities of
inclusions are  described in  Proposition~\ref{prop:Oracle}.  Using the definitions introduced in 
Proposition~\ref{prop:Oracle}, such  a strategy
ensures an expected gain at each round of
\begin{align}
G^\star&\equiv \sum_{a=1}^K q_a^\star (\mu_a-c_a\rho). \label{roundgain}
\end{align}
The quantity above is the reward from pulling the chosen arms relative to the reward from reallocating the expected cost of the strategy, namely $\sum_{a=1}^K q_a^\star c_a$, to pursue the action (which is external to the bandit problem) that has reward-to-cost ratio equal to the indifference point $\rho$. We prove the following identity in Appendix~\ref{proofs:Oracle}.
\begin{proposition}\label{prop:Gain}
It holds that
\begin{align*}
G^\star&= \sum_{a \in \cL} \mu_a + \rho^\star\left(B - \sum_{a \in \cL}
    c_a\right)    -   B\rho.
\end{align*}
\end{proposition}

Maximizing the  expected total  gain is equivalent  to minimizing  the regret,
that is the difference in performance compared to the oracle strategy:
\begin{align*}
\Reg(T,\cV,\texttt{Alg})&\equiv T G^\star - \E_\cV\left[\sum_{t=1}^T G(t)\right],
\end{align*}
where the sequence  of gains $G(t)$ is obtained  under algorithm \texttt{Alg}.
The following  statement, proved in Appendix~\ref{proofs:Oracle},  provides an
interesting  decomposition of  the  regret, as  a function  of  the number  of
selections          of          each         arm,          denoted          by
$N_a(T)\equiv \sum_{t=1}^T
\Ind\{a\in\A(t)\}$. 

\begin{proposition}\label{prop:RegretDec}  With  $\rho^\star =  \rho^\star(\bm\mu)$,
  $\cL,\cN$  defined as  in Proposition~\ref{prop:Oracle},  for any  algorithm
  \texttt{Alg}
 \begin{eqnarray}\Reg(T,\cV,\texttt{Alg}) 
 & = &  \sum_{a^\star\in\cL} \cc_{a^\star}(\rho_{a^\star}-\rho^{\star} )\left(T- \E_\cV[N_{a^\star}(T)]\right) + \sum_{a\in\cN} \cc_{a} [\rho^{\star}-\rho_a] \E_\cV[N_a(T)] \nonumber \\& & \ \ + (\rho^\star - \rho)\left(BT - \sum_{a=1}^K c_a\bE_\cV[N_a(T)]\right). \label{eq:regretDecomp2} \end{eqnarray}
\end{proposition}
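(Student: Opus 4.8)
The plan is to first express the expected cumulative gain purely in terms of the expected numbers of draws $\E_\cV[N_a(T)]$, and then to recognize the stated three-term decomposition as a matter of algebraic regrouping.

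First I would compute the expected gain at a single round. Since the selection $\A(t)$ is drawn from $Q(t-1)$, which is $\cF(t-1)$-measurable (together, possibly, with an exogenous randomization that is independent of everything else), while the current-round rewards $Y_a(t)$ are drawn i.i.d.\ from $\cV$ and are therefore independent of $\cF(t-1)$, the indicator $\Ind\{a\in\A(t)\}$ is independent of $Y_a(t)$. Writing $G(t)=\sum_{a=1}^K \Ind\{a\in\A(t)\}\,(Y_a(t)-\cc_a\rho)$ and taking expectations then gives
\[
\E_\cV[G(t)]=\sum_{a=1}^K \bP(a\in\A(t))\,(\mu_a-\cc_a\rho)=\sum_{a=1}^K q_a(t-1)\,\cc_a(\rho_a-\rho),
\]
where I used $\mu_a=\cc_a\rho_a$. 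Summing over $t=1,\ldots,T$ and noting that $\E_\cV[N_a(T)]=\sum_{t=1}^T q_a(t-1)$, I obtain
\[
\E_\cV\Big[\sum_{t=1}^T G(t)\Big]=\sum_{a=1}^K \cc_a(\rho_a-\rho)\,\E_\cV[N_a(T)].
\]

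Next I would rewrite the oracle per-round gain in a form compatible with the target. Substituting $\mu_a=\cc_a\rho_a$ into the definition of $G^\star$ and collecting the terms multiplying $\rho^\star$ yields $G^\star=\sum_{a\in\cL}\cc_a(\rho_a-\rho^\star)+B(\rho^\star-\rho)$. The decisive manipulation is then to split, for every arm, $\cc_a(\rho_a-\rho)=\cc_a(\rho_a-\rho^\star)+\cc_a(\rho^\star-\rho)$, so that
\[
\E_\cV\Big[\sum_{t=1}^T G(t)\Big]
=\sum_{a=1}^K \cc_a(\rho_a-\rho^\star)\,\E_\cV[N_a(T)]
+(\rho^\star-\rho)\sum_{a=1}^K \cc_a\,\E_\cV[N_a(T)].
\]
Because $\rho_a=\rho^\star$ for every $a\in\cM$, the arms on the margin vanish from the first sum, which thus runs over $\cL\cup\cN$ only.

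Finally I would form $\Reg(T,\cV,\texttt{Alg})=TG^\star-\E_\cV[\sum_{t=1}^T G(t)]$ and regroup. The $\cL$ contributions combine into $\sum_{a^\star\in\cL}\cc_{a^\star}(\rho_{a^\star}-\rho^\star)\,(T-\E_\cV[N_{a^\star}(T)])$; the $\cN$ contributions, for which $\rho_a-\rho^\star<0$, become $\sum_{a\in\cN}\cc_a(\rho^\star-\rho_a)\,\E_\cV[N_a(T)]$ after a sign flip; and the $\rho^\star-\rho$ terms assemble into $(\rho^\star-\rho)(BT-\sum_{a=1}^K \cc_a\,\E_\cV[N_a(T)])$, matching \eqref{eq:regretDecomp2}. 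The only genuinely delicate point is the first step: one must justify that the current-round rewards are independent of the selection, so that $\E_\cV[G(t)]$ depends on the policy only through the inclusion probabilities $q_a(t-1)$. Everything afterward is bookkeeping driven by the identity $\cc_a(\rho_a-\rho)=\cc_a(\rho_a-\rho^\star)+\cc_a(\rho^\star-\rho)$ and the cancellation of the margin terms.
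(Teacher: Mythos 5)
Your proof is correct, and it shares the paper's essential first step: the expected per-round gain depends on the policy only through the marginal inclusion probabilities, so that $\E_\cV\bigl[\sum_{t=1}^T G(t)\bigr]=\sum_{a=1}^K \cc_a(\rho_a-\rho)\,\E_\cV[N_a(T)]$. (You actually justify this independence point more explicitly than the paper does; note only that $q_a(t-1)$ is itself random, so strictly one should write $\bP(a\in\A(t))=\E_\cV[q_a(t-1)]$ and $\E_\cV[N_a(T)]=\sum_t\E_\cV[q_a(t-1)]$ — the paper commits the same harmless abuse.) Where you diverge is in the algebraic regrouping. The paper works per round and introduces the pseudo-arm $K+1$ with $\mu_{K+1}=B\rho$, $c_{K+1}=B$, choosing $q_{K+1}(t)$ so that $\sum_{a=1}^{K+1}q_a(t)c_a=B$ exactly; it then eliminates the margin contribution $\rho^\star\sum_{a\in\cM}q_a(t)c_a$ by substituting this budget-saturation identity, which is what produces the third term. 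You instead recenter every arm at $\rho^\star$ via $\cc_a(\rho_a-\rho)=\cc_a(\rho_a-\rho^\star)+\cc_a(\rho^\star-\rho)$ and let the margin terms vanish because $\rho_a=\rho^\star$ on $\cM$, so the budget-slack term $(\rho^\star-\rho)\bigl(BT-\sum_a c_a\E_\cV[N_a(T)]\bigr)$ assembles without any extended bandit. Both routes are sound and of comparable length; yours is slightly more self-contained, while the paper's pseudo-arm bookkeeping is not gratuitous — it sets up the extended-bandit notation that the paper reuses throughout its optimality proofs (Section~\ref{sec:proofoutlines}), so the proof doubles as an introduction to that device.
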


This decomposition writes the regret as  a sum of three non-negative terms. In
order for the regret to be small, each optimal arm $a^\star \in \cL$ should be
drawn very often 
(of order  $T$ times, to  make the first term  small) and each  suboptimal arm
$a^\star \in \cN$ should be drawn seldomly 
(to make  the second term  small).  Finally  if $\rho^\star>\rho$, that  is if
there are sufficiently many 
`worthwhile' arms to exceed the  budget, then the third term appears
as a  penalty for not using  the whole budget  at every round.  It  means that
arms on the margin $\cM$ have to be drawn sufficiently
  often so as to saturate the budget constraint.

\paragraph{An extended bandit interpretation.} Here we propose another view on
this regret decomposition,  by means of an extended bandit  game with an extra
arm, which we term a pseudo-arm, that  represents the choice not to pull arms.
Whenever an algorithm does not  saturate the budget constraint \eqref{SoftBC},
one can view this  algorithm as putting weight on a  pseudo-arm in the bandit,
that  yields  zero  gain  but  permits  saturation  of  the  budget.   Letting
$\mu_{K+1}=B\rho$  and  $c_{K+1}=B$,  the  gain associated  with  drawing  arm
$(K+1)$ (whose  distribution is a  point mass at  $B\rho$) is indeed  zero (as
$\mu_{K+1}  -  \rho  c_{K+1}  =  0$)   and,  for  any  $\bm  q(t)$  such  that
$\sum_{a=1}^{K}q_a(t)  c_a  \leq  B$,  there  exists  $q_{K+1}(t)$  such  that
$\sum_{a=1}^{K+1}q_a(t)  c_a =  B$,  as $c_{K+1}=B$.   Any  algorithm for  the
original bandit  problem selecting  $\hat{S}(t) \in \cS_{K}$  at time  $t$ can
thus be  viewed as an  algorithm selecting $\tilde{S}(t) \in  \cS_{K+1}$, that
additionally  includes  arm $(K+1)$  with  probability  $q_{K+1}(t)$.  As  the
pseudo-arm  is associated  with a  null gain,  the cumulated  gain and  regret
are     similar     in      both     settings.      Moreover,     as
$q_{K+1}(t) = (B - \sum_{a=1}^Kc_aq_a(t))/B$,  one easily sees that the number
of (artificial) selections of the pseudo-arm is such that
\[B\bE[N_{K+1}(T)]    =   BT    -    \sum_{a=1}^K   c_a\bE[N_a(T)],\]    which
equals the  third term in  the regret decomposition,  up to
  the factor $(\rho^{\star} - \rho)$.

In  this  extended  bandit  model,  the  three  sets  of  arms  introduced  in
Proposition~\ref{prop:Oracle}          remain         unchanged,          with
$\cL\equiv      \{a      \in     \{1,\dots,K+1\}:      \rho_a>\rho^{\star}\}$,
$\cM\equiv    \{a    \in     \{1,\dots,K+1\}:    \rho_a=\rho^{\star}\}$    and
$\cN\equiv   \{a    \in   \{1,\dots,K+1\}   :    \rho_a<\rho^{\star}\}$.    As
$\rho_{K+1}=\rho \leq \rho^\star$, the pseudo-arm  may only belong to $\cM$ or
$\cN$, and the  margin $\cM$ is always non-empty. Considering  the
extended bandit  model, the regret  decomposition can  be rewritten in  a more
compact way:
\[\Reg(T,\cV,           \texttt{Alg})          =           \sum_{a^\star\in\cL}
  \cc_{a^\star}(\rho_{a^\star}-\rho^{\star}                          )\left(T-
    \E[N_{a^\star}(T)]\right)  + \sum_{a\in\cN}  \cc_{a} [\rho^{\star}-\rho_a]
  \E[N_a(T)].\]

Our proofs make use  of this extended bandit model, since  many of the results
we present apply to both the ``actual'' arms $a=1,\ldots,K$ and the pseudo-arm
$(K+1)$. Our  proofs also  make use  of a  set $\cS$,  which, in  the extended
bandit model,  refers to all arms  in $(\cL\cup\cM)\backslash\{K+1\}$ whereas,
in the unextended bandit model, it refers simply to all optimal arms
both on and away from the margin.

\subsection{Related work}

There has been considerable work on various forms of ``budgeted'' or ``knapsack'' bandit problems \citep{TranThanhetal2012,Badanidiyuruetal2013,Agrawal&Devanur2014,Xiaetal2015,Xiaetal2016,Li&Xia2017}. The main difference between our work and these works is that we consider a round-wise budget constrain, and allow for several arms to be selected at each round, possibly in a randomized way in order to satisfy the budget constraint in expectation. In contrast, in most existing works, one arm is (deterministically) selected at each round, and the game ends when a global budget is exhausted. The work of \cite{Xiaetal2016b} appears to be the most closely related to ours: in their setup the agent may play multiple arms at each round, though the number of arms pulled at each round is fixed and the cost of pulling each arm is random and observed upon pulling each arm. \cite{Sankararaman18} also consider a framework in which a subset of arms is selected at each round, but this subset is chosen from a list of candidate subsets (as in a combinatorial bandit problem) and there is a global budget constraint. Compared to all these mentioned budgeted bandit problems, the focus of our analysis differs substantially, in that our primary objective is to not only prove rate optimality, but also leading constant optimality of our regret bounds. Proving constant optimality is especially challenging in situations where the set of optimal arms is non-unique, but we give careful arguments that overcome this challenge.

Several other extensions  of the multiple-play bandit model  have been studied
in the literature.  UCB algorithms have  been widely used in the combinatorial
semi-bandit setting,  in which at each  time step a  subset of arms has  to be
select among  a given class  of subsets, and  the rewards of  every individual
arms  in the  subset are  observed.   The most  natural  use of  UCBs and  the
``optimism in face of uncertainty principle''  is to choose at every time step
the subset  that would  be the  best if the  unknown means  were equal  to the
corresponding         UCBs.         This         was        studied         by
\cite{Chenetal2013,Kvetonetal2014,Kvetonetal2015c}, who exhibit good empirical
performance  and logarithmic  regret  bounds.  \cite{Combesetal2015b}  further
study  instance-dependent  optimality  for  combinatorial  semi  bandits,  and
propose an algorithm  based on confidence bounds on the  value of each subset,
rather than on confidence bounds on  the arms' means.  Their ESCB algorithm is
proved to be order-optimal for several combinatorial problems. As a by product
of our results, we will see  that in the multiple-play setting, using KL-based
confidence  bounds on  the arms'  means  is sufficient  to achieve  asymptotic
optimality. Another interesting  direction of extension is  the possibility to
have only partial feedback over the  $m$ proposed item. Variants of KL-UCB and
Thompson   Sampling   were   proposed   for   the   Cascading   bandit   model
\citep{Kvetonetal2015,Kvetonetal2015b},        Learning         to        Rank
\citep{Combesetal2015} or the  Position-Based model \citep{Lagreeetal2016}. It
would be interesting  to try to extend  the results presented in  this work to
these partial feedback settings.

\section{Regret Lower Bound} \label{sec:lb}

We first give  in Lemma~\ref{lem:lb} asymptotic lower bounds on  the number of
draws of suboptimal arms, either in high-probability or in expectation, in the
spirit              of              those             obtained              by
\cite{Lai&Robbins1985,Anantharametal1987}. Compared to  these works, the lower
bounds  obtained here  hold  under our  more general  assumptions  on the  arm
distributions,     which     is     reminiscent     of     the     work     of
\cite{Burnetas&Katehakis1996}.

To be  able to state  our regret lower bound,  we now introduce  the following
notation.    We  let   $\KL(\nu,\nu')$   denote   the  KL-divergence   between
distributions $\nu$ and $\nu'$. If $\nu$ and $\nu'$ are uniquely parameterized
by  their  respective  means  $\mu$  and  $\mu'$  as  in  a canonical single  parameter
exponential family (e.g. Bernoulli distributions),  then we abuse notation and
let     $\KL(\mu,\mu')\equiv    \KL(\nu,\nu')$.      For    a     distribution
$\nu\in\cD$ and a real $\mu$, we define
\begin{align}
\Kinf(\nu,\mu)&\equiv \inf\left\{\KL(\nu,\nu') : \nu'\in\cD\textnormal{ and }\mu<E(\nu')\textnormal{ and }\nu\ll\nu'\right\}, \label{eq:Kinfdef}
\end{align}
with the convention that $\Kinf(\nu,\mu)=\infty$ if there does not exist a $\nu\ll\nu'$ with $\mu<E(\nu')$. We will also use the convention that, for finite constants $d_1,d_2$, $d_1/(d_2 + \Kinf(\nu,\mu))=0$ when $\Kinf(\nu,\mu)=\infty$. We make one final assumption, and introduce two disjoint sets $\underline{\cN}$ and $\overline{\cN}$, whose union is $\cN$. The assumption is that, for each arm $a\in\{1,\ldots,K\}$, $\mu_a$ falls below the upper bound of the expected reward parameter space, i.e. $\mu_a < \mu_{+}\equiv \sup\{E(\nu) : \nu\in\cD\}$. We define the sets $\underline{\cN}$ and $\overline{\cN}$ respectively as the subsets of $\cN$ for which optimality is and is not feasible given our parameter space, namely
\begin{align*}
&\underline{\cN}\equiv \left[\cN\cap\left\{a : \cc_a \rho^\star< \mu_{+}\right\}\right]\backslash\{K+1\} \\
&\overline{\cN}\equiv \left[\cN\cap\left\{a : \cc_a \rho^\star\ge \mu_{+}\right\}\right]\backslash\{K+1\}.
\end{align*}
By defining  $\underline{\cN}$ and  $\overline{\cN}$ in  this way,  these sets
agree in the extended and unextended bandit models. The lower bounds presented
in this section will also agree in these two models.

We now define  a uniformly efficient algorithm, that generalizes  the class of
algorithms    considered    in     \cite{Lai&Robbins1985}.     An    algorithm
\texttt{Alg}     is    uniformly     efficient    if,     for    all
$\cV  \in \cD_K$ and $\alpha\in(0,1)$,  $\Reg(T,\cV,\texttt{Alg}) =  o(T^\alpha)$
as $T$ goes to  infinity (from now on, the limits in  $T$ will be for
  $T \to \infty$).  From the  regret
decomposition \eqref{eq:regretDecomp2}, this is equivalent to
\begin{enumerate}
  \item $T-\E_{{\mathcal{V}}}\left[N_{a^\star}(T)\right] = o\left(T^\alpha\right)$ for all arms $a^\star$ such that $\rho_{a^\star}>\rho^\star(\bm\mu)$;
  \item \label{it:cond2} $\E_{\mathcal{V}}[N_a(T)]=o(T^{\alpha})$ for all arms $a$ such that $\rho_a < \rho^\star(\bm\mu)$;
  \item if $\rho^\star(\bm\mu) > \rho,$ $BT - \sum_{a=1}^K c_a\bE_{\cV}[N_a(T)] = o(T^\alpha)$,
\end{enumerate}
where above and throughout we write $\E_{\cV}$ when we wish to emphasize that the expectation is over $\cV$.

\begin{lemma}[Lower bound on suboptimal arm draws] \label{lem:lb}
If an algorithm is uniformly efficient, then, for any arm $a\in(\cM\cup\underline{\cN})\backslash\{K+1\}$ and any $\delta\in(0,1)$ and $\epsilon>0$,
\begin{align}
&\lim_T \Prob\left\{N_a(T)< (1-\delta)\frac{\log T}{\Kinf(\nu_a,\cc_a \rho^{\star}) + \epsilon}\right\}=0. \label{eq:problb} \\
\intertext{One can take $\epsilon=0$ if $a\in\underline{\cN}$. Furthermore, for any suboptimal arm $a\in\underline{\cN}$,}
&\liminf_T \frac{\E[N_a(T)]}{\log T}\ge \frac{1}{\Kinf(\nu_a,\cc_a \rho^{\star})}. \label{eq:explb}
\end{align}
\end{lemma}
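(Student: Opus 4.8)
\emph{Proof plan.} The plan is to prove the high-probability statement \eqref{eq:problb} by a change-of-measure argument in the spirit of Lai and Robbins, and then to deduce the expectation bound \eqref{eq:explb} as a short corollary. Fix an arm $a\in(\cM\cup\underline{\cN})\backslash\{K+1\}$ together with $\delta\in(0,1)$ and $\epsilon>0$. Since $c_a\rho^\star<\mu_{+}$ for such an arm, the infimum defining $\Kinf(\nu_a,c_a\rho^\star)$ is taken over a nonempty set, so I can choose an alternative marginal $\nu_a'\in\cD$ with $\nu_a\ll\nu_a'$, $E(\nu_a')>c_a\rho^\star$, and $d:=\KL(\nu_a,\nu_a')\le \Kinf(\nu_a,c_a\rho^\star)+\epsilon<\infty$; moreover I take $E(\nu_a')$ close enough to $c_a\rho^\star$ that $\rho_a':=E(\nu_a')/c_a$ exceeds $\rho^\star$ by less than $\min_{a'\in\cL}(\rho_{a'}-\rho^\star)$. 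Using variation-independence I form the model $\cV'\in\cD_K$ that agrees with $\cV$ except on arm $a$, whose marginal is replaced by $\nu_a'$. A direct computation of $\rho^\star(\bm\mu')$ reveals a clean dichotomy: either arm $a$ is optimal away from the margin of $\cV'$ (exactly when $\sum_{a'\in\cL}c_{a'}+c_a<B$), or arm $a$ is the \emph{unique} arm on the margin of $\cV'$ (when $\sum_{a'\in\cL}c_{a'}+c_a\ge B$), in which case every other arm is optimal or suboptimal \emph{away from} the margin, with $\cL(\cV')=\cL$.

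The change of measure is organized around the log-likelihood ratio $L_n=\sum_{i=1}^n\log\frac{d\nu_a}{d\nu_a'}(X_{a,i})$ of the first $n$ observations of arm $a$. Because $\cV$ and $\cV'$ differ only through arm $a$ and the feedback is semi-bandit, the trajectory likelihood ratio restricted to $\cF(T)$ is $\exp(L_{N_a(T)})$, so that $\Prob_\cV(A)=\E_{\cV'}[\ind_A\exp(L_{N_a(T)})]$ for every $A\in\cF(T)$ (this uses $\nu_a\ll\nu_a'$, which is exactly the constraint in $\Kinf$). Writing $m_T=(1-\delta)\log T/d$ and $C_T=\{N_a(T)<m_T\}$, I split $\Prob_\cV(C_T)$ according to whether $L_{N_a(T)}\le(1-\delta/2)\log T$. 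On the first event the likelihood ratio is at most $T^{1-\delta/2}$, bounding that part by $T^{1-\delta/2}\Prob_{\cV'}(C_T)$; on the second event, since $N_a(T)\le m_T$ there, it is bounded by $\Prob_\cV(\max_{n\le m_T}L_n>(1-\delta/2)\log T)$, which tends to $0$ by the maximal form of the strong law ($L_n/n\to d$ a.s. under $\cV$, hence $\max_{n\le m}L_n/m\to d$, while $(1+\gamma)(1-\delta)\le 1-\delta/2$ for small $\gamma$). It thus remains to show that $\Prob_{\cV'}(C_T)$ decays faster than $T^{\delta/2-1}$.

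This last point is where the dichotomy is used and is the crux. When arm $a$ is optimal away from the margin of $\cV'$, uniform efficiency gives $T-\E_{\cV'}[N_a(T)]=o(T^\alpha)$, and Markov's inequality on $T-N_a(T)$ already renders $\Prob_{\cV'}(C_T)$ polynomially small. When arm $a$ is the unique margin arm, uniform efficiency no longer controls $N_a(T)$ directly, and I instead exploit budget saturation: in the extended bandit model the predictable cost $\sum_{a'}c_{a'}\sum_{t}q_{a'}(t-1)$ equals exactly $BT$, while every $a'\ne a$ is optimal or suboptimal away from the margin, so condition~\ref{it:cond2} of uniform efficiency forces the expected---hence, by Markov, the high-probability---number of draws of the suboptimal ones to be $o(T^\alpha)$. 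Subtracting, the predictable count $\sum_t q_a(t-1)$ is at least $\kappa T-o(T^\beta)$ with $\kappa=(B-\sum_{a'\in\cL}c_{a'})/c_a\in(0,1]$, and an Azuma/Freedman inequality transfers this lower bound from the compensator to $N_a(T)$ with exponentially small failure probability. In both cases $\Prob_{\cV'}(C_T)\le\Prob_{\cV'}(N_a(T)<\kappa T/2)=o(T^{\delta/2-1})$ once the free exponents are chosen appropriately, which establishes \eqref{eq:problb} for the threshold $\log T/d$; replacing $d$ by $\Kinf(\nu_a,c_a\rho^\star)+\epsilon$ follows by monotonicity of the event in its denominator, and the case $\epsilon=0$ for $a\in\underline{\cN}$ is recovered by letting the alternative approach the infimum while slightly shrinking $\delta$. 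I expect this unique-margin case to be the main obstacle: it is precisely the thick-margin phenomenon that lies outside the classical Lai--Robbins template and forces the budget-saturation plus martingale-concentration detour.

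Finally, the expectation bound \eqref{eq:explb} is a short consequence of \eqref{eq:problb} with $\epsilon=0$. For $a\in\underline{\cN}$ and any $\delta\in(0,1)$, the elementary inequality $\E[N_a(T)]\ge t\,\Prob(N_a(T)\ge t)$ with $t=(1-\delta)\log T/\Kinf(\nu_a,c_a\rho^\star)$ gives $\liminf_T\E[N_a(T)]/\log T\ge(1-\delta)/\Kinf(\nu_a,c_a\rho^\star)$, because the probability tends to $1$; letting $\delta\downarrow 0$ yields the claim.
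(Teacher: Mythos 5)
Your proposal follows the same architecture as the paper's proof: the Lai--Robbins change of measure through the log-likelihood ratio of arm $a$'s own observations, the split according to whether $L_{N_a(T)}\le(1-\delta/2)\log T$, the strong law of large numbers to kill the large-likelihood piece, uniform efficiency under the alternative model $\cV'$ to make $\Prob_{\cV'}\{N_a(T)<m_T\}$ polynomially small, and the same two closing reductions (monotonicity in the denominator, the shrink-$\delta$ trick for $\epsilon=0$) plus the one-line Markov deduction of \eqref{eq:explb}. Your handling of the margin case differs in a benign way: you lower-bound the compensator $\sum_t q_a(t-1)$ using exact budget saturation in the extended model and transfer to $N_a(T)$ by Azuma, whereas the paper works directly with expected counts (via the residual budget $g_a$) and Markov; that variant is sound.

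The genuine gap is in your choice of the alternative $\nu_a'$. You require simultaneously $\KL(\nu_a,\nu_a')\le\Kinf(\nu_a,\cc_a\rho^\star)+\epsilon$ and $E(\nu_a')$ so close to $\cc_a\rho^\star$ that $\rho_a'-\rho^\star<\min_{a'\in\cL}(\rho_{a'}-\rho^\star)$. Nothing in the definition \eqref{eq:Kinfdef} guarantees both can hold: $\cD$ is an arbitrary, possibly non-convex class, and the near-minimizers of the infimum may all have means far above $\cc_a\rho^\star$. For instance, $\cD$ could contain $\nu_a$, distributions whose means sit just above $\cc_a\rho^\star$ but are at huge KL distance from $\nu_a$, and a distribution of much larger mean at small KL distance; then every $\nu'$ whose mean lies in your window has $\KL(\nu_a,\nu')\gg\Kinf(\nu_a,\cc_a\rho^\star)+\epsilon$. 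This matters because your entire case analysis rests on the resulting ``clean dichotomy'': without the closeness constraint, $\rho_a'$ may tie with or exceed the ratios of arms in $\cL$, so the margin of $\cV'$ need not be $\{a\}$ alone and $\cL(\cV')$ need not equal $\cL$; and if another arm shares the new margin, uniform efficiency under $\cV'$ says nothing about how the residual budget is split between the two margin arms, so your subtraction argument no longer lower-bounds $\sum_t q_a(t-1)$ --- a uniformly efficient algorithm could lavish all the slack on the other margin arm and starve arm $a$. You also cannot simply drop the closeness requirement while keeping your dichotomy, since the final monotonicity step needs $d\le\Kinf(\nu_a,\cc_a\rho^\star)+\epsilon$. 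The paper avoids the issue by formulating the second case as: $\rho_a'$ equals the new threshold and $g_a\equiv B-\sum_{\tilde a\ne a:\rho_{\tilde a}\ge\tilde\rho^\star}\cc_{\tilde a}>0$, where $\tilde\rho^\star$ is the threshold under $\cV'$. The arms counted in $g_a$ --- including any that tie at the new threshold --- all lie in $\cL$, whose total cost is strictly below $B$, and they are handled pessimistically by charging them $T$ draws each; conditions 2 and 3 of uniform efficiency are invoked only for arms strictly below the new threshold and for budget saturation. Replacing your claim that ``every other arm is away from the margin'' with exactly this $g_a$ bookkeeping repairs your argument, and your compensator-plus-Azuma route then goes through.
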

We defer the proof of this result to Appendix~\ref{app:lbproof}. We note that, while \eqref{eq:explb}  could also  easily be
obtained  using   the  recent   change-of-distribution  tools   introduced  by
\cite{Garivieretal2016}, we  need to  go back  to Lai  and Robbins'
technique to  prove the high-probability result  \eqref{eq:problb}, which will
be crucial in the  sequel.  Indeed, we will use it to  prove optimal regret of
our algorithms: in  essence we need to ensure that  we have enough information
about arms in  $\cM\cup\cN$ to ensure that  we pull the optimal  arms in $\cL$
sufficiently often.

We now present a corollary to Lemma~\ref{lem:lb} which provides a regret lower bound, as well as sufficient conditions for an algorithm to asymptotically match it. As already noted by \cite{Komiyamaetal2015} in the Bernoulli case for the bandit with multiple-play problems, an algorithm achieving the asymptotic lower bound \eqref{eq:explb} on the expected number of draws of arms in $\underline{\cN}$ does not necessarily achieve optimal regret, unlike in classic bandit problems. Thus, we emphasize that the upcoming condition \eqref{eq:subopt} alone is not sufficient to prove asymptotic optimality. The conditions of this proof can be easily obtained from the regret decomposition \eqref{eq:regretDecomp2}, and so the proof is omitted.

\begin{theorem}[Regret lower bound] \label{thm:reglb}
If an algorithm \texttt{Alg} is uniformly efficient, then
\begin{align}
\liminf_T \frac{\Reg(T,\cV,\texttt{Alg})}{\log T}&\ge \sum_{a\in\underline{\cN}} \frac{\cc_a(\rho^{\star}-\rho_a)}{\Kinf(\nu_a,\cc_a\rho^{\star})}. \label{eq:reglb}
\end{align}
Moreover, any algorithm \texttt{Alg} satisfying
\begin{align}
\mbox{for arms $a\in \underline{\cN}$: }\ \ &\E_\cV[N_a(T)]= \frac{\log T}{\Kinf(\nu_a,\cc_a \rho^{\star})} + o(\log T), \label{eq:subopt} \\
\mbox{for arms $a\in \overline{\cN}$}\ \ &\E_\cV[N_a(T)]= o(\log T), \label{eq:suboptindifference} \\
\mbox{for arms $a^\star\in \cL$: }\ \ &\E_\cV[N_{a^\star}(t)]= T-o(\log T), \label{eq:nonmargin} 
\end{align}
and, if $\rho^\star(\bm\mu) > \rho$, 
\begin{equation}
BT - \sum_{a=1}^K c_a\bE_\cV[N_a(T)] = o(\log(T)), \label{eq:exhaustbudget}\end{equation}
is asymptotically optimal, in the sense that it satisfies 
\begin{align}
\limsup_T \frac{\Reg(T,\cV,\texttt{Alg})}{\log T}\leq \sum_{a\in\underline{\cN}} \frac{\cc_a(\rho^{\star}-\rho_a)}{\Kinf(\nu_a,\cc_a\rho^{\star})}. \label{eq:regub}
\end{align}
\end{theorem}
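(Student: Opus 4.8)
Both inequalities follow directly from the regret decomposition of Proposition~\ref{prop:RegretDec} combined with the asymptotic lower bound \eqref{eq:explb} of Lemma~\ref{lem:lb}. The guiding observation is that \eqref{eq:regretDecomp2} expresses $\Reg(T,\cV,\texttt{Alg})$ as a sum of three nonnegative terms, so one may bound it from below by retaining only a convenient subset of those terms, and from above by controlling each of them.

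For the lower bound \eqref{eq:reglb}, I would discard the first term (the sum over $\cL$) and the budget term, both nonnegative, keeping only $\sum_{a\in\cN}\cc_a(\rho^\star-\rho_a)\,\E_\cV[N_a(T)]$; dropping the remaining nonnegative contributions from $\overline{\cN}$ leaves $\sum_{a\in\underline{\cN}}\cc_a(\rho^\star-\rho_a)\,\E_\cV[N_a(T)]$. Dividing by $\log T$, using that the $\liminf$ of a finite sum of nonnegative sequences is at least the sum of the $\liminf$s, and applying the expectation lower bound \eqref{eq:explb} arm-by-arm (which is stated precisely for arms in $\underline{\cN}$) yields $\liminf_T \Reg(T,\cV,\texttt{Alg})/\log T \ge \sum_{a\in\underline{\cN}} \cc_a(\rho^\star-\rho_a)/\Kinf(\nu_a,\cc_a\rho^\star)$, which is \eqref{eq:reglb}.

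For the upper bound \eqref{eq:regub}, I would substitute the hypotheses \eqref{eq:subopt}--\eqref{eq:exhaustbudget} directly into \eqref{eq:regretDecomp2}. By \eqref{eq:nonmargin} the first term is $o(\log T)$; in the second term, the arms of $\underline{\cN}$ contribute $\big(\sum_{a\in\underline{\cN}}\cc_a(\rho^\star-\rho_a)/\Kinf(\nu_a,\cc_a\rho^\star)\big)\log T + o(\log T)$ via \eqref{eq:subopt}, while those of $\overline{\cN}$ contribute only $o(\log T)$ via \eqref{eq:suboptindifference}; the budget term is $o(\log T)$ when $\rho^\star(\bm\mu)>\rho$ by \eqref{eq:exhaustbudget} and vanishes when $\rho^\star(\bm\mu)=\rho$. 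Collecting these, dividing by $\log T$, and taking $\limsup$ gives \eqref{eq:regub}.

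The computation is routine, which is why the paper omits it; the only points demanding care are bookkeeping rather than analysis. First, one must honor the convention that $\cc_a(\rho^\star-\rho_a)/\Kinf(\nu_a,\cc_a\rho^\star)=0$ whenever $\Kinf=\infty$, which is exactly what keeps the $\overline{\cN}$ arms out of the stated bound, consistent with their being dropped in the lower bound and merely controlled at rate $o(\log T)$ in the upper bound. Second, one must read $\underline{\cN}$, $\overline{\cN}$, $\cL$, and the decomposition in the same (extended or unextended) bandit model; since these sets exclude the pseudo-arm $K+1$ and coincide across the two models, no inconsistency arises, and the pseudo-arm's draws are absorbed into the budget term. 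I expect no genuine obstacle beyond this bookkeeping.
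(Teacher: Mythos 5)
Your proposal is correct and follows exactly the route the paper intends: the paper omits this proof, remarking only that it "can be easily obtained from the regret decomposition \eqref{eq:regretDecomp2}," and your argument fills in precisely that routine computation, using \eqref{eq:explb} of Lemma~\ref{lem:lb} arm-by-arm (with superadditivity of $\liminf$ over nonnegative terms) for the lower bound and direct substitution of \eqref{eq:subopt}--\eqref{eq:exhaustbudget} for the upper bound. Your bookkeeping remarks—the $\Kinf=\infty$ convention excluding $\overline{\cN}$ from the constant, the vanishing budget term when $\rho^\star=\rho$, and the agreement of $\underline{\cN}$, $\overline{\cN}$, $\cL$ across the extended and unextended models—are exactly the points that make the omitted proof safe.
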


\section{Algorithms} \label{sec:algs}

Algorithms rely on estimates of the arm distributions and their means, that we formally introduce below. For each arm $a$ and natural number $n$, define $\tau_{a,n}=\min\{t\ge 1 : N_a(t)=n\}$ to be the (stopping) time at which the $n^{\textnormal{th}}$ draw of arm $a$ occurs.
Let $X_{a,n}\equiv Y_a(\tau_{a,n})$ denote the $n^{\textnormal{th}}$ draw from $\nu_a$. One can show that $\{X_{a,n}\}_{n=1}^\infty$ is an i.i.d. sequence of draws from $\nu_a$ for each $a$, though we note that our variation independence assumption is too weak to ensure that these sequences are independent for two arms $a\not=a'$ (this is not problematic -- most of our arguments end up focusing on arm-specific sequences $\{X_{a,n}\}_{n=1}^\infty$)\footnote{It is \textit{a priori} possible that $\tau_{a,n}=\infty$ for all $n$ large enough (though, as we showed in Section~\ref{sec:lb}, this event will occur with probability zero for any reasonable algorithm). To deal with this case, let $X_{a,n}\equiv Y_a(\tau_{a,n})$ denote the $n^{\textnormal{th}}$ draws from $\nu_a$ for all $\tau_{a,n}<\infty$ and let $\{X_{a,n}\}_{n : \tau_{a,n}=\infty}$ denote an i.i.d. sequence independent of $\{X_{a,n}\}_{n : \tau_{a,n}<\infty}$.}. We denote the empirical distribution function of observations drawn from arm $a$ by any time $T$ by
\begin{align*}
\hat{\nu}_a(T)&\equiv \frac{1}{N_a(T)}\sum_{t=1}^T \delta_{Y_a(t)} \Ind\{a\in\A(t)\} = \frac{1}{N_a(T)}\sum_{n=1}^{N_a(T)} \delta_{X_{a,n}}.
\end{align*}
We  similarly  define  $\hat{\nu}_{a,n}$  to  be  the  empirical  distribution
function of the observations
$X_{a,1}$, $\ldots$,  $X_{a,n}$.  Thus, $\hat{\nu}_a(t)=\hat{\nu}_{a,N_a(t)}$.
We further  define $\hat{\mu}_a(t)$ to  be the empirical mean  of observations
drawn from arm $a$ by time $t$ and $\hat{\mu}_{a,N_a(t)}=\hat{\mu}_a(t)$.

\subsection{KL-UCB} \label{sec:kl}
\afterpage{\begin{algorithm}[ht] \label{alg:kl}
\caption{KL-UCB}

\newcounter{savefootnote}
\setcounter{savefootnote}{\value{footnote}}

\begin{algorithmic}[0]
  \State       \textit{Parameters}       A       non-decreasing       function
  $f  : \mathbb{N}\rightarrow\mathbb{R}$  and an  operator $\Pi_{\cD}$
  mapping each empirical  distribution functions $\hat{\nu}_a(t)$ to
  an element of the  model $\cD$.  \State \textit{Initialization} Pull
  each      arm       of      $\{1,\ldots,K\}$      once.\protect\footnotemark
  \For{$t=K,K+1,\ldots,  T-1$}  \State  For $a=1,\ldots,K$,  let  $U_a(t)$  be
  defined        as       in        \eqref{eq:Uadef}.        \State        Let
  $\hat{\rho}^\star(t)\equiv  \rho^\star\left(U_a(t)  :  a=1,\ldots,K\right)$.
  \State              For               $a\in\{1,\ldots,K\}$,              let
  $q_a(t)         =         \Ind\{U_a(t)>\cc_a         \hat{\rho}^\star(t)\}$.
  \If{$\widehat{\cM}(t)\equiv\{a  :   U_a(t)=  \cc_a\hat{\rho}^\star(t)\}$  is
    non-empty}        \If{$\hat{\rho}^\star(t)>\rho$}        \State        For
      $a\in\widehat{\cM}(t)$,    let  $q_a(t)=\big[B-\sum_{a      :
        U_a(t)>\cc_a\hat{\rho}^\star(t)}   \cc_a\big]/\sum_{a\in\widehat{\cM}(t)}\cc_a$.    \Else   \hspace{0.5em}   Let
  $q_a(t)=0$ for all $a\in\widehat{\cM}(t)$\footnotemark.
    \EndIf
  \EndIf
  \State Draw $\A(t+1)$ from any distribution $Q(t)$ with marginal probabilities $q_a(t)$\footnotemark.
  \State Draw arms in $\A(t+1)$ and observe $Y_a(t+1)$, $a\in\A(t+1)$.
\EndFor
\end{algorithmic} 
\end{algorithm} \footnotetext[\the\numexpr\value{savefootnote}+1\relax]{If $\cc_a\le B$ for all $a$, then it is always possible to do this with $K$ draws and respect the budget. In particular, at time $t=a$, draw arm $a$ with probability one. If, for some $a$, $\cc_a> B$, then this strategy will violate the budget constraint, though a stochastic strategy that draws these arms $a$ with probability $\cc_a/B$ until the (random) stopping time at which the first draw occurs would respect the budget. Whether we use this strategy or just pull each arm once has essentially no effect on our analysis, and so for simplicity we assume the agent draws each arm once to initialize the algorithm.}
\footnotetext[\the\numexpr\value{savefootnote}+2\relax]{\label{foot:extraif}While presumably not necessary, this restriction aids our arguments in Section~\ref{sec:suboptrare}, and seems very mild given that at $\rho$ the agent is indifferent to whether she pulls an arm or a pseudo-arm.}\footnotetext[\the\numexpr\value{savefootnote}+3\relax]{An easy choice is to make $Q(t)$ a product measure with marginal probabilities $q_1(t),\ldots,q_{K+1}(t)$, but  this choice is not necessary, and more careful choices may reduce the probability of overspending the budget at any given time point.}}

At time $t$, UCB algorithms leverage  high probability upper bound $U_a(t)$ on
$\mu_a$ for each $a$. The methods  used to build these confidence bounds vary,
as does the way the algorithm uses these confidence bounds. In our setting, we
derive   these  bounds   using   the   same  technique   as   for  KL-UCB   in
\cite{Cappeetal2013}.   At the beginning of round $(t+1)$,  the  KL-UCB  algorithm computes  an
optimistic  oracle  strategy  $(q_a(t))_{a=1,\dots,K}$,   that  is  an  oracle
strategy  assuming the  unknown mean  of each  arm $a$  is equal  to its  best
possible value, $U_a(t)$.  From Proposition~\ref{prop:Oracle}, this optimistic
oracle                                depends                               on
$\hat{\rho}^\star(t) = \rho^\star\left(U_a(t) : a=1,\dots,K\right)$,
where     $\rho^\star(\bm\mu)$     is      the     function     defined     in
Proposition~\ref{prop:Oracle}.  Then each arm  is included in $\hat{\cA}(t+1)$
independently with  probability $q_a(t)$.  Due  to the structure of  an oracle
strategy,  KL-UCB  can  be  rephrased  as successively  drawing  the  arms  by
decreasing order of the ratio $U_a(t)/c_a$  until the point that the budget is
exhausted, with some probability to include the arms on the margin.  We choose
to  keep  the name  KL-UCB  for  this  straightforward generalization  of  the
original KL-UCB algorithm.

The  definition of  the upper  bound $U_a(t)$  is closely  related to  that of
$\Kinf$   given  in   \eqref{eq:Kinfdef}.    Let   $\Pi_{\cD}$  be   a
problem-specific    operator   mapping    each   empirical
distribution   function  $\hat{\nu}_a(t)$   to   an  element   of  the   model
$\cD$.    Furthermore,    let
$f : \mathbb{N}\rightarrow\mathbb{R}$ be a non-decreasing function, where this
function is  usually chosen  so that  $f(t)\approx \log t$.   The UCB  is then
defined as
\begin{align}
U_a(t)&\equiv \sup\left\{E(\nu) :  \nu\in\cD\textnormal{ and }\KL\left(\Pi_{\cD}\left(\hat{\nu}_a(t)\right),\nu\right)\le \frac{f(t)}{N_a(t)}\right\}\textnormal{, $a=1,\ldots,K$}. \label{eq:Uadef}
\end{align}
As we will see, the closed form expression for $U_a(t)$ can be made slightly more explicit for exponential family models, though the expression still has the same general flavor. If a number $\mu$ satisfies $\mu\ge U_a(t)$, then this implies that, for every $\nu\in\cD$ for which $E(\nu)>\mu$, $\KL\left(\Pi_{\cD}(\hat{\nu}_a(t)),\nu\right)>\frac{f(t)}{N_a(t)}$. Consequently, $\Kinf(\Pi_{\cD}(\hat{\nu}_a(t)),\mu)\ge \frac{f(t)}{N_a(t)}$.

We now describe two settings in which the algorithm that we have described achieves the optimal asymptotic regret bound. These two settings and the presentation thereof follows \cite{Cappeetal2013}. The first family of distributions we consider for $\cD$ is a canonical one-dimensional exponential family $\mathcal{E}$. For some dominating measure $\lambda$ (not necessarily Lebesgue), open set $H\subseteq\mathbb{R}$, and twice-differentiable strictly convex function $b : H\rightarrow\mathbb{R}$, $\mathcal{E}$ is a set of distributions $\nu_\eta$ such that
\begin{align*}
\frac{d\nu_\eta}{d\lambda}(x)&= \exp\left[x\eta-b(\eta)\right].
\end{align*}
We assume that the  open set $H$ is the natural parameter  space, i.e. the set
of all $\eta\in\mathbb{R}$ such that $\int \exp(x\eta) d\lambda(x)<\infty$. We
define    the     corresponding    (open)     set    of     expectations    by
$I\equiv \{E(\nu_\eta) : \eta\in  H\}\equiv (\mu_{-},\mu_{+})$ and its closure
by   $\bar{I}=[\mu_{-},\mu_{+}]$.   We   have  omitted   the   dependence   of
$\mathcal{E}$ on $\lambda$ and $b$ in the notation. It is easily verified that
$\Kinf(\mu_a,\cc_a \rho^\star)= \KL(\nu_a,\cc_a \rho^\star)$.

For    the   moment    suppose    that   $\hat{\nu}_a(t)$    is   such    that
$\hat{\mu}_a(t)\in  I$. In  this case  we let  $\Pi_{\cD}$ denote  the
maximum            likelihood            operator           so            that
$\Pi_{\cD}\left(\hat{\nu}_a(t)\right)$ returns the unique distribution
in      $\cD$      indexed      by     the      $\eta$      satisfying
$b'(\eta)=\hat{\mu}_a(t)$. Thus, in this  setting where $\hat{\mu}_a(t)\in I$,
the UCB $U_a(t)$ then takes the form of the expression in \eqref{eq:Uadef}.

More generally, we must deal with the case that $\hat{\mu}_a(t)$ equals $\mu_{+}$ or $\mu_{-}$. For $\mu\in I$, define by convention $\KL(\mu_{-},\mu) = \lim_{\mu'\rightarrow\mu_{-}} \KL(\mu_{-},\mu)$, $\KL(\mu_{+},\mu) = \lim_{\mu'\rightarrow\mu_{+}} \KL(\mu',\mu)$, and analogously for $\KL(\mu,\mu_{-})$ and $\KL(\mu,\mu_{+})$. Finally, define $\KL(\mu_{-},\mu_{-})$ and $\KL(\mu_{+},\mu_{+})$ to be zero. This then gives the following general expression for $U_a(t)$ that we use to replace \eqref{eq:Uadef} in the KL-UCB Algorithm:
\begin{align}
U_a(t)&\equiv \sup\left\{\mu\in\bar{I} :  \KL\left(\hat{\mu}_a(t),\mu\right)\le \frac{f(t)}{N_a(t)}\right\}\textnormal{, $a=1,\ldots,K$}\label{indexKLUCB}.
\end{align}
Note that  this definition of $U_a(t)$  does not explicitly include  a mapping
$\Pi_{\cD}$ mapping  any empirical distribution  function to
an element  of the model $\cD$.  Thus
we have avoided any problems that could  arise in defining such a mapping when
$\hat{\mu}_a(t)$ falls on the boundary of~$\bar{I}$. The above optimization problem can be solved by noting that $\mu\mapsto \KL\left(\hat{\mu}_a(t),\mu\right)$ is convex, and so one can first identify the $\mu_0$ minimizing this function, and then perform a root-finding method for monotone functions to (approximately) identify the largest $\mu\ge \mu_0$ at which $\KL\left(\hat{\mu}_a(t),\mu\right)- \frac{f(t)}{N_a(t)}=0$.

The KL-UCB variant that we have presented achieves the asymptotic regret bound in the setting where $\cD=\mathcal{E}$.
\begin{theorem}[Optimality      for      single     parameter      exponential
  families]  \label{thm:expfam}  Suppose  that  $\cD  =  \mathcal{E}$.
  Further let $f(t)=\log  t + 3\log\log t$ for $t\ge  3$ and $f(1)=f(2)=f(3)$.
  This      variant      of      KL-UCB      satisfies      \eqref{eq:subopt},
  \eqref{eq:suboptindifference},            \eqref{eq:nonmargin}           and
  \eqref{eq:exhaustbudget}. Thus, KL-UCB achieves  the asymptotic regret lower
  bound \eqref{eq:reglb} for uniformly efficient algorithms.
\end{theorem}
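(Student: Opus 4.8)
My plan is to apply Theorem~\ref{thm:reglb}: since that result shows conditions \eqref{eq:subopt}--\eqref{eq:exhaustbudget} are sufficient for asymptotic optimality, it is enough to verify these four draw-count statements for the stated KL-UCB variant. The entire argument rests on two concentration facts for the index $U_a(t)$ of \eqref{indexKLUCB}, established in the style of \cite{Cappeetal2013} and tuned to $f(t)=\log t+3\log\log t$: (i) an \emph{upper-deviation} bound showing that, for any fixed level $\mu>\mu_a$, the expected number of rounds up to $T$ at which arm $a$ is drawn with $U_a(t)>\mu$ is at most $\frac{\log T}{\Kinf(\nu_a,\mu)}(1+o(1))$; and (ii) a \emph{lower-deviation} bound $\sum_{t\le T}\Prob(U_a(t)<\mu_a)=o(\log T)$, so an arm's index rarely falls below its own mean. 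The $3\log\log t$ term in $f$ is exactly what makes (ii) sub-logarithmic while sharpening (i) to the correct constant.

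The crux is to prove that the estimated water level $\hat\rho^\star(t)$ agrees with $\rho^\star$ on all but $o(\log T)$ rounds; conditions \eqref{eq:nonmargin}, \eqref{eq:subopt} and \eqref{eq:suboptindifference} then follow quickly. Fix an optimal arm $a^\star\in\cL$, for which $\rho_{a^\star}>\rho^\star$ with a strict gap. By (ii), off an $o(\log T)$ set of rounds we have $U_{a^\star}(t)/c_{a^\star}\ge \rho_{a^\star}-\gamma=:\rho'>\rho^\star$; on the complement, $a^\star$ can be excluded only if some competitor $b\in\cM\cup\cN$ achieves $U_b(t)/c_b\ge\rho'$, which in turn forces $N_b(t)\lesssim \log t/\Kinf(\nu_b,c_b\rho')$.

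This is where the main obstacle lies, and where I would deploy the paper's key idea. A naive union bound over competitors $b$ only yields an $O(\log T)$ count of such rounds, which is too weak for the required $o(\log T)$. Instead I invoke the high-probability lower bound \eqref{eq:problb}, which certifies that, with probability tending to one, $N_b(t)\gtrsim \log t/(\Kinf(\nu_b,c_b\rho^\star)+\epsilon)$ for every $b\in(\cM\cup\underline\cN)\backslash\{K+1\}$. Because $\mu_b\le c_b\rho^\star<c_b\rho'$, strict monotonicity (and positivity) of $\mu\mapsto\Kinf(\nu_b,\mu)$ gives $\Kinf(\nu_b,c_b\rho')>\Kinf(\nu_b,c_b\rho^\star)$; choosing $\gamma$ and $\epsilon$ small enough that $\Kinf(\nu_b,c_b\rho^\star)+\epsilon<\Kinf(\nu_b,c_b\rho')$ makes the certified lower bound on $N_b(t)$ eventually exceed the inflation threshold, so the two are incompatible for all large $t$. (For a margin arm $\Kinf(\nu_b,c_b\rho^\star)=0$, which is precisely why \eqref{eq:problb} must carry the slack $\epsilon$ there; competitors in $\overline\cN$ and the pseudo-arm cannot inflate past $\mu_+\le c_b\rho^\star$ and are dismissed directly.) Thus the inflation---and hence exclusion---rounds are confined to a sub-logarithmic transition set, giving $T-\E[N_{a^\star}(T)]=o(\log T)$, i.e.\ \eqref{eq:nonmargin}. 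Granting $\hat\rho^\star(t)=\rho^\star$ off an $o(\log T)$ set, a suboptimal $b$ is selected on a ``good'' round only when $U_b(t)>c_b\rho^\star$; by (i) these number at most $\frac{\log T}{\Kinf(\nu_b,c_b\rho^\star)}(1+o(1))$, and combined with the lower bound \eqref{eq:explb} this yields the matched count \eqref{eq:subopt}, while for $b\in\overline\cN$ the bound $U_b(t)\le\mu_+\le c_b\rho^\star$ holds deterministically, giving \eqref{eq:suboptindifference}.

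Finally, for \eqref{eq:exhaustbudget} (relevant only when $\rho^\star>\rho$), on each good round $\hat\rho^\star(t)=\rho^\star>\rho$ triggers the margin-loading step of the algorithm, which sets the inclusion probabilities so that $\sum_a c_a q_a(t)=B$ exactly; unspent budget can therefore occur only on the $o(\log T)$ bad rounds and is at most $B$ per round, so $BT-\sum_{a}c_a\E[N_a(T)]=o(\log T)$. With all four conditions verified, Theorem~\ref{thm:reglb} yields the claimed asymptotic optimality of KL-UCB. I expect the balance of difficulty to sit entirely in the second and third paragraphs: converting the lower bound \eqref{eq:problb} on suboptimal draws into an upper bound on how often optimal arms are displaced, through the strict $\Kinf$-gap between $\rho^\star$ and $\rho'$.
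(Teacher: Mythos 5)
Your skeleton is the paper's: the same two concentration facts (the paper's Lemmas~\ref{lem:term1} and \ref{lem:term2astar}), the same reduction to the four conditions of Theorem~\ref{thm:reglb}, and the same central idea of leveraging the high-probability lower bound \eqref{eq:problb} on draws of arms in $\cM\cup\underline{\cN}$ to show that optimal arms in $\cL$ are rarely displaced. However, there are two genuine gaps in how you deploy that idea.

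First, a circularity. Lemma~\ref{lem:lb}, hence \eqref{eq:problb}, holds only for \emph{uniformly efficient} algorithms, so before invoking it for KL-UCB you must prove that KL-UCB is uniformly efficient. The paper does this (Lemma~\ref{lem:cons}) by first establishing \eqref{eq:subopt} and \eqref{eq:suboptindifference} through pure concentration, independently of \eqref{eq:problb}, and combining them with a crude $O(\log T)$ displacement bound (Lemma~\ref{lem:TermA}); only then does \eqref{eq:problb} become available. Your plan inverts this order: you derive \eqref{eq:subopt} from the displacement analysis (``granting $\hat{\rho}^\star(t)=\rho^\star$ off an $o(\log T)$ set''), and the displacement analysis rests on \eqref{eq:problb}, whose hypothesis is never verified. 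The fix lies in your own toolkit --- the ``naive union bound'' you discard as too weak is precisely what establishes uniform efficiency --- but the dependency order must be reversed so that \eqref{eq:subopt} and \eqref{eq:suboptindifference} come first, by concentration alone.

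Second, and more seriously, the step from ``the certified lower bound on $N_b(t)$ and the inflation threshold are incompatible for all large $t$'' to ``the exclusion rounds form a sub-logarithmic set'' is a non sequitur. \eqref{eq:problb} says that for each fixed $t$ the probability $\Prob\{N_b(t)<(1-\delta)\log t/(\Kinf(\nu_b,\cc_b\rho^\star)+\epsilon)\}$ is $o(1)$; the expected number of exclusion rounds is a sum of per-round probabilities over $t\le T$, and summing $o(1)$ terms over $T$ rounds gives only $o(T)$, not $o(\log T)$. This is exactly why the paper introduces the geometric grid $T^{(g)}=\lfloor T^{(1-\delta)^g}\rfloor$ with the telescoping Term~A/Term~B decomposition \eqref{eq:margindecomp}: inside a block, if $N_b(T^{(g)}-1)$ already exceeds the inflation threshold $b_b'(T,g)$ then no low-count inflation round can occur in that block at all, so the block's contribution is bounded by $b_b'(T,g)\,\Prob\{N_b(T^{(g)}-1)\le b_b'(T,g)\}=O(\log T)\cdot o(1)$ --- the failure probability is paid once per block, at a single time point, not once per round. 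Moreover, a single split only reduces the constant (the early rounds still contribute on the order of $(1-\delta)\log T$), so one must iterate over $G$ blocks, with $\delta$ chosen as in \eqref{eq:deltalogic} to survive the compounding of exponents, and take $G\rightarrow\infty$ \emph{after} $T\rightarrow\infty$ as in \eqref{eq:fixedG}. Without this (or an equivalent) device, your argument stalls at $O(\log T)$ with an improved constant rather than reaching the $o(\log T)$ required by \eqref{eq:nonmargin}.
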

Another interesting family of distributions for $\cD$ is a set $\mathcal{B}$ of distributions on $[0,1]$ with finite support. If the support of $\cD$ is instead bounded in some $[-M,M]$, then the observations can be rescaled to $[0,1]$ when selecting which arm to pull using the linear transformation $x\mapsto (x + M)/(2M)$.

If $\cD$ is equal to $\mathcal{B}$, then \cite{Cappeetal2013} observe that \eqref{eq:Uadef} rewrites as
\begin{align*}
U_a(t)&= \sup\left\{E(\nu) : \Supp[\nu]\subseteq \Supp\left[\hat{\nu}_a(t)\right]\cup\{1\}\textnormal{ and }\KL\left(\hat{\nu}_a(t),\nu\right)\le\frac{f(t)}{N_a(t)}\right\}
\end{align*}
where, for a measure $\nu'$, we use $\Supp[\nu']$ to denote the support of $\nu'$. They furthermore observe that this expression admits an explicit solution via the method of Lagrange multipliers.

\begin{theorem}[Optimality for finitely supported distributions] \label{thm:finsup}
Suppose that $\cD=\mathcal{B}$. Let $\Pi_{\cD}$ denote the identity map and $f(t)=\log t + \log\log t$ for $t\ge 2$ and $f(1)=f(2)$. Suppose that $\mu_a\in(0,1)$ for all $a=1,\ldots,K$. The variant of KL-UCB satisfies \eqref{eq:subopt}, \eqref{eq:suboptindifference}, \eqref{eq:nonmargin} and \eqref{eq:exhaustbudget}. Thus, KL-UCB achieves the asymptotic regret lower bound \eqref{eq:reglb} for uniformly efficient algorithms.
\end{theorem}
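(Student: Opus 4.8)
The plan is to verify the four conditions \eqref{eq:subopt}, \eqref{eq:suboptindifference}, \eqref{eq:nonmargin} and \eqref{eq:exhaustbudget}, since by Theorem~\ref{thm:reglb} these together imply the asymptotic optimality claim \eqref{eq:regub}. The argument runs in close parallel to the proof of Theorem~\ref{thm:expfam}: once the index $U_a(t)$ is shown to satisfy suitable concentration properties for the family $\mathcal{B}$, the remainder of the analysis is family-agnostic and can be reused. Hence the only genuinely family-specific step is to establish, for finitely-supported rewards with $\Pi_\cD$ the identity and $f(t)=\log t+\log\log t$, the two deviation controls that drive every UCB argument: first, a \emph{validity} bound guaranteeing that $U_a(t)\ge\mu_a$ except for an expected $o(\log T)$ number of rounds; and second, an \emph{upper} bound showing that for a fixed threshold $u$ the event $\{U_a(t)>u\}$ can occur only about $\log T/\Kinf(\nu_a,u)$ times. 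For $\mathcal{B}$ both are exactly the content of the concentration analysis of \cite{Cappeetal2013}; I would invoke their empirical-likelihood (Lagrange-multiplier) form of $U_a(t)$ and the associated Chernoff-type bounds, noting that $\mu_a\in(0,1)$ keeps $\Kinf(\nu_a,\cc_a\rho^\star)$ finite and strictly positive exactly when $\cc_a\rho^\star<\mu_{+}=1$.

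With these in hand, \eqref{eq:subopt} and \eqref{eq:suboptindifference} follow from the upper deviation control. A suboptimal arm $a$ is included at round $t+1$ only when $U_a(t)\ge \cc_a\hat\rho^\star(t)$, and since $\hat\rho^\star(t)\to\rho^\star$ the effective threshold is $\cc_a\rho^\star$ up to arbitrarily small slack. For $a\in\underline{\cN}$ this gives $\E[N_a(T)]\le \log T/\Kinf(\nu_a,\cc_a\rho^\star)+o(\log T)$, and combining with the matching lower bound \eqref{eq:explb} of Lemma~\ref{lem:lb} yields the two-sided statement \eqref{eq:subopt}. For $a\in\overline{\cN}$ the threshold satisfies $\cc_a\rho^\star\ge\mu_{+}$, so $\Kinf=\infty$ by convention and the same control forces $\E[N_a(T)]=o(\log T)$, which is \eqref{eq:suboptindifference}.

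The genuinely hard condition is \eqref{eq:nonmargin}: each optimal arm $a^\star\in\cL$ must fail to be pulled in only $o(\log T)$ rounds. Arm $a^\star$ cannot be analyzed in isolation, since its inclusion depends on the full ordering of the ratios $U_b(t)/\cc_b$ through the estimated margin $\hat\rho^\star(t)$. The approach, following the technique highlighted in the introduction, is to convert the \emph{lower} bound \eqref{eq:problb} of Lemma~\ref{lem:lb} into an upper bound on ``crowding.'' On the event where $U_{a^\star}(t)$ covers $\mu_{a^\star}>\cc_{a^\star}\rho^\star$, which by validity fails at most $o(\log T)$ times, the arm $a^\star$ can be excluded only if the arms of $\cM\cup\cN$ collectively claim the budget ahead of it, i.e.\ only if $\hat\rho^\star(t)$ is pushed above $\rho_{a^\star}$; but \eqref{eq:problb} forces each such arm to have already been drawn at least $(1-\delta)\log T/(\Kinf(\nu_a,\cc_a\rho^\star)+\epsilon)$ times, so its index has concentrated below its threshold and cannot sustain the crowding except on a sub-logarithmic set of rounds. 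I expect this counting-and-contradiction step, together with the bookkeeping that makes the slack $\epsilon$ and the convergence $\hat\rho^\star(t)\to\rho^\star$ interact cleanly, to be the main obstacle, and it is precisely where the finitely-supported concentration rates (and the choice of $f$) must be strong enough. Condition \eqref{eq:exhaustbudget} then comes out as a by-product: once \eqref{eq:nonmargin} pins down the draws in $\cL$ and \eqref{eq:subopt}--\eqref{eq:suboptindifference} those in $\cN$, the residual budget slack $BT-\sum_{a}\cc_a\E[N_a(T)]$ is carried by the margin arms $\cM$ and is $o(\log T)$ by the same concentration of $\hat\rho^\star(t)$.
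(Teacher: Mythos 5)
Your skeleton for \eqref{eq:subopt} and \eqref{eq:suboptindifference} is essentially the paper's: the two deviation controls you isolate (validity of $U_{a^\star}(t)$ for $a^\star\in\cS$ up to $o(\log T)$ rounds, and the $\Kinf$-counting bound on $\{U_a(t)>\cc_a\rho^\dagger\}$) are exactly Term~2$a^\star$ and Term~1 of \eqref{eq:T1T2}, handled by Lemmas~\ref{lem:term2astar} and \ref{lem:term1} (note the paper never argues via ``$\hat{\rho}^\star(t)\to\rho^\star$''; it uses the event inclusion \eqref{eq:Atp1a}, which is the rigorous version of your slack statement). However, two of your steps have genuine gaps. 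First, \eqref{eq:exhaustbudget} is \emph{not} a by-product of the other three conditions: none of \eqref{eq:subopt}, \eqref{eq:suboptindifference}, \eqref{eq:nonmargin} constrains the number of draws of the margin arms $\cM$, and it is precisely those draws that must absorb the residual budget $(B-\sum_{a\in\cL}\cc_a)T$ up to $o(\log T)$. An algorithm could satisfy all three conditions yet systematically under-serve the margin, making the third term of \eqref{eq:regretDecomp2} linear in $T$. The paper proves \eqref{eq:exhaustbudget} directly: in the extended bandit the budget slack equals $B\E[N_{K+1}(T)]$, and $\{K+1\in\A(t+1)\}\subseteq\bigcup_{a^\star\in\cS}\{\cc_{a^\star}\rho\ge U_{a^\star}(t)\}$, i.e.\ non-saturation forces an underestimation event, so $\E[N_{K+1}(T)]\le\sum_{a^\star\in\cS}\textnormal{Term 2}a^\star=o(\log T)$ by Lemma~\ref{lem:term2astar}.

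Second, for \eqref{eq:nonmargin} you correctly identify the paper's key idea (convert the high-probability lower bound \eqref{eq:problb} into an upper bound on crowding), but your plan contains an unresolved circularity and omits the device that makes the counting work. Circularity: \eqref{eq:problb} applies only to uniformly efficient algorithms, and uniform efficiency of KL-UCB itself requires an a priori bound on $T-\E[N_{a^\star}(T)]$. The paper breaks this circle by first proving a crude $O(\log T)$ bound on the crowding counts $\E[M_a^{a^\star}(T)]$ (Lemma~\ref{lem:TermA}, combined with \eqref{eq:keyobs} and the constants $\check{q}_a^{a^\star}$ that translate index-ordering events into actual pulls of the randomized algorithm --- another ingredient your sketch never supplies), deduces uniform efficiency in Lemma~\ref{lem:cons}, and only then invokes \eqref{eq:problb}. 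Missing device: knowing that arm $a$ has been pulled roughly $(1-\delta)\log T/\Kinf(\nu_a,\cc_a\rho^\star)$ times does not by itself yield an $o(\log T)$ bound on crowding; the paper needs the geometric grid $T^{(g)}=\lfloor T^{(1-\delta)^g}\rfloor$, the telescoping decomposition \eqref{eq:margindecomp} into Terms A and B, the choice of $\delta$ in \eqref{def:ChoiceDelta} ensuring $(1-\delta)^2\Kinf(\nu_a,\cc_a\rho_{a^\star})>\Kinf(\nu_a,\cc_a\rho^\star)$, and the final limit $G\to\infty$ in \eqref{eq:fixedG}. Your proposal explicitly defers this bookkeeping as ``the main obstacle'' without proposing a mechanism, so the hardest of the four conditions remains unproven in your write-up.
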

In  both theorems,  the  little-oh notation  hides  the problem-dependent  but
$T$-independent  quantities. In  the proofs  of Theorems  \ref{thm:expfam} and
\ref{thm:finsup}  we refer  to  equations in  \cite{Cappeetal2013b} where  the
reader can find explicit  finite-sample, problem-dependent expressions for the
$o(\log  T)$   term  in  \eqref{eq:subopt}   for  the  settings   of  Theorems
\ref{thm:expfam}  and  \ref{thm:finsup}.   The   argument  used  to  establish
\eqref{eq:suboptindifference}  considers similar  $o(\log T)$  terms to  those
that appear  in the proof  of \eqref{eq:subopt}, though the  simplest argument
for establishing \eqref{eq:suboptindifference} (which, for brevity, is the one
that we have elected to present  here) invokes asymptotics.  The argument used
to establish \eqref{eq:nonmargin} in these  settings, on the other hand, seems
to be  fundamentally asymptotic  and does  not appear  to easily  yield finite
sample constants. Nonetheless, this is to  our knowledge the first handling of
thick  margins  in  the multiple-play  bandit
literature, and  so we believe that  our rate- and constant-optimal regret guarantee is of interest  despite its asymptotic
nature.

Moreover, though not presented in detail here, our proof techniques can be used to establish a finite-time regret guarantee that is rate-optimal, namely is $O(\log T)$, but is constant-suboptimal. To obtain this bound, we note that, by Proposition~\ref{prop:RegretDec}, it suffices to combine (i) the previously-discussed finite-time variants of \eqref{eq:subopt} and \eqref{eq:suboptindifference} that can result from the proof of Theorem~\ref{thm:finsup} and (ii) the following finite-time variant of \eqref{eq:nonmargin}, which must hold for all $T\ge 1$ and some $C>0$:
\begin{align}
\mbox{for arms $a^\star\in \cL$: }\ \ &\E_\cV[N_{a^\star}(t)]= T-C \log T. \label{eq:finitetimenonmargin}
\end{align}
This guarantee is asymptotically weaker than that in \eqref{eq:nonmargin} in the sense that the $o(\log T)$ term has been replaced by $O(\log T)$, but is stronger than \eqref{eq:nonmargin} in the sense that we require a finite-time bound on the $O(\log T)$ term rather than only an asymptotic guarantee. Though we did not explicitly establish the above in our proof of Theorem~\ref{thm:finsup}, only a minor modification to the proof is needed. Specifically, by \eqref{eq:keyobs}, it suffices to obtain a finite-time upper bound on $\E[M_a^{a^\star}(T)]$ for all $a\in\cM\cup\cN$ and $a^\star\in\cL$. This upper bound can be found by noting that the proof of Lemma~\ref{lem:TermA} shows that $\E[M_a^{a^\star}(T)]\le O(\log T)$, and explicit finite-sample constants can be computed for this bound just as they can for \eqref{eq:subopt}. Plugging this into \eqref{eq:keyobs} then establishes \eqref{eq:finitetimenonmargin}, which in turn establishes a finite-time $O(\log T)$ regret bound. This finite-time regret bound will be valid even if $\cM$ contains more than one arm.

\subsection{Thompson Sampling} \label{sec:thom}
\begin{algorithm}[ht] \label{alg:tom}
\caption{Thompson Sampling}
\begin{algorithmic}[0]
\State \textit{Parameters} For each arm $a=1,\ldots,K$, let $\Pi_a(0)$ be a prior distribution on $\mu_a$.
\For{$t=0,1,\ldots$}
  \State For each arm $a=1,\ldots,K$, draw $\theta_a(t)\sim \Pi_a(t)$.
  \State Let $\hat{\rho}^\star(t)\equiv \rho^\star\left(\left(\theta_a(t) : a=1,\ldots,K\right)\right)$.
   \State For $a\in\{1,\ldots,K\}$, let $q_a(t) = \Ind\{\theta_a(t)>\cc_a \hat{\rho}^\star(t)\}$.
  \If{$\widehat{\cM}(t)\equiv\{a : \theta_a(t)= \cc_a\hat{\rho}^\star(t)\}$ is non-empty}
      \State For $a\in\widehat{\cM}(t)$, let $q_a(t)=\big[B-\sum_{a : \theta_a(t)>\cc_a\hat{\rho}^\star(t)} \cc_a\big]/\sum_{a\in\widehat{\cM}(t)}\cc_a$.
  \EndIf
  \State Draw $\A(t+1)$ from any distribution $Q(t)$ with marginal probabilities $q_a(t)$.
  \State Draw the corresponding rewards $Y_a(t+1)$, $a\in\A(t+1)$.
  \State For each $a\in\A(t+1)$, obtain a new posterior $\Pi_a(t+1)$ by updating $\Pi_a(t)$ with the observation $Y_a(t+1)$.
  \State For each $a\not\in\A(t+1)$, let $\Pi_a(t+1)=\Pi_a(t)$.
\EndFor
\end{algorithmic}
\end{algorithm}
Thompson sampling  uses Bayesian ideas to  account for the uncertainty  in the
estimated  reward distributions.   In a  classical bandit  setting, one  first
posits  a (typically  non-informative)  prior  over the  means  of the  reward
distributions, and then at each time  updates the posterior and takes a random
draw of  the $K$ means  from the posterior and  pulls the arm  whose posterior
draw is the largest. In our setting, this corresponds to drawing the subset of
arms for  which the  posterior draw  to cost  ratio is  largest (up  until the
budget constraint  is met), which  generalizes the idea initially  proposed by
\cite{Thompson1933}. In the above algorithm,  we focus on independent priors so
that  the  only posteriors  updated  at  time $(t+1)$  are  those  of arms  in
$\A(t+1)$. 
At  time  $(t+1)$,  Thompson  Sampling  first  draws  one  sample
$\theta_a(t)$ from the posterior distribution on the mean of each arm $a$, and
then   selects    a   subset    according   an   oracle    strategy   assuming
$(\theta_a(t))_{a=1,\dots,K}$ are the true parameters. 

We prove  the optimality of Thompson  sampling for Bernoulli rewards,  for the
particular choice  of a uniform  prior distribution on  the mean of  each arm.
Note that the algorithm is easy to implement in that case, since $\Pi_a(t)$ is
a   Beta   distribution   with  parameters   $N_a(t)\hat{\mu}_a(t)   +1$   and
$N_a(t)(1-\hat{\mu}_a(t))+1$. Our proof relies on the same techniques as those
used  to prove  the optimality  of Thompson  sampling in  the standard  bandit
setting  for  Bernoulli rewards  by  \cite{Agrawal&Goyal2012}.   We note  that
\cite{Komiyamaetal2015}  also   made  use  of   some  of  the   techniques  in
\cite{Agrawal&Goyal2012}  to prove  the  optimality of  Thompson sampling  for
Bernoulli rewards in the multiple-play bandit setting.

\begin{theorem}[Optimality  for  Bernoulli  rewards] \label{thm:thom}  If  the
  reward distributions are Ber\-noul\-li and  $\Pi_a(0)$ is a standard uniform
  distribution   for    each   $a$,    then   Thompson    sampling   satisfies
  \eqref{eq:subopt},  \eqref{eq:suboptindifference}, \eqref{eq:nonmargin}  and
  \eqref{eq:exhaustbudget}.  Thus,  Thompson sampling achieves  the asymptotic
  regret lower bound \eqref{eq:reglb} for uniformly efficient algorithms.
\end{theorem}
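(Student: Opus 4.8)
The plan is to verify the four displays \eqref{eq:subopt}--\eqref{eq:exhaustbudget} and then appeal to Theorem~\ref{thm:reglb}, which converts them into the matching upper bound \eqref{eq:regub}. We would use the Bernoulli structure heavily: with a uniform prior the posterior $\Pi_a(t)$ is $\mathrm{Beta}(N_a(t)\hat\mu_a(t)+1,\,N_a(t)(1-\hat\mu_a(t))+1)$, so a draw $\theta_a(t)$ concentrates around $\hat\mu_a(t)$ at a rate set by $N_a(t)$, and the Beta-tail large-deviation estimates of \cite{Agrawal&Goyal2012} bound $\Prob(\theta_a(t)>x)$ and $\Prob(\theta_a(t)<x)$ through $N_a(t)$ and a $\KL$ exponent. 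Since arm $a$ is included at round $t+1$ exactly when its sampled ratio $\theta_a(t)/c_a$ exceeds the sampled cutoff $\hat\rho^\star(t)=\rho^\star\big((\theta_a(t))_a\big)$ (with fractional inclusion on the estimated margin), every claim about draw counts becomes a claim about how often $\theta_a(t)/c_a$ lands on the wrong side of a cutoff.

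For the suboptimal upper bounds \eqref{eq:subopt} and \eqref{eq:suboptindifference} we would follow the \cite{Agrawal&Goyal2012} template (as in \cite{Komiyamaetal2015}). For $a\in\underline\cN$ we have $\rho_a<\rho^\star$, so once $\hat\rho^\star(t)$ has settled near $\rho^\star$ the arm is pulled only when $\theta_a(t)$ overshoots the fixed level $c_a\rho^\star>\mu_a$; the number of overshoots up to $T$ is at most $\log T/\KL(\mu_a,c_a\rho^\star)+o(\log T)=\log T/\Kinf(\nu_a,c_a\rho^\star)+o(\log T)$, giving the ``$\le$'' half of \eqref{eq:subopt}, while the matching ``$\ge$'' half is the lower bound \eqref{eq:explb}. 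These bounds, together with a crude $O(\log T)$ count of the rounds at which optimal arms are displaced, already give $\Reg=O(\log T)=o(T^\alpha)$, so Thompson sampling is uniformly efficient and Lemma~\ref{lem:lb} is available. For $a\in\overline\cN$, $c_a\rho^\star\ge\mu_+=1\ge\theta_a(t)$, so the arm is included only when $\hat\rho^\star(t)$ dips strictly below $\rho^\star$; as this forces the optimal and margin arms to be simultaneously under-valued, its summed probability is $o(\log T)$ and \eqref{eq:suboptindifference} follows.

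The crux is \eqref{eq:nonmargin}. We would write $T-\E[N_{a^\star}(T)]=\sum_t\Prob(a^\star\notin\A(t+1))$ and analyse the exclusion event. Because $\rho_{a^\star}>\rho^\star$, the posterior of $a^\star$ concentrates so that $\theta_{a^\star}(t)/c_{a^\star}\approx\rho_{a^\star}$, and $a^\star$ is dropped essentially only when the cutoff $\hat\rho^\star(t)$ is pushed above $\rho_{a^\star}$. This requires some arm $b$ with true ratio $\le\rho^\star$ to exhibit an inflated sampled ratio $\theta_b(t)/c_b>\rho_{a^\star}$, enough to close the strictly positive cost gap $B-\sum_{a\in\cL:\rho_a>\rho_{a^\star}}c_a\ge c_{a^\star}$ that separates the genuinely-above arms from the budget. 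The decisive point is that the inflation must cross $\rho_{a^\star}$, which is \emph{strictly larger} than the natural level $\rho^\star$. A margin arm $b\in\cM$ is pulled a positive fraction of rounds, so $N_b(t)=\Theta(t)$ and its inflation probability is geometrically small, contributing $O(1)$. A suboptimal arm $b$ concentrates only at the logarithmic rate, but by the high-probability lower bound \eqref{eq:problb} we have $N_b(t)\ge(1-\delta)\log t/(\Kinf(\nu_b,c_b\rho^\star)+\epsilon)$, so its per-round inflation probability is of order $t^{-(1-\delta)\KL(\mu_b,c_b\rho_{a^\star})/(\Kinf(\nu_b,c_b\rho^\star)+\epsilon)}$; since $\rho_{a^\star}>\rho^\star$ gives $\KL(\mu_b,c_b\rho_{a^\star})>\Kinf(\nu_b,c_b\rho^\star)$, for $\delta,\epsilon$ small the exponent exceeds one and the sum over $t$ converges. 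Thus the lower bound, far from merely proving a regret lower bound, certifies that the competing arms are \emph{already} concentrated enough to make displacements of $a^\star$ an $O(1)$ (hence $o(\log T)$) event.

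The main obstacle, where we would concentrate the technical effort, is using \eqref{eq:problb} \emph{uniformly in $t$}: the lemma is stated at the horizon, whereas the displacement bound needs $N_b(t)$ to be non-anomalous across a whole range of rounds, with the rare failure rounds contributing $o(\log T)$ rather than $O(T)\cdot o(1)$. We would handle this by introducing, for each $b\in\cM\cup\underline\cN$, a last time at which the bound fails and showing, via the Lai--Robbins change-of-measure argument behind Lemma~\ref{lem:lb}, that its tail is summable; on the complementary good event the term-by-term displacement estimates apply. Finally, \eqref{eq:exhaustbudget} rests on the same concentration: when $\rho^\star>\rho$ the budget is left unsaturated only if $\hat\rho^\star(t)=\rho$, which needs the worthwhile arms (those with $\rho_a>\rho$, whose costs sum to at least $B$) to be simultaneously deflated below ratio $\rho$---a summable-probability event contributing $o(\log T)$. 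Combining the four displays with Theorem~\ref{thm:reglb} delivers the asymptotic optimality.
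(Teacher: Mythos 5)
Your overall architecture matches the paper's: the Agrawal--Goyal decomposition for suboptimal arms, the bootstrap to uniform efficiency via a crude $O(\log T)$ regret bound, the use of the high-probability lower bound \eqref{eq:problb} together with the exponent comparison $\KL(\mu_b,\cc_b\rho_{a^\star})>\Kinf(\nu_b,\cc_b\rho^\star)$ (the paper's condition \eqref{eq:deltalogic}) to show displaced-optimal-arm events are rare, and the reduction of \eqref{eq:exhaustbudget} to deflation events for arms in $\cS$. That is indeed the paper's central idea. However, your execution of the crux step \eqref{eq:nonmargin} has a genuine gap precisely at the obstacle you yourself single out: making \eqref{eq:problb}, which is stated at a single horizon, usable across a range of rounds. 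Your proposed fix---control the tail of the \emph{last} time the count bound fails and show it is summable via the change-of-measure argument---cannot be carried out with the tools available: Lemma~\ref{lem:lb} only asserts that the failure probability tends to zero, and its proof rests on the strong law of large numbers, so it yields no rate and hence no summability; with only $\Prob(\tau_b\ge t)\to 0$, the rounds before the last failure can contribute $o(T)$ rather than $o(\log T)$. The paper's device avoids needing any rate at all: it introduces the geometric grid $T^{(g)}\equiv\lfloor T^{(1-\delta)^g}\rfloor$, telescopes $\E[M_a^{a^\star}(T)]$ into Terms A and B (display \eqref{eq:margindecomp}), and inside each interval $[T^{(g)},T^{(g-1)})$ splits on whether $N_a(t)\ge b_a(T,g)$. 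The rounds where $a$ is pulled while $N_a(t)<b_a(T,g)$ are self-limiting (there are at most $b_a(T,g)$ of them), and they can only exist at all on the event $\{N_a(T^{(g)})<b_a(T,g)\}$, so their contribution is at most $b_a(T,g)\,\Prob\{N_a(T^{(g)})<b_a(T,g)\}=O(\log T)\cdot o(1)=o(\log T)$; a single $o(1)$ failure probability at each grid point suffices, and the leftover factor $(1-\delta)^G$ in Term A is killed by letting $G\to\infty$ at the very end. This grid-plus-telescoping construction is the missing idea in your proposal.

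Two further steps in your sketch are also unjustified as written, though both are repairable within the toolkit you cite. First, your claim that a margin arm $b\in\cM$ is pulled $\Theta(t)$ times has no basis---nothing proven (here or in the paper) gives linearly many pulls of an \emph{individual} margin arm, particularly with a thick margin; the correct treatment is the one you already use for $b\in\cN$: since $\Kinf(\nu_b,\cc_b\rho^\star)=0$ for $b\in\cM$, bound \eqref{eq:problb} holds with arbitrary $\epsilon>0$, which delivers enough pulls (of order $\log t/\epsilon$) to make the inflation exponent exceed one; this is exactly how Lemma~\ref{lem:TermB} handles $\cM$. Second, your statement that ``the posterior of $a^\star$ concentrates so that $\theta_{a^\star}(t)/\cc_{a^\star}\approx\rho_{a^\star}$'' presupposes that $N_{a^\star}(t)$ is large, which is what \eqref{eq:nonmargin} is trying to prove---as written this is circular. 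The paper breaks the circularity with the odds-ratio device of Lemma~\ref{lem:suboptasopt}: the probability that $a^\star$ is displaced while its competitors' samples stay capped at $\rho^\ddagger$ is bounded by $\bigl(1-p_{a^\star}^{\rho^\ddagger}(t)\bigr)/p_{a^\star}^{\rho^\ddagger}(t)$ times the probability that $a^\star$ clears the cutoff; after transferring time-sums to pull-count-sums via the constants $\mathring{q}_{a^\star}$ and $\check{q}_a^{a^\star}$ (Lemma~\ref{lem:thomtransferTtoN}), the count-sum of these odds ratios is $O(1)$ by Lemma~\ref{lem:carefulbinomialbound}, with no a priori control on $N_{a^\star}(t)$. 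Without these three repairs the proposed proof does not close.
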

For any $\epsilon>0$ and $a\in\underline{\cN}$, the proof shows that Thompson sampling satisfies
\begin{align*}
\E[N_a(T)]\le (1+\epsilon)^2\frac{f(T)}{\KL(\mu_a,\cc_a \rho^\star)} + o(\log T).
\end{align*}
The proof gives an explicit bound on the $o(\log T)$ term that depends on both the problem and the choice of $\epsilon$. We conclude by noting that, similarly as for KL-UCB, our proof techniques can be easily adapted to give a rate-optimal but constant-suboptimal finite-time regret bound, where this bound will be valid even if $\cM$ contains more than one arm.

\section{Numerical Experiments} \label{sec:exp}

We now run  four simulations to evaluate our theoretical  results in practice,
all with Bernoulli reward distributions, a horizon of $T=100\,000$, and $K=5$.
The   simulation  settings   are  displayed   in  Table~\ref{tab:simsettings}.
Simulations 1-3 are run using $5\,000$ Monte Carlo repetitions, and Simulation
4 was run  using $50\,000$ repetitions to reduce Monte  Carlo uncertainty. 

\begin{table}[h]
\centering
\begin{tabular}{l|lllllll}
& $\mu$ & $\cc$ & $B$ & $\rho$ & $\cL$ & $\cM$ & $\overline{\cN}$ \\
\hline
Sim 1 & $(0.5,0.45,0.45,0.4,0.3)$ & $(1,1,1,1,1)$ & $2$ & $0$ & $\{1\}$ & $\{2,3\}$ & $\emptyset$ \\
Sim 2 & $(0.7,0.6,0.5,0.3,0.2)$ & $(1,1,1,1,1)$ & $3$ & $0$ & $\{1,2\}$ & $\{3\}$ & $\emptyset$ \\
Sim 3 & $(0.5,0.45,0.45,0.4,0.3)$ & $(0.8,1,1,0.8,0.6)$ & $2$ & $0.5$ & $\{1\}$ & $\{4,5,6\}$ & $\emptyset$ \\
Sim 4 & $(0.7,0.6,0.5,0.3,0.2)$ & $(1.5,1,1,1,2.5)$ & $3$ & $0.4$ & $\{2,3\}$ & $\{1\}$ & $\{5\}$ \\
\end{tabular}
\caption{Simulation settings  considered. Simulations 1 and  3 have non-unique
  margins so that $q_a$  must be less than one for at  least one arm $a\in\cM$
  for the budget  constraint to be satisfied. In Simulation  3, the pseudo-arm
  $(K+1)=6$ is in $\cM$, and in Simulation 4 arm $5$ is in $\overline{\cN}$.}
\label{tab:simsettings}
\end{table}

For $d \in \mathbb{R}$, we define the  KL-UCB $d$ algorithm as the instance of
KL-UCB using the  function $f(t)=\log t +  d\log\log t$. Note that  the use of
both  KL-UCB~3 and  KL-UCB~1 are  theoretically  justified by  the results  of
Theorems  \ref{thm:expfam} and  \ref{thm:finsup},  as Bernoulli  distributions
satisfy the conditions of both theorems.  In the settings of Simulations 1 and
2, which  represent multiple-play  bandit instances  as $B$  is an  integer in
$[1,K]$ and the cost of pulling each  arm is one, we compare Thompson sampling
and  KL-UCB  to the  ESCB  algorithm  of \cite{Combesetal2015b}.   As
  quickly  explained  earlier,  ESCB  is   a  generalization  of  the  KL-UCB
algorithm, designed  for the combinatorial semi-bandit  setting (that includes
multiple-play). This algorithm computes an  upper confidence bound for the sum
of the arm means for each  of the ${K\choose B}$ candidate sets $\mathcal{S}$,
defined by the optimal value to
\begin{align}
\sup_{(\mu_1,\ldots,\mu_K)\in[0,1]^K} \sum_{a\in\mathcal{S}} \mu_K\,\textnormal{ subject to }\,\sum_{a\in\mathcal{S}} N_a(t) \KL\left(\hat{\mu}_a(t),\mu_a\right)\le f(t) \label{indexESCB}
\end{align}
and draws the arms in the set $\mathcal{S}$ with the maximal index. Just like KL-UCB, ESCB uses confidence bounds whose level rely on a function $f$ such that $f(t)\approx \log t$. Because the optimization problem solved to compute the indices \eqref{indexKLUCB} and \eqref{indexESCB} are different, the $f$ functions used by KL-UCB and ESCB are not directly comparable. Nonetheless, a side-by-side comparison of the two algorithms seems to indicate that $f(t)=\log t + cB\log\log t$ for ESCB is comparable to $f(t)=\log t + c\log\log t$ for KL-UCB. \citeauthor{Combesetal2015b} prove an $O(\log T)$ regret bound (with a sub-optimal constant) for the version of ESCB corresponding to the constant $c=4$, that we refer to as ESCB~4$B$.

\begin{figure}[ht]
  \centering
  \includegraphics[width=\linewidth]{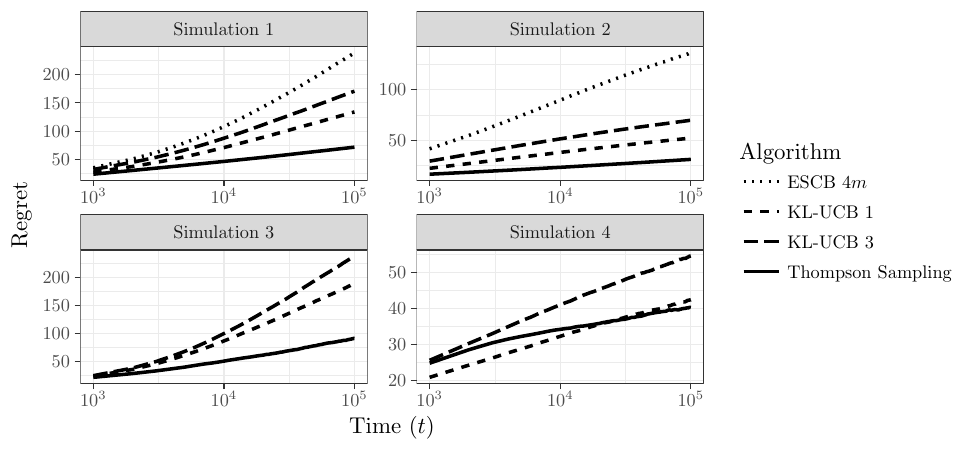}
  \caption{Regret of the four algorithms with theoretical guarantees. ESCB only run for Simulations 1 and 2 for which the cost is identically one for all arms.} 
  \label{fig:reg}
\end{figure}

Figure  \ref{fig:reg}  displays  the  regret   of  the  four  algorithms  with
theoretical  guarantees.    All  but   ESCB~4$B$  have   been  proven   to  be
asymptotically optimal,  and thus  are guaranteed  to achieve  the theoretical
lower bound asymptotically. In our finite sample simulation, Thompson sampling
performs  better than  this  theoretical guarantee  may suggest  (the
  regret lower  bounds at time  $T=100\,000$ are approximately equal  to $150$
  and $45$ in Simulations 1  and 2, respectively).  Indeed, Thompson sampling
outperforms  the KL-UCB  algorithms in  all but  Simulation~4, while  KL-UCB~1
outperforms KL-UCB~3 and  KL-UCB~3 outperforms ESCB~4$B$ in  Simulations 1 and
2.   To give  the  reader  intuition on  the  relative  performance of  KL-UCB
variants,  note   that  in  the   proofs  of  Theorems   \ref{thm:expfam}  and
\ref{thm:finsup} we prove that the number  of pulls on each suboptimal arm $a$
is upper bounded  by $f(T)/\Kinf(\nu_a,\cc_a\rho^\star) + o(\log  T)$, with an
explicit   finite  sample   constant  for   the  $o(\log   T)$  term.    While
$f(T) =  \log T +  o(\log T)$  for KL-UCB~1 and  KL-UCB~3, for finite  $T$ the
quantities $\log T$ and $\log T + c\log \log T$, $c=1,3$, are quite different.
At  $T=10^5$,  $\log T  +  \log\log  T$ is  20\%  larger  than $\log  T$,  and
$\log T + 3\log\log T$ is 60\%  larger. This difference does not decay quickly
with sample size: at $T=10^{15}$,  these two quantities are still respectively
10\% and 30\% larger than $\log T$.  This makes clear the practical benefit to
choosing $f(t)$  as close  to $\log  t$ as  is theoretically  justifiable: for
Bernoullis,  the  choice of  $f(t)$  in  Theorem~\ref{thm:finsup} yields  much
better results than the choice of $f(t)$ in Theorem~\ref{thm:expfam}.

We also compared the performance of KL-UCB~0 and ESCB~0 in Simulations 1 and~2
(details omitted here,  but the exact results of this  simulation are given in
Figure~2         of        the         earlier        technical         report
\citealp{Luedtke&Kaufmann&Chamnbaz2016}). Though  not theoretically justified,
this  choice of  $f(t)=\log t$  has been  used quite  a lot  in practice.  The
ordering of the three algorithms is the  same in Simulations 1 and 2: Thompson
Sampling performs best while ESCB~0 slightly outperforms KL-UCB~0.
This should however be mitigated by the gap of numerical complexity between the two algorithms, especially when $B$ and $K$ are large and $B/K$ is not close to $0$ or $1$: while KL-UCB only requires running $K$ univariate root-finding procedures regardless of $B$, the current proposed ESCB algorithm requires running ${K\choose B}$ univariate root-finding procedures. For $K=100$ and $B=10$, this is a difference of running $100$ root-finding procedures versus more than $10^{13}$ of them.

\begin{figure}[ht]
  \centering
  \includegraphics[width=0.9\linewidth]{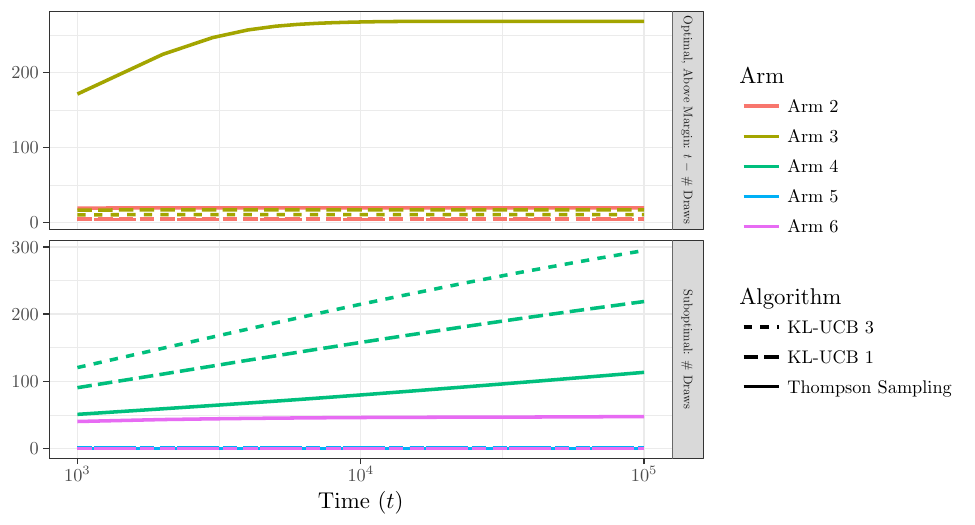}
  \caption{Time minus the number of optimal arm draws (top) and number of suboptimal arm draws (bottom) in Simulation 4.}
  \label{fig:N}
\end{figure}

Figure \ref{fig:N} displays the number of optimal and suboptimal arm draws in Simulation 4. None of the algorithms pulled the arm in $\overline{\cN}$ (arm 5) often. Thompson Sampling pulled the indifference point pseudo-arm surprisingly often in the first $10^3$ draws, and as a result arm 3 (above the margin) was also not pulled as often as would be expected in these early draws. By time $10^4$, the regret of Thompson sampling appears to have stabilized, and soon outperforms that of the two KL-UCB algorithms. We also checked what would happen if the indifference point were increased from $0.4$ to $0.45$ (details not shown). In this case, it takes even longer for the algorithm to differentiate between arm 3 (with $\rho_3=0.5$) and the pseudo-arm, though by time $10^5$ the algorithm again appears to have succeeded in learning that pulling arm 3 is to be preferred over pulling the peudo-arm.

\section{Proofs   of    Optimality   of    KL-UCB   and    Thompson   Sampling} \label{sec:proofoutlines}  We now outline our  proofs of optimality
for the KL-UCB and Thompson sampling schemes. We break this section into three
subsections. Section~\ref{sec:suboptrare} establishes that  the arms in $\cN$,
i.e.     the    suboptimal   arms,    are    not    pulled   often    (satisfy
Equations~\ref{eq:subopt}~and~\ref{eq:suboptindifference}).    Due    to   the
differences in  proof methods,  we consider the  KL-UCB and  Thompson sampling
schemes separately in  this subsection.  Section~\ref{sec:budgetsat} justifies
that when $\rho^\star>\rho$, the budget constraint  is most often saturated, that is
the    third     term    in    the    regret     is    negligible.     Finally
Section~\ref{sec:optcommon}  establishes that  the  arms in  $\cL$, i.e.   the
optimal   arms   away   from   the   margin,   are   pulled   often   (satisfy
Equation~\ref{eq:nonmargin}). We give the outline of the proofs for the KL-UCB
and Thompson sampling  schemes simultaneously, though we  provide the detailed
arguments                             separately                            in
Appendices~\ref{app:klucbproof}~and~\ref{app:thomproof}, respectively. We note
that  the order  of  presentation of  the two  subsections  is important:  the
arguments  used  in  Section~\ref{sec:optcommon}   rely  on  the  validity  of
\eqref{eq:subopt} and  \eqref{eq:suboptindifference}, which is  established in
Section~\ref{sec:suboptrare}.

To  ease the  presentation, we  find it  convenient to  consider the  extended
bandit  model   presented  in   Section~\ref{sec:regretdecomp},  in   which  a
pseudo-arm $K+1$ of cost $B$ is added  to the bandit instance, with a positive
probability of  pulling arm $K+1$ representing  the decision not to  spend the
entire  budget on  pulling  arms  $1,\ldots,K$.  Though  both  the KL-UCB  and
Thompson  Sampling   algorithms  were   presented  without  this   extra  arm,
we already noted that 
for each  $t$, $q_{K+1}(t)= 1-\frac{1}{B}\sum_{a=1}^K \cc_a  q_a(t)$.  The UCB
index $U_{K+1}(t)$ and posterior draw $\theta_{K+1}(t)$ for arm $K+1$ are both
equal to $B\rho$ for all $t$. For the sake of condensing notation in our study
of  (expected)  regret, it  will  be  convenient  to consider  a  hypothetical
scenario in  which arm $K+1$ is  pulled with probability $q_{K+1}(t)$  at each
time  point, even  though the  outcome of  these pulls  has no  effect on  the
behavior of the algorithms.

\subsection{Suboptimal arms not pulled often} \label{sec:suboptrare}

In this section, we establish \eqref{eq:subopt} and \eqref{eq:suboptindifference} for KL-UCB and Thompson Sampling.

For a fixed arm $a$, the KL-UCB and Thompson sampling proofs will both rely on a quantity $\rho^\dagger\in(\rho_a,\mu_{+}/\cc_a)$, though we note that the value that we select for $\rho^\dagger$ will vary between the proofs.

\subsubsection*{KL-UCB}

\paragraph{\em Preliminary: a general analysis.} 
We start by giving a general analysis of KL-UCB in our setting, and then use it to prove Theorems~\ref{thm:expfam}~and~\ref{thm:finsup}. 
Fix  $a\in \cN\backslash\{K+1\}$. 
The arguments in this section generalize those given in \cite{Cappeetal2013,Cappeetal2013b} for the case where one arm is drawn at each time point and there is no budget constraint. 
Let $\mu^\dagger\in(\mu_a,\mu_{+})$ be some real number. If $a\in\underline{\cN}$, then we will choose $\mu^\dagger=\cc_a \rho^\star$. If, on the other hand, $a\in\overline{\cN}$, then we will choose $\mu^\dagger$ to be less than $\mu_{+}$. Let $\rho^\dagger$ be a constant that is either equal to or slightly less than $\mu^\dagger/\cc_a$. Below we take minimums over $a^\star\in\cS\equiv (\cL\cup \cM)\backslash\{K+1\}$: if $\cS=\emptyset$, then we take these minimums to be equal to negative infinity. When we later take sums over $a^\star\in\cS$, we let empty sums equal zero. 

We now establish that, for all $t\ge K$,
\begin{align}
\left\{a\in\A(t+1)\right\}&\subseteq \left[\cup_{a^\star\in\cS}\left\{\cc_{a^\star} \rho^\dagger\ge U_{a^\star}(t)\right\}\right]\cup \left\{a\in\A(t+1), \cc_a \rho^\dagger< U_a(t)\right\}.\label{eq:Atp1a}
\end{align}
We separately handle the cases that $\rho^\star>\rho$ and $\rho^\star=\rho$. If $\rho^\star > \rho$, playing all of the arms in $\cS$ would spend at least the allotted budget $B$. Hence, on the event 
$\left\{\forall a^\star\in \cS,  U_{a^\star}(t)/c_{a^\star} > \rho^\dagger\right\}$, it holds that $\hat{\rho}^\star(t) > \rho^\dagger$. If moreover $a\in\A(t+1)$, one has $U_a(t) \geq c_a\hat{\rho}^\star(t) > c_a\rho^\dagger$. 
If $\rho = \rho^\star$, it holds that $\{a \in \hat{\cA}(t+1) \} \subseteq \{a \in \hat{\cA}(t+1) , c_a \rho^\dagger < U_a(t)\}$. Indeed, if $\hat{\rho}^\star(t) > \rho$ the algorithm only pulls arms $a$ if $U_a(t)  \geq \hat{\rho}^\star(t) c_a > \rho c_a$ and if $\hat{\rho}^\star(t) = \rho$, then the algorithm only pulls arm $a$ if $U_a(t) > c_a\rho$, see Footnote~\ref{foot:extraif}. As $\rho^\dagger$ is smaller or equal to $\rho^\star=\rho$, it follows that $U_a(t) > c_a\rho^\dagger$ in both cases.

For each $\zeta>0$ and $\tilde{\mu}< \mu_{+}$, we now introduce the set $\mathcal{C}_{\tilde{\mu},\zeta}$. In the setting of Theorem~\ref{thm:expfam},
\begin{align*}
\mathcal{C}_{\tilde{\mu},\zeta}&\equiv \left\{\nu' : \Supp[\nu']\subseteq\bar{I}\right\} \cap \left\{\nu' : \exists\,\mu\in(\tilde{\mu},\mu_{+}]\textnormal{ with }\KL(E(\nu'),\mu)\le \zeta\right\},
\end{align*}
where above $\KL(E(\nu'),\mu)$ is the KL-divergence in the canonical exponential family $\mathcal{E}$. In the setting of Theorem~\ref{thm:finsup},
\begin{align*}
\mathcal{C}_{\tilde{\mu},\zeta}&\equiv \left\{\nu' : \Supp[\nu']\subseteq[0,1]\right\} \cap \left\{\nu' : \exists\,\nu\in\mathcal{B}\textnormal{ with }\tilde{\mu}<E(\nu)\textnormal{ and }\KL(\Pi_{\cD}(\nu'),\nu)\le \zeta\right\}.
\end{align*}
In both settings, we will invoke this set at $\tilde{\mu}=\cc_a \rho^\dagger<\mu_{+}$. The set $\mathcal{C}_{\tilde{\mu},\zeta}$ is defined in both settings so that $\tilde{\mu}< U_a(t)$ if and only if $\hat{\nu}_a(t)\in \mathcal{C}_{\tilde{\mu},f(t)/N_a(t)}$. Recalling that $\E[N_a(T)]=\sum_{t=0}^{T-1} \Prob\{a\in\mathcal{A}(t+1)\}$, a union bound gives
\begin{align*}
\E[N_a(T)]\le&\, 1 + \sum_{a^\star\in\cS} \sum_{t=K}^{T-1} \Prob\left\{\cc_{a^\star} \rho^\dagger\ge U_{a^\star}(t)\right\} + \sum_{t=K}^{T-1} \Prob\left\{a\in\A(t+1), \hat{\nu}_{a,N_a(t)}\in \mathcal{C}_{\cc_a \rho^\dagger,f(t)/N_a(t)}\right\}.
\end{align*}
In analogue to Equation 8 in \cite{Cappeetal2013}, the above rightmost term satisfies
\begin{align}
\sum_{t=K}^{T-1} &\Prob\left\{a\in\A(t+1), \hat{\nu}_{a,N_a(t)}\in \mathcal{C}_{\cc_a \rho^\dagger,f(t)/N_a(t)}\right\} \nonumber \\
\le& \sum_{t=K}^{T-1}\Prob\left\{a\in\A(t+1), \hat{\nu}_{a,N_a(t)}\in \mathcal{C}_{\cc_a \rho^\dagger,f(T)/N_a(t)}\right\} \nonumber \\
=& \sum_{t=K}^{T-1} \sum_{n=2}^{T-K+1}\Prob\left\{\hat{\nu}_{a,n-1}\in \mathcal{C}_{\cc_a \rho^\dagger,f(T)/(n-1)},\tau_{a,n}=t+1\right\} \label{eq:taunKL} \\
\le& \sum_{n=1}^{T-K}\Prob\left\{\hat{\nu}_{a,n}\in \mathcal{C}_{\cc_a \rho^\dagger,f(T)/n}\right\}, \nonumber
\end{align}
where the final inequality holds because, for each $n$, $\tau_{a,n}=t+1$ for at most one $t$ in $\{K,\ldots,T-1\}$. We will upper bound the terms with $n=1,\ldots,b_a^{\star}(T)$ in the sum on the right by $1$, where
\begin{align*}
b_a^{\star}(T)&\equiv \left\lceil\frac{f(T)}{\Kinf(\nu_a,\mu^\dagger)}\right\rceil\le \frac{f(T)}{\Kinf(\nu_a,\mu^\dagger)}+1.
\end{align*}
This gives the bound
\begin{align*}
\sum_{n=1}^{T-K} \Prob\left\{\hat{\nu}_{a,n}\in\mathcal{C}_{\cc_a \rho^\dagger,f(T)/n}\right\}&\le \frac{f(T)}{\Kinf(\nu_a,\mu^\dagger)} + 1 + \sum_{n=b_a^{\star}(T) + 1}^{\infty} \Prob\left\{\hat{\nu}_{a,n}\in\mathcal{C}_{\cc_a \rho^\dagger,f(T)/n}\right\}.
\end{align*}
Hence,
\begin{align}
\E[N_a(T)]&\le \frac{f(T)}{\Kinf(\nu_a,\mu^\dagger)} + \underbrace{\sum_{n=b_a^{\star}(T) + 1}^{\infty} \Prob\left\{\hat{\nu}_{a,n}\in\mathcal{C}_{\cc_a \rho^\dagger,f(T)/n}\right\}}_{\textnormal{Term 1}} + \sum_{a^\star\in\cS} \underbrace{\sum_{t=K}^{T-1} \Prob\left\{\cc_{a^\star} \rho^\dagger\ge U_{a^\star}(t)\right\}}_{\textnormal{Term 2}a^\star} + 2. \label{eq:T1T2}
\end{align}
Up until this point we have not committed to any particular choice of $\mu^\dagger$, $\rho^\dagger$, or non-decreasing function $f : \mathbb{N}\rightarrow\mathbb{R}$. We now give proofs of \eqref{eq:subopt} and \eqref{eq:suboptindifference} in the settings of Theorems \ref{thm:expfam} and \ref{thm:finsup}. For each proof we use the choice of $f$ from the theorem statement and make particular choices of $\mu^\dagger$ and $\rho^\dagger$.

\begin{lemma}\label{lem:klucb1}
Eq.~\ref{eq:subopt} holds in the settings of Theorems \ref{thm:expfam} and \ref{thm:finsup}
\end{lemma}
\begin{proof}
Fix $a\in\cN\backslash\{K+1\}$. If $a\in\underline{\cN}$, then let $\mu^\dagger=\cc_a \rho^\star$ and, if $a\in\overline{\cN}$, then let $\mu^\dagger\in(\mu_a,\mu_{+})$. In the setting of Theorem~\ref{thm:expfam} let $\rho^\dagger=\mu^\dagger/\cc_a$ and in the setting of Theorem~\ref{thm:finsup} let $\rho^\dagger=\left[1-\log(T)^{-1/5}\right]\mu^\dagger/\cc_a$. Lemma~\ref{lem:term1} shows that Term 1 is $o(\log T)$ and includes references on where to find an explicit finite sample upper bound, where this upper bound will rely on the choice of $\mu^\dagger<\mu_{+}$ if $a\in\overline{\cN}$. Fix $a^\star\in\cS$. Noting that $\rho^\dagger\le \left[1-\log(T)^{-1/5}\right]\rho_{a^\star}$ (Theorem~\ref{thm:expfam}) and $\rho^\dagger\le \rho_{a^\star}$ (Theorem~\ref{thm:finsup}), Term 2$a^\star$ is $o(\log T)$ in both settings by Lemma~\ref{lem:term2astar}, with an exact finite sample upper bound given in the proof thereof. Thus, $\sum_{a^\star\in\cS}\textnormal{Term 2}a^\star = o(\log T)$. This completes the proof of \eqref{eq:subopt}. \qed
\end{proof}

\begin{lemma}\label{lem:klucb2}
Eq.~\ref{eq:suboptindifference} holds in the settings of Theorems \ref{thm:expfam} and \ref{thm:finsup}
\end{lemma}
\begin{proof}
For $a\in\overline{\cN}$, so far we have established that, for arbitrary $\mu^\dagger\in(\mu_a,\mu_{+})$,
\begin{align*}
\E[N_a(T)]&\le \frac{\log T}{\Kinf(\nu_a,\mu^\dagger)} + r(T,\mu^\dagger),
\end{align*}
where $r(T,\mu^\dagger)/\log T\rightarrow 0$ for fixed $\mu^\dagger$. As this holds for every $\mu^\dagger$, there exists a sequence $\mu^\dagger(T)\rightarrow\mu_{+}$ such that $r(T,\mu^\dagger(T))/\log T\rightarrow 0$. In both settings $\liminf_{\mu^\dagger\rightarrow\mu_{+}}\Kinf(\nu_a,\mu^\dagger)=+\infty$, and so using this $\mu^\dagger(T)$ sequence shows that $\E[N_a(T)]=o(\log T)$. \qed
\end{proof}

\subsubsection*{Thompson Sampling}

This proof is inspired by the analysis of Thompson sampling proposed by \cite{Agrawal&Goyal2012}. We work with a suboptimal arm $a\in\cN\backslash\{K+1\}$ in most of this section, though we state one of the results (Lemma~\ref{lem:suboptasopt}) for general arms $a\in\{1,\ldots,K+1\}$ since it will prove useful later. We will let $\rho^\dagger$ and $\rho^\ddagger$ be numbers (to be specified later) satisfying $\rho_a<\rho^\dagger<\rho^\ddagger<1/\cc_a$. Observe that $\left\{a\in\A(t+1)\right\}$ equals
\begin{align*}
&\left\{a\in\A(t+1),\theta_a(t)\le \cc_a\rho^\ddagger\right\}\cup \left\{a\in\A(t+1),\theta_a(t)>\cc_a\rho^\ddagger\right\} \\
&\subseteq \left[\cup_{a^\star\in\cL\cup\cM}\left\{a\in\A(t+1),\theta_a(t)\le\cc_a\rho^\ddagger,\theta_{a^\star}(t)\le \cc_{a^\star}\hat{\rho}^\star\right\}\right]\cup \left\{a\in\A(t+1),\theta_a(t)>\cc_a\rho^\ddagger\right\}.
\end{align*}
By the absolute continuity of the beta distribution, with probability one at most one $a'\in\{1,\ldots,K+1\}$ satisfies $\theta_{a'}(t)= \cc_{a'}\hat{\rho}^\star$, and hence, conditional on $\mathcal{F}(t)$, the leading event above is almost surely equivalent to the event
\begin{align*}
\cup_{a^\star\in\cL\cup\cM}&\left\{a\in\A(t+1),\theta_a(t)\le\cc_a\rho^\ddagger,\theta_{a^\star}(t)< \cc_{a^\star}\hat{\rho}^\star\right\}.
\end{align*}
If $K+1 \in\cM$, then the fact that $a\in\A(t+1)$ implies that $\theta_a(t)/\cc_a(t)\ge \hat{\rho}^\star(t)$ shows that the event in the union above at $a^\star=K+1$ never occurs, since on this event $\rho_{K+1}=\theta_{K+1}(t)/\cc_{K+1}<\rho^\ddagger$, which contradicts our choice that $\rho^\ddagger<\rho^\star=\rho_{K+1}$. Hence, the union above can be taken over $\cS$ regardless of whether or not $K+1\in\cM$. Furthermore,
\begin{align*}
&\left\{a\in\A(t+1),\theta_a(t)>\cc_a\rho^\ddagger\right\} \\
&\subseteq \left\{a\in\A(t+1),\theta_a(t)>\cc_a\rho^\ddagger,\hat{\mu}_a(t)\le\cc_a\rho^\dagger\right\}\cup \left\{a\in\A(t+1),\hat{\mu}_a(t)>\cc_a\rho^\dagger\right\}.
\end{align*}
Recalling that $\E[N_a(T)] = \sum_{t=0}^{T-1} \Prob\left\{a\in\A(t+1)\right\}$,
\begin{align}
\E[N_a(T)]\le&\, \sum_{a^\star\in\cS} \underbrace{\sum_{t=0}^{T-1}\Prob\left\{a\in\A(t+1),\theta_a(t)\le\cc_a\rho^\ddagger,\theta_{a^\star}(t)< \cc_{a^\star}\hat{\rho}^\star\right\}}_{\textnormal{Term I}a^\star} \nonumber \\
&+ \underbrace{\sum_{t=0}^{T-1} \Prob\left\{a\in\A(t+1),\theta_a(t)>\cc_a\rho^\ddagger,\hat{\mu}_a(t)\le\cc_a\rho^\dagger\right\}}_{\textnormal{Term II}} \nonumber \\
&+ \underbrace{\sum_{t=0}^{T-1} \Prob\left\{a\in\A(t+1),\hat{\mu}_a(t)>\cc_a\rho^\dagger\right\}}_{\textnormal{Term III}}. \label{eq:termsitoiii}
\end{align}
The above decomposition does not depend on the algorithm. Bounding Terms I$a^\star$, $a^\star\in\cS$, and Term II will rely on arguments that are specific to Thompson Sampling. Fix $a^\star\in \cS$ and let $p_{a^\star}^{\rho^\ddagger}(t)\equiv \Prob(\theta_{a^\star}(t)>\cc_{a^\star}\rho^\ddagger\,|\,\mathcal{F}(t))$.  Note that $p_{a^\star}^{\rho^\ddagger}(t)\not= p_{a^\star}^{\rho^\ddagger}(t+1)$ implies $a^\star\in\A(t+1)$. Thus $p_{a^\star}^{\rho^\ddagger}(t)$ is equal to $p_{a^\star,n}^{\rho^\ddagger}\equiv p_{a^\star}^{\rho^\ddagger}(\tau_{a^\star,n})$ for all $t$ such that $N_{a^\star}(t)=n$. We now state Lemma~\ref{lem:suboptasopt}, that  generalizes Lemma~1 in \cite{Agrawal&Goyal2012}.
\begin{lemma} \label{lem:suboptasopt}
If $a\in\{1,\ldots,K+1\}$, $a^\star\in\cS$, and $\rho^\ddagger$ satisfies $\cc_{a^\star}\rho^\ddagger<1$, then, for all $t\ge 0$,
\begin{align*}
\Prob\left(\left.a\in\A(t+1),\theta_a(t)\le\cc_a \rho^{\ddagger},\theta_{a^\star}(t)< \cc_{a^\star}\hat{\rho}^\star\right|\mathcal{F}(t)\right)&\le \frac{1-p_{a^\star}^{\rho^\ddagger}(t)}{p_{a^\star}^{\rho^\ddagger}(t)}\Prob\left(\left.\theta_{a^\star}(t)/\cc_{a^\star}\ge \hat{\rho}^{\star}(t)\right|\mathcal{F}(t)\right).
\end{align*}
\end{lemma}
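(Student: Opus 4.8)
The plan is to generalize the decoupling argument behind Lemma~1 of \cite{Agrawal&Goyal2012} to the budgeted multiple-play selection rule. First I would condition on $\mathcal{F}(t)$, under which the posterior draws $(\theta_b(t))_{b=1}^{K}$ are mutually independent (independent priors updated armwise) and the pseudo-arm value $\theta_{K+1}(t)=B\rho$ is deterministic. Since $a=a^\star$ forces the conditioning event to be empty (playing $a^\star$ requires $\theta_{a^\star}(t)\ge\cc_{a^\star}\hat{\rho}^\star(t)$, contradicting $\theta_{a^\star}(t)<\cc_{a^\star}\hat{\rho}^\star(t)$), I may assume $a\ne a^\star$ and introduce the enlarged $\sigma$-field $\mathcal{G}$ generated by $\mathcal{F}(t)$, all draws $\{\theta_b(t):b\ne a^\star\}$, and the exogenous randomness used to turn the marginals into the realized set $\A(t+1)$. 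Given $\mathcal{F}(t)$, the single variable $\theta_{a^\star}(t)$ is independent of $\mathcal{G}$, and $p_{a^\star}^{\rho^\ddagger}(t)$ is $\mathcal{F}(t)$-measurable hence constant given $\mathcal{G}$; I would establish the stated bound conditionally on $\mathcal{G}$ and then integrate.

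The crux is a monotonicity/threshold structure of the selection rule in the single coordinate $x:=\theta_{a^\star}(t)$, holding all else fixed by $\mathcal{G}$. Using Proposition~\ref{prop:Oracle}, $\hat{\rho}^\star(t)$ is nondecreasing in $x$, and the at-or-above-margin condition $x/\cc_{a^\star}\ge\hat{\rho}^\star(t)$ defines an upward-closed set $\{x\ge x^\ast\}$ for some $\mathcal{G}$-measurable threshold $x^\ast$. I would then show that throughout the below-margin regime $\{x<x^\ast\}$ the arm $a^\star$ is excluded and $\hat{\rho}^\star(t)$ equals a constant $r_0$ determined by $\{\theta_b(t):b\ne a^\star\}$ alone, so that the entire marginal vector $q_{\cdot}(t)$ --- and hence, given the fixed exogenous randomness, the realized set $\A(t+1)$ --- does not depend on the exact value of $x$ on $\{x<x^\ast\}$. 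Consequently $\Ind\{a\in\A(t+1),\theta_a(t)\le\cc_a\rho^\ddagger\}$ is a constant $\beta\in\{0,1\}$ on $\{x<x^\ast\}$, and the conditional probability of the target event factors as $\beta\,\Prob(\theta_{a^\star}(t)<x^\ast\mid\mathcal{G})$.

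It then remains to relate the two tail probabilities. On $\{\beta=1\}$ the played arm $a$ satisfies $\theta_a(t)\ge\cc_a\hat{\rho}^\star(t)=\cc_a r_0$ while $\theta_a(t)\le\cc_a\rho^\ddagger$, forcing $r_0\le\rho^\ddagger$ and hence $x^\ast=\cc_{a^\star}r_0\le\cc_{a^\star}\rho^\ddagger$; this yields $\Prob(\theta_{a^\star}(t)\ge x^\ast\mid\mathcal{G})\ge\Prob(\theta_{a^\star}(t)>\cc_{a^\star}\rho^\ddagger\mid\mathcal{F}(t))=p_{a^\star}^{\rho^\ddagger}(t)$. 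Since $u\mapsto(1-u)/u$ is decreasing, $\Prob(\theta_{a^\star}(t)<x^\ast\mid\mathcal{G})$ is at most $\frac{1-p_{a^\star}^{\rho^\ddagger}(t)}{p_{a^\star}^{\rho^\ddagger}(t)}\,\Prob(\theta_{a^\star}(t)\ge x^\ast\mid\mathcal{G})$, which after multiplying by $\beta\le1$ and taking $\E[\,\cdot\mid\mathcal{F}(t)]$ (using that $p_{a^\star}^{\rho^\ddagger}(t)$ is $\mathcal{F}(t)$-measurable) gives the claim. The main obstacle I anticipate is justifying the ``frozen selection'' step cleanly: one must verify, from the fractional-knapsack description of $\rho^\star$ in Proposition~\ref{prop:Oracle}, both the monotonicity of $\hat{\rho}^\star(t)$ in $x$ and that $\hat{\rho}^\star(t)$ is locally constant (equal to $r_0$, determined by the remaining arms) on the below-margin regime --- handling the boundary case where $a^\star$ sits exactly on the margin, which occurs with probability zero under the absolutely continuous Beta posteriors but must still be accounted for in the definition of $x^\ast$.
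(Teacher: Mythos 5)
Your proposal is correct and is essentially the paper's own argument: the paper likewise removes arm $a^\star$ (setting $\tilde{\theta}_{a^\star}(t)=-\infty$) to obtain your frozen threshold $r_0=\rho^\star(\tilde{\theta}_{\tilde{a}}(t) : \tilde{a}=1,\ldots,K+1)$, forms the event $B=\{r_0<\theta_a(t)/\cc_a\le\rho^\ddagger\}$ (your frozen-selection/$\beta$ step, which also encodes that $r_0\le\rho^\ddagger$ whenever the bad event can occur), and then combines the conditional independence of $\theta_{a^\star}(t)$ from $B$ given $\mathcal{F}(t)$ with the inclusion $\{\theta_{a^\star}(t)/\cc_{a^\star}>\rho^\ddagger\}\cap B\subseteq\{\theta_{a^\star}(t)/\cc_{a^\star}\ge\hat{\rho}^\star(t)\}$ and the monotonicity of $u\mapsto(1-u)/u$, exactly as you do. The only cosmetic difference is that you condition on an enlarged $\sigma$-field containing the other arms' draws and the exogenous selection randomness (so that $\A(t+1)$ itself is frozen below the margin), whereas the paper first coarsens $\{a\in\A(t+1)\}$ to $\{\theta_a(t)/\cc_a\ge\hat{\rho}^\star(t)\}$ and then factorizes event probabilities given $\mathcal{F}(t)$, never needing the realized set to be frozen.
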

The proof can be found in Appendix~\ref{app:thomproof}. 
Observe that the upper bound in the above lemma does not rely on $a$. We have another lemma, that relies on a lower bound on the probability $\mathring{q}_{a^\star}$, to be defined shortly, that is possible for $q_{a^\star}(t)$ given that $\theta_{a^\star}(t)/\cc_{a^\star}\ge \hat{\rho}^\star(t)$. By the absolute continuity of the beta distribution, we also have that
\begin{align*}
\Prob\left(a^\star\in\A(t+1)\middle|\mathcal{F}(t)\right)&= \Prob\left(a^\star\in\A(t+1),\frac{\theta_{a^\star}(t)}{\cc_{a^\star}}\ge \hat{\rho}^{\star}(t)\middle|\mathcal{F}(t)\right) \\
&= \Prob\left(a^\star\in\A(t+1)\middle|\frac{\theta_{a^\star}(t)}{\cc_{a^\star}}\ge \hat{\rho}^{\star}(t),\mathcal{F}(t)\right)\Prob\left(\frac{\theta_{a^\star}(t)}{\cc_{a^\star}}\ge \hat{\rho}^{\star}(t)\middle|\mathcal{F}(t)\right). 
\end{align*}
We lower bound the leading term in the product on the right by
\begin{align*}
\mathring{q}_{a^\star}&\equiv \min\left\{1,\min_{\mathcal{H}\subseteq \{1,\ldots,K\}\backslash\{a^\star\} : \sum_{\tilde{a}\in\mathcal{H}}\cc_{\tilde{a}} < B} \frac{B-\sum_{\tilde{a}\in\mathcal{H}}\cc_{\tilde{a}}}{c_{a^\star}}\right\}.
\end{align*}
Because $\cc_{K+1}=B$, one could equivalently take the minimum over $\mathcal{H}\subseteq \{1,\ldots,K+1\}\backslash\{a^\star\}$. To see that this is a lower bound, consider two cases. If $\theta_{a^\star}(t)/\cc_{a^\star}> \hat{\rho}^{\star}(t)$, then $a\in\A(t+1)$ with probability one, and so the above is a lower bound. If $\theta_{a^\star}(t)/\cc_{a^\star}= \hat{\rho}^{\star}(t)$, then the numerator $B - \sum_{\tilde{a}\in\mathcal{H}}\cc_{\tilde{a}}$ of the inner minimum (over $\mathcal{H}$) above represents the minimum possible amount of remaining budget when arm $a^\star$ is the unique arm on the estimated margin. The estimated margin is almost surely (over the draws of $\theta(t)$) singleton. Clearly, $\mathring{q}_{a^\star}>0$. As a consequence,,
\begin{align}
\Prob\left(\frac{\theta_{a^\star}(t)}{\cc_{a^\star}}\ge \hat{\rho}^{\star}(t)\middle|\mathcal{F}(t)\right)&\le \mathring{q}_{a^\star}^{-1} \Prob\left(a^\star\in\A(t+1)\middle|\mathcal{F}(t)\right). \label{eq:calSvscalA}
\end{align}
We have the following lemma, whose proof can be found in Appendix~\ref{app:thomproof}.
\begin{lemma} \label{lem:thomtransferTtoN}
If $a^\star\in\cS$ and $\cc_{a^\star}\rho^\ddagger<1$, then, for all $t\ge 0$,
\begin{align*}
\E\left[\sum_{t=0}^{T-1} \frac{1-p_{a^\star}^{\rho^\ddagger}(t)}{p_{a^\star}^{\rho^\ddagger}(t)}\Prob\left(\left.\theta_{a^\star}(t)/\cc_{a^\star}\ge \hat{\rho}^{\star}(t)\right|\mathcal{F}(t)\right)\right]&\le \mathring{q}_{a^\star}^{-1}\E\left[\sum_{n=0}^{T-1} \frac{1-p_{a^\star,n}^{\rho^\ddagger}}{p_{a^\star,n}^{\rho^\ddagger}}\right].
\end{align*}
\end{lemma}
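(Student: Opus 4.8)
The plan is to combine the pointwise bound \eqref{eq:calSvscalA} with a change of summation from rounds $t$ to the pull count $n$ of arm $a^\star$. Write $W(t)\equiv (1-p_{a^\star}^{\rho^\ddagger}(t))/p_{a^\star}^{\rho^\ddagger}(t)$, and note that $W(t)$ is $\mathcal{F}(t)$-measurable (since $p_{a^\star}^{\rho^\ddagger}(t)$ is a conditional probability given $\mathcal{F}(t)$) and nonnegative. The assumption $\cc_{a^\star}\rho^\ddagger<1$ guarantees that the beta posterior assigns positive mass to $\{\theta_{a^\star}(t)>\cc_{a^\star}\rho^\ddagger\}$, so $p_{a^\star}^{\rho^\ddagger}(t)>0$ and $W(t)$ is finite. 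Multiplying \eqref{eq:calSvscalA} by $W(t)\ge 0$, summing over $t=0,\dots,T-1$, and taking expectations immediately extracts the factor $\mathring{q}_{a^\star}^{-1}$ and reduces the claim to showing
\[
\E\left[\sum_{t=0}^{T-1} W(t)\,\Prob\left(\left.a^\star\in\A(t+1)\right|\mathcal{F}(t)\right)\right]\le \E\left[\sum_{n=0}^{T-1}\frac{1-p_{a^\star,n}^{\rho^\ddagger}}{p_{a^\star,n}^{\rho^\ddagger}}\right].
\]

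Next I would apply the tower property. Because $W(t)$ is $\mathcal{F}(t)$-measurable, $\E[W(t)\Prob(a^\star\in\A(t+1)\mid\mathcal{F}(t))]=\E[W(t)\Ind\{a^\star\in\A(t+1)\}]$, so it suffices to bound $\E[\sum_{t=0}^{T-1}W(t)\Ind\{a^\star\in\A(t+1)\}]$. Here the key structural fact is that the posterior of $a^\star$, and hence $p_{a^\star}^{\rho^\ddagger}(\cdot)$ and $W(\cdot)$, only changes when $a^\star$ is pulled, so $W(t)=(1-p_{a^\star,n}^{\rho^\ddagger})/p_{a^\star,n}^{\rho^\ddagger}$ whenever $N_{a^\star}(t)=n$. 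The pull indicator $\Ind\{a^\star\in\A(t+1)\}$ is nonzero precisely when $t+1=\tau_{a^\star,n}$ for some $n$, and at such a $t$ one has $N_{a^\star}(t)=n-1$. Hence the weighted sum collapses to a sum over the realized pulls,
\[
\sum_{t=0}^{T-1}W(t)\Ind\{a^\star\in\A(t+1)\}=\sum_{n=1}^{N_{a^\star}(T)}\frac{1-p_{a^\star,n-1}^{\rho^\ddagger}}{p_{a^\star,n-1}^{\rho^\ddagger}}=\sum_{m=0}^{N_{a^\star}(T)-1}\frac{1-p_{a^\star,m}^{\rho^\ddagger}}{p_{a^\star,m}^{\rho^\ddagger}}.
\]

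Finally, since $N_{a^\star}(T)\le T$ and every summand is nonnegative, I would enlarge the summation range to $m=0,\dots,T-1$ and take expectations, yielding the claimed right-hand side. The only points needing care are the index bookkeeping in the time-to-pull-count change of variables (the shift $N_{a^\star}(t)=n-1$ at the round of the $n$-th pull, and the convention for the prior term $p_{a^\star,0}^{\rho^\ddagger}$, which is positive under $\cc_{a^\star}\rho^\ddagger<1$) and confirming the nonnegativity used to extend the sum; neither is a serious obstacle. The substantive content is really \eqref{eq:calSvscalA}, established earlier, so this lemma is essentially a measurability-and-counting argument transferring a round-indexed sum into a per-pull sum.
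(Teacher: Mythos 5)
Your proposal is correct and follows essentially the same route as the paper's proof: apply \eqref{eq:calSvscalA} with the nonnegative $\mathcal{F}(t)$-measurable weight, use the tower property to replace the conditional probability by the pull indicator, exploit that $p_{a^\star}^{\rho^\ddagger}(t)=p_{a^\star,N_{a^\star}(t)}^{\rho^\ddagger}$, and convert the time-indexed sum into a pull-count-indexed sum bounded by $T$ terms. The paper phrases the last step as a double sum with indicators $\Ind\{\tau_{a^\star,n+1}=t+1\}$, each nonzero for at most one $t$, while you collapse the sum directly to $\sum_{m=0}^{N_{a^\star}(T)-1}$ and extend it using $N_{a^\star}(T)\le T$; these are the same counting argument.
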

Combining the two preceding lemmas yield the inequality
\begin{align}
\textnormal{Term I}a^\star&\le \mathring{q}_{a^\star}^{-1} \E\left[\sum_{n=0}^{T-1} \frac{1-p_{a^\star,n}^{\rho^\ddagger}}{p_{a^\star,n}^{\rho^\ddagger}}\right]. \label{eq:suboptasoptsum}
\end{align}
Note crucially that we have upper bounded the sum over time on the left-hand side by a sum over the number of pulls of arm $a^\star$ on the right-hand side. There appears to be a steep price to pay for this transfer from a sum over time to a sum over counts: the right-hand side inverse weights by a conditional probability, which may be small for certain realizations of the data. Lemma~2 in \cite{Agrawal&Goyal2012}, that we restate below using our modified notation, establishes that this inverse weighting does not cause a problem for Thompson sampling with Bernoulli rewards and independent beta priors. If $\rho^\ddagger<\rho^\star$, then the proceeding lemma implies that, for each $a^\star\in\cS$, Term I$a^\star$ is $O(1)$, i.e. is $o(\log T)$ with much to spare. Obviously, this implies that $\sum_{a^\star\in\cS}\textnormal{Term I}a^\star = o(\log T)$ as well.
\begin{lemma}[Lemma~2 from \citealp{Agrawal&Goyal2012}] \label{lem:carefulbinomialbound}
If $a^\star\in\cS$ and $\rho^\ddagger<\rho_{a^\star}$, then, with $\Delta\equiv \mu_{a^\star}-\cc_{a^\star} \rho^\ddagger$,
\begin{align*}
\E\left[\frac{1-p_{a^\star,n}^{\rho^\ddagger}}{p_{a^\star,n}^{\rho^\ddagger}}\right]&=\begin{cases}
\frac{3}{\Delta},&\mbox{ for }n<\frac{8}{\Delta} \\
\Theta\left(e^{-\Delta^2 n/2} + \frac{1}{(n+1)\Delta^2}e^{-\KL(\cc_{a^\star}\rho^\ddagger,\mu_{a^\star})n} + \frac{1}{\exp(\Delta^2 n/4)-1}\right),&\mbox{ for }n\ge \frac{8}{\Delta}.
\end{cases}
\end{align*}
Above $\Theta(\cdot)$ is used to represent big-Theta notation.
\end{lemma}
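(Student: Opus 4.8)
The plan is to reduce the claim to Lemma~2 of \cite{Agrawal&Goyal2012} by verifying that $p_{a^\star,n}^{\rho^\ddagger}$, in our notation, is literally the posterior tail probability that their lemma controls, and then to transfer their bound under a cosmetic change of variables. Write $y\equiv \cc_{a^\star}\rho^\ddagger$. The two hypotheses $\rho^\ddagger<\rho_{a^\star}$ and $\cc_{a^\star}\rho^\ddagger<1$ say exactly that $y\in(0,\mu_{a^\star})$, so $\Delta=\mu_{a^\star}-y>0$ and the threshold lies strictly inside $(0,1)$. Under the standard uniform (i.e.\ $\mathrm{Beta}(1,1)$) prior, after $n$ pulls of arm $a^\star$ the posterior on $\mu_{a^\star}$ is $\mathrm{Beta}(S_n+1,\,n-S_n+1)$, where $S_n$ is the number of successes among the first $n$ draws $X_{a^\star,1},\ldots,X_{a^\star,n}$. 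Since $\theta_{a^\star}(\tau_{a^\star,n})$ is drawn from this posterior, $p_{a^\star,n}^{\rho^\ddagger}$ is a deterministic function of $S_n$ alone, namely the probability that a $\mathrm{Beta}(S_n+1,n-S_n+1)$ variate exceeds $y$; the outer expectation in the statement is therefore over the law of $S_n$.

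The crucial enabling fact, already established in the excerpt, is that $\{X_{a^\star,n}\}_{n\ge 1}$ is an i.i.d.\ Bernoulli$(\mu_{a^\star})$ sequence, so that $S_n\sim\mathrm{Bin}(n,\mu_{a^\star})$. I would emphasize that this marginal law is unaffected by the inter-arm dependence our variation-independence assumption permits, which is precisely what lets us treat arm $a^\star$ in isolation and reuse a single-play argument verbatim. First I would apply the Beta--Binomial identity $F^{\text{beta}}_{\alpha,\beta}(y)=1-F^{B}_{\alpha+\beta-1,\,y}(\alpha-1)$ (valid for integer $\alpha,\beta$, where $F^{\text{beta}}$ and $F^{B}$ denote the Beta and Binomial CDFs). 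Taking $\alpha=S_n+1$ and $\beta=n-S_n+1$ collapses the posterior tail into a binomial CDF, giving $p_{a^\star,n}^{\rho^\ddagger}=\Prob(\mathrm{Bin}(n+1,y)\le S_n)$. Thus $\E[(1-p_{a^\star,n}^{\rho^\ddagger})/p_{a^\star,n}^{\rho^\ddagger}]$ becomes a sum over $s$ of $\Prob(S_n=s)\,(1-F^B_{n+1,y}(s))/F^B_{n+1,y}(s)$ with $S_n\sim\mathrm{Bin}(n,\mu_{a^\star})$.

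Because $y<\mu_{a^\star}$, the typical value of $S_n$ (near $n\mu_{a^\star}$) sits well above the mean $(n+1)y$ of the inner binomial, so $p_{a^\star,n}^{\rho^\ddagger}$ is typically close to $1$ and $(1-p)/p$ is typically negligible; the entire difficulty is controlling the contribution of the atypical small-$s$ events, where $p$ is small and its reciprocal blows up. This is exactly the content of \cite{Agrawal&Goyal2012}'s Lemma~2, and I expect this tail control to be the main obstacle if one insists on reproving rather than citing it. The crude bound $3/\Delta$ for $n<8/\Delta$ comes from a direct estimate of the reciprocal that bypasses concentration. For $n\ge 8/\Delta$ one separates $s$ into a bulk near $n\mu_{a^\star}$ and its lower deviations: binomial (Chernoff) concentration of $S_n$ produces the $e^{-\Delta^2 n/2}$ and $1/(\exp(\Delta^2 n/4)-1)$ terms, while the sharp exponential rate $e^{-\KL(\cc_{a^\star}\rho^\ddagger,\mu_{a^\star})n}=e^{-\KL(y,\mu_{a^\star})n}$ arises as the leading cost of forcing $S_n$ below the binomial median $\approx(n+1)y$, the polynomial prefactor $1/((n+1)\Delta^2)$ coming from a local Stirling estimate of the binomial mass together with the summation of the resulting geometric-type series.

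Finally, since every ingredient of $p_{a^\star,n}^{\rho^\ddagger}$ matches the corresponding object in \cite{Agrawal&Goyal2012} under the substitutions (their threshold) $\leftrightarrow \cc_{a^\star}\rho^\ddagger$ and (their success probability) $\leftrightarrow \mu_{a^\star}$, the stated two-case bound transfers unchanged, which completes the argument. I would note in passing that only the \emph{upper} half of the $\Theta$ is needed downstream: it is what subsequently shows, via \eqref{eq:suboptasoptsum}, that each $\textnormal{Term I}a^\star$ is summable and hence $O(1)$ whenever $\rho^\ddagger<\rho_{a^\star}$, since $\sum_n e^{-\Delta^2 n/2}$, $\sum_n (n+1)^{-1}\Delta^{-2}e^{-\KL(y,\mu_{a^\star})n}$, and $\sum_n (\exp(\Delta^2 n/4)-1)^{-1}$ all converge, with the first $\lceil 8/\Delta\rceil$ terms contributing a further $O(\Delta^{-2})$.
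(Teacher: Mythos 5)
Your proposal is correct and matches the paper's own treatment: the paper provides no proof of this lemma at all, simply restating Lemma~2 of \citet{Agrawal&Goyal2012} in its notation, and your argument is exactly the verification that this transfer is legitimate (the marginal law of $\{X_{a^\star,n}\}_n$ is i.i.d.\ Bernoulli$(\mu_{a^\star})$ despite inter-arm dependence, the posterior after $n$ pulls is $\mathrm{Beta}(S_n+1,n-S_n+1)$, and the Beta--Binomial identity reduces $p_{a^\star,n}^{\rho^\ddagger}$ to the binomial CDF quantity their lemma controls, with their threshold and mean replaced by $\cc_{a^\star}\rho^\ddagger$ and $\mu_{a^\star}$). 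Your sketch of the internal mechanics of their proof and the closing remark that only the upper half of the $\Theta$ bound is needed for \eqref{eq:suboptasoptsum} are both accurate.
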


We now turn to Term II. The following result mimics Lemma~4 in \cite{Agrawal&Goyal2012}, and is a consequence of the close link between beta and binomial distributions and the Chernoff-Hoeffding bound. We provide a proof of this result in Appendix~\ref{app:thomproof}.
\begin{lemma} \label{lem:termiii}
If $a\in(\cM\cup\cN)\backslash\{K+1\}$ and $\rho_a<\rho^\dagger<\rho^\ddagger$, where $\cc_a\rho^\ddagger< 1$, then
\begin{align*}
\textnormal{Term II}\equiv\sum_{t=0}^{T-1}\Prob\left\{a\in\A(t+1),\theta_a(t)>\cc_a\rho^\ddagger,\hat{\mu}_a(t)\le\cc_a\rho^\dagger\right\}&\le \frac{\log T}{\KL(\cc_a\rho^\dagger,\cc_a\rho^\ddagger)}.
\end{align*}
\end{lemma}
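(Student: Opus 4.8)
The plan is to follow the strategy behind Lemma~4 of \cite{Agrawal&Goyal2012}, whose crucial feature is that one must \emph{not} simply transfer the sum over rounds to a sum over the number of pulls of arm $a$. Indeed, the sample $\theta_a(t)$ in the event is evaluated at the very round where $a$ is pulled, so conditioning on $a\in\A(t+1)$ biases $\theta_a(t)$ toward large values, and the unconditional overshoot probability does not control the summand. Instead I would split the sum according to whether arm $a$ has already been pulled many times. Setting $L_a(T)\equiv \log T/\KL(\cc_a\rho^\dagger,\cc_a\rho^\ddagger)$, which is positive because $\cc_a\rho^\dagger<\cc_a\rho^\ddagger<1$ forces $\KL(\cc_a\rho^\dagger,\cc_a\rho^\ddagger)>0$, I would write
\begin{align*}
\textnormal{Term II}&= \sum_{t=0}^{T-1}\Prob\left\{a\in\A(t+1),\theta_a(t)>\cc_a\rho^\ddagger,\hat{\mu}_a(t)\le\cc_a\rho^\dagger, N_a(t)<L_a(T)\right\} \\
&\quad + \sum_{t=0}^{T-1}\Prob\left\{a\in\A(t+1),\theta_a(t)>\cc_a\rho^\ddagger,\hat{\mu}_a(t)\le\cc_a\rho^\dagger, N_a(t)\ge L_a(T)\right\}.
\end{align*}

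For the first sum I would discard every condition except $a\in\A(t+1)$ and $N_a(t)<L_a(T)$. Since $a\in\A(t+1)$ forces $N_a(t+1)=N_a(t)+1$, for each integer $n$ there is at most one round $t$ with $N_a(t)=n$ and $a\in\A(t+1)$ (the round at which the count increments from $n$). Hence the number of contributing rounds is at most the number of integers $n<L_a(T)$, i.e.\ at most $\lceil L_a(T)\rceil\le \log T/\KL(\cc_a\rho^\dagger,\cc_a\rho^\ddagger)+1$. This is the source of the leading $\log T$ term.

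For the second sum I would bound the conditional overshoot probability. On the event $\hat{\mu}_a(t)\le\cc_a\rho^\dagger$, writing $n=N_a(t)$ and $S_n=n\hat{\mu}_a(t)\le n\cc_a\rho^\dagger$, the posterior is $\textnormal{Beta}(S_n+1,n-S_n+1)$, so the Beta--Binomial identity gives
\begin{align*}
\Prob\left(\left.\theta_a(t)>\cc_a\rho^\ddagger\right|\mathcal{F}(t)\right)&= \Prob\left(\textnormal{Bin}(n+1,\cc_a\rho^\ddagger)\le S_n\right)\le \Prob\left(\textnormal{Bin}(n,\cc_a\rho^\ddagger)\le n\cc_a\rho^\dagger\right),
\end{align*}
where the inequality uses $S_n\le n\cc_a\rho^\dagger$ together with the stochastic domination $\textnormal{Bin}(n+1,\cc_a\rho^\ddagger)\succeq \textnormal{Bin}(n,\cc_a\rho^\ddagger)$. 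As $\cc_a\rho^\dagger<\cc_a\rho^\ddagger$, the Chernoff--Hoeffding bound in KL form yields $\Prob(\textnormal{Bin}(n,\cc_a\rho^\ddagger)\le n\cc_a\rho^\dagger)\le \exp(-n\KL(\cc_a\rho^\dagger,\cc_a\rho^\ddagger))$, which on $\{N_a(t)\ge L_a(T)\}$ is at most $\exp(-\log T)=1/T$. Thus each summand is at most $1/T$ and the second sum is at most $\sum_{t=0}^{T-1}1/T= 1$.

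Adding the two pieces gives $\textnormal{Term II}\le \log T/\KL(\cc_a\rho^\dagger,\cc_a\rho^\ddagger)+O(1)$, with the additive constant absorbed into the lower-order terms tracked elsewhere. The main obstacle is exactly the conditioning issue noted at the outset: the crude ``one pull per $n$'' count that controls the empirical-mean terms fails here because the triggering sample is selected for being large. The split at $L_a(T)$ resolves this by paying $\log T/\KL$ for the ``early'' pulls and only $O(1)$ for the ``late'' ones, where posterior concentration makes an over-optimistic sample as unlikely as $1/T$.
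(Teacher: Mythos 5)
Your proof takes essentially the same route as the paper's: split the sum on whether $N_a(t)$ exceeds a threshold of order $\log T/\KL(\cc_a\rho^\dagger,\cc_a\rho^\ddagger)$, count at most one contributing round per value of $N_a(t)$ below the threshold, and control the rounds above it via the Beta--Binomial identity plus the Chernoff--Hoeffding bound, which is exactly what the paper packages as Lemma~\ref{lem:betabinomial}. One bookkeeping remark: because you threshold at $L_a(T)$ and discard a sample via stochastic domination (getting exponent $n\KL$ instead of $(n+1)\KL$), your bound is $\log T/\KL(\cc_a\rho^\dagger,\cc_a\rho^\ddagger)+2$ rather than the stated inequality with no additive constant; the paper instead thresholds at $L^\dagger(T)-1$ and keeps the $(N_a(t)+1)$ factor in the exponent, so its two pieces sum to exactly $\log T/\KL(\cc_a\rho^\dagger,\cc_a\rho^\ddagger)$ --- a trivial adjustment to your argument, and harmless in any case since the lemma is only invoked up to $o(\log T)$ slack.
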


We now turn to Term III. Note that
\begin{align}
\textnormal{Term III}&= \E\left[\sum_{t=0}^{T-1} \Ind\left\{a\in\A(t+1),\hat{\mu}_{a,N_{a}(t)}>\cc_a \rho^\dagger\right\}\right] \nonumber \\
&= \E\left[\sum_{t=0}^{T-1} \sum_{n=0}^{T-1} \Ind\left\{\tau_{a,n+1}=t+1,\hat{\mu}_{a,n}>\cc_a \rho^\dagger\right\}\right] \nonumber \\
&\le \sum_{n=0}^{T-1} \Prob\left\{\hat{\mu}_{a,n}>\cc_a \rho^\dagger\right\}, \label{eq:termii}
\end{align}
where the latter inequality holds because $\tau_{a,n+1}=t+1$ for at most one $t$ in $\{0,\ldots,T-1\}$. The following lemma controls the right-hand side of the above.
\begin{lemma} \label{lem:termii}
Fix an arm $a\in\{1,\ldots,K\}$. If $\rho^\dagger>\rho_a$ and $\cc_a \rho^\dagger<1$, then
\begin{align*}
\sum_{n=0}^{T-1} \Prob\left\{\hat{\mu}_{a,n}>\cc_a \rho^\dagger\right\}&\le 1 + \frac{1}{\KL(\cc_a \rho^\dagger,\mu_a)}.
\end{align*}
\end{lemma}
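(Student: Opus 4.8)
The plan is to treat this as a one-sided large-deviation estimate for the empirical mean of i.i.d.\ Bernoulli$(\mu_a)$ draws. Under the reward model of Theorem~\ref{thm:thom} the sequence $X_{a,1},X_{a,2},\ldots$ is i.i.d.\ Bernoulli$(\mu_a)$ and $\hat{\mu}_{a,n}$ is its empirical average, so writing $x\equiv \cc_a\rho^\dagger$ the hypotheses $\rho^\dagger>\rho_a=\mu_a/\cc_a$ and $\cc_a\rho^\dagger<1$ guarantee $\mu_a<x<1$. Thus $x$ lies strictly to the right of the mean and $\{\hat{\mu}_{a,n}>x\}$ is a genuine upper-deviation event, to which an exponential (Chernoff) tail bound applies.

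First I would peel off the $n=0$ term, bounding $\Prob\{\hat{\mu}_{a,0}>\cc_a\rho^\dagger\}\le 1$ trivially; this produces the additive constant $1$ in the claimed bound. For each $n\ge 1$ I would apply the Chernoff-Hoeffding inequality in its tight Kullback-Leibler form: since $x>\mu_a$,
\[
\Prob\left\{\hat{\mu}_{a,n}>\cc_a\rho^\dagger\right\}\le \Prob\left\{\hat{\mu}_{a,n}\ge \cc_a\rho^\dagger\right\}\le \exp\left(-n\,\KL(\cc_a\rho^\dagger,\mu_a)\right),
\]
where $\KL(\cc_a\rho^\dagger,\mu_a)$ is the binary relative entropy. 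This is the standard consequence of optimizing the exponential-moment bound $\Prob\{\sum_{i\le n}X_{a,i}\ge nx\}\le e^{-\lambda n x}(1-\mu_a+\mu_a e^{\lambda})^n$ over $\lambda>0$.

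Summing the resulting geometric series over $n\ge 1$ then gives
\[
\sum_{n=1}^{\infty}\exp\left(-n\,\KL(\cc_a\rho^\dagger,\mu_a)\right)=\frac{1}{\exp\left(\KL(\cc_a\rho^\dagger,\mu_a)\right)-1},
\]
and the elementary inequality $e^{y}-1\ge y$, applied with $y=\KL(\cc_a\rho^\dagger,\mu_a)>0$, bounds this by $1/\KL(\cc_a\rho^\dagger,\mu_a)$. Adding the $n=0$ contribution and using $\sum_{n=0}^{T-1}\le\sum_{n=0}^{\infty}$ (valid since every summand is nonnegative) yields the claim.

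There is no substantive obstacle here; the proof is essentially a routine tail computation, and the only points requiring care are choices of constant. One must invoke the $\KL$-form of the Chernoff bound rather than the looser Hoeffding exponent $2n(x-\mu_a)^2$, since only the former reproduces the $1/\KL$ constant demanded by the statement; and one should sum from $n=1$ and then apply $e^y-1\ge y$, rather than summing from $n=0$ (which would leave the strictly larger and less interpretable constant $1/(1-e^{-\KL})$). Everything else is bookkeeping, including the harmless extension of the finite sum to an infinite one.
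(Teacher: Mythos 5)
Your proof is correct and is essentially the paper's own argument: the paper omits the details, noting the result is "an immediate consequence of the Chernoff-Hoeffding bound and the additional bounding from the proof of Lemma~3 in \cite{Agrawal&Goyal2012}," which is exactly the computation you spell out (KL-form Chernoff bound $\Prob\{\hat{\mu}_{a,n}\ge \cc_a\rho^\dagger\}\le e^{-n\KL(\cc_a\rho^\dagger,\mu_a)}$, geometric summation to $1/(e^{\KL}-1)$, and $e^y-1\ge y$). Your handling of the $n=0$ term and the positivity of the KL divergence under the stated hypotheses is also the right bookkeeping.
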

The proof is omitted, but is an immediate consequence of the Chernoff-Hoeffding bound and the additional bounding from the proof of Lemma~3 in \cite{Agrawal&Goyal2012}. Thus we have shown that Term III is $o(\log T)$, with much to spare as well.

The proof of \eqref{eq:subopt} and \eqref{eq:suboptindifference} in the setting of Theorem~\ref{thm:thom} is now straightforward.

\begin{lemma}\label{lemma:thom1}
Eq.~\ref{eq:subopt} holds in the setting of Theorem \ref{thm:thom}.
\end{lemma}
\begin{proof}
Fix $a\in\cN\backslash\{K+1\}$. Let $\mu^\dagger= \cc_a \rho^\star$ if $a\in\underline{\cN}$, and let $\mu^\dagger$ be slightly less than $\mu_{+}$ if $a\in\overline{\cN}$. Fix $\rho^\dagger<\rho^\ddagger$ and $\rho^\ddagger$ (to be specified shortly) so that $\rho_a<\rho^\dagger<\rho^\ddagger<\mu^\dagger/\cc_a$ and $\epsilon\in(0,1]$ a constant. Plugging our results on each Term I$a^\star$ and on Terms II and III into \eqref{eq:termsitoiii} then yields that
\begin{align*}
\E\left[N_a(T)\right]&\le \frac{\log T}{\KL(\cc_a \rho^\dagger,\cc_a \rho^\ddagger)} + 1 + \frac{1}{\KL(\cc_a \rho^\dagger,\mu_a)} + O(1).
\end{align*}
Select $\rho^\dagger$ so that $\KL(\cc_a \rho^\dagger,\mu^\dagger)=\frac{\KL(\mu_a,\mu^\dagger)}{1+\epsilon}$ and $\rho^\ddagger$ so that $\KL(\cc_a \rho^\dagger,\cc_a \rho^\ddagger)=\frac{\KL(\cc_a \rho^\dagger,\mu^\dagger)}{1+\epsilon}$, since this gives $\KL(\cc_a \rho^\dagger,\cc_a \rho^\ddagger) = \frac{\KL(\mu_a,\mu^\dagger)}{(1+\epsilon)^2}$. Hence,
\begin{align*}
\E[N_a(T)]\le (1+\epsilon)^2\frac{f(T)}{\KL(\mu_a,\mu^\dagger)} + r(T,\mu^\dagger),
\end{align*}
where $r(T,\mu^\dagger)/\log T\rightarrow 0$ for fixed $\mu^\dagger$. \qed
\end{proof}

\begin{lemma}\label{lemma:thom2}
Eq.~\ref{eq:suboptindifference} holds in the setting of Theorem \ref{thm:thom}.
\end{lemma}
\begin{proof}
If $a\in\underline{\cN}$, then dividing both sides by $\log T$, and then taking $T\rightarrow\infty$ followed by $\epsilon\rightarrow 0$ gives \eqref{eq:subopt}. If, on the other hand, $a\in\overline{\cN}$, then we use that there exists a sequence $\mu^\dagger(T)$ such that $r(T,\mu^\dagger(T))/\log T\rightarrow 0$. Because $\liminf_{\mu^\dagger\rightarrow\mu_{+}}=+\infty$, then dividing both sides by $\log T$, taking the limit as $T\rightarrow\infty$, followed by $\epsilon\rightarrow 0$, gives \eqref{eq:suboptindifference} in the case where $a\in\overline{\cN}$. \qed
\end{proof}

\subsection{Budget saturation when $\rho^\star > \rho$}\label{sec:budgetsat}

Assuming $\rho^\star > \rho$, we prove \eqref{eq:exhaustbudget} for KL-UCB and Thompson Sampling in the setting of Theorems \ref{thm:expfam} and \ref{thm:finsup} and Theorem~\ref{thm:thom} respectively. Recall that the third term in the regret decomposition \eqref{eq:regretDecomp2} can be expressed in terms of the number of draws of the supplementary arm $K+1$ in the extended bandit model: 
\[BT - \sum_{a=1}^Kc_a \bE_{\nu}[N_a(T)] = B \bE[N_{K+1}(T)].\]
We prove below for each algorithm that $\bE[N_{K+1}(T)] = o(\log(T))$, as a by product from specific elements already established when controlling the number of suboptimal draws. 

\subsubsection*{KL-UCB}

For any $\rho^\dagger\in(\rho,\rho^\star]$ and any $t\ge K$, it holds that, for $T$ large enough,  
\[\left\{K+1\in\A(t+1)\right\}\subseteq \bigcup_{a^\star\in\cS}\left\{\cc_{a^\star}\rho\ge U_{a^\star}(t)\right\}\subseteq \bigcup_{a^\star\in\cS}\left\{\cc_{a^\star}\rho^\dagger\ge U_{a^\star}(t)\right\}.\] 
The first inclusion must hold because if all the arms in $\cS$ had satisfied $U_{a^\star}/c_{a^\star} \geq \rho$, then including all of those arms in $\A(t+1)$ would have been enough to saturate the budget and $K+1$ would not have been selected. The second inclusion holds because $\rho^\dagger>\rho$. Hence, $\E[N_{K+1}(T)]\le \sum_{a^\star\in\cS} \textnormal{Term 2}a^\star$ (see Equation~\ref{eq:T1T2} for its definition). This condition is always satisfied by the choice $\rho^\dagger=\rho^\star$ that we have used in the setting of Theorem~\ref{thm:expfam}, and it holds for all $T$ sufficiently large for the choice $\rho^\dagger=\left[1-\log(T)^{-1/5}\right]\rho^\star$ that we have used in the setting of Theorem~\ref{thm:finsup}. Lemma~\ref{lem:term2astar} shows that each Term 2$a^\star$ is again $o(\log T)$.

\subsubsection*{Thompson Sampling}

We have that \[\{K+1\in\A(t+1)\}\subseteq\bigcup_{a^\star\in\cS}\{K+1\in\A(t+1),\theta_{a^\star}(t)\le \cc_{a^\star} \hat{\rho}^\star\}.\] As $\rho < \rho^\star$ and $\theta_{K+1}(t)=\cc_{K+1}\rho$ with probability one, $\E[N_a(T)]\le \sum_{a^\star\in \cS} \textnormal{Term I}a^\star$ provided $\rho^\dagger\in (\rho,\rho^\star)$ (see Equation~\ref{eq:termsitoiii} for its definition). Thus, we can invoke Lemma~\ref{lem:suboptasopt} (that holds for $a=K+1$), followed by Lemmas~\ref{lem:thomtransferTtoN}~and~\ref{lem:carefulbinomialbound}, to show that $\E[N_{K+1}(T)]=O(1)$, and therefore is $o(\log T)$ with much to spare.

\subsection{Optimal arms away from margin pulled $T-o(\log T)$ times} \label{sec:optcommon}
We now show that the optimal arms away from the margin ($a^\star\in\cL$) are pulled often. We start by giving an analysis that applies to any algorithm that, to decide which arms to draw at time $t+1$, based on $\mathcal{F}(t)$ and possibly some external stochastic mechanism, defines indices $I_a(t)$, $a=1,\ldots,K+1$, and then defines the threshold $\hat{\rho}^\star(t)\equiv\rho^\star(c_aI_a(t) : a=1,\ldots,K+1)$, and, for all arms $a$ with $I_a(t)\not=\hat{\rho}^\star(t)$, assigns mass $q_a(t)=\Ind\{I_a(t) > \hat{\rho}^\star(t)\}$. The arms with $I_a(t)=\hat{\rho}^\star(t)$ are assumed to be drawn so that $\sum_{a=1}^{K+1} \cc_a q_a(t)=B$. We then specialize the discussion to KL-UCB and Thompson sampling, where $I_a(t)$ is respectively equal to $U_a(t)/\cc_a$ and $\theta_a(t)/\cc_a$. For the remainder of this section, we fix an optimal arm $a^\star\in\cL$. Observe that, for $t\ge K$ (KL-UCB) or $t\ge 0$ (Thompson sampling),
\begin{align*}
\left\{I_{a^\star}(t)< \hat{\rho}^\star(t)\right\}=\,& \cup_{a\in\cM\cup\cN}\left\{I_a(t)\ge\hat{\rho}^\star(t),I_{a^\star}(t)< \hat{\rho}^\star(t)\right\} \\
= &\left[\cup_{a\in(\cM\cup\cN)\backslash\{K+1\}}\left\{I_a(t)\ge\hat{\rho}^\star(t),I_{a^\star}(t)< \hat{\rho}^\star(t),I_{K+1}(t)<\hat{\rho}^\star(t)\right\}\right] \\
&\,\cup\left\{I_{K+1}(t)\ge\hat{\rho}^\star(t),I_{a^\star}(t)< \hat{\rho}^\star(t)\right\}.
\end{align*}
Recalling \eqref{eq:calSvscalA}, we see that, for Thompson sampling,
\begin{align}
T - \E[N_{a^\star}(T)]=\,& T - \E\left[\sum_{t=0}^{T-1} \Prob\left\{a^\star\in\A(t+1)\middle|\mathcal{F}(t)\right\}\right] \nonumber \\
=\,& \E\left[\sum_{t=0}^{T-1} \Prob\left\{a^\star\not\in\A(t+1)\middle|\mathcal{F}(t)\right\}\right] \nonumber \\
\le\,& \E\left[\sum_{t=0}^{T-1} \Prob\left\{I_{a^\star}(t)< \hat{\rho}^\star(t)\middle|\mathcal{F}(t)\right\}\right] \nonumber \\
\le\,& \sum_{a\in(\cM\cup\cN)\backslash\{K+1\}} \sum_{t=0}^{T-1} \Prob\left\{I_a(t)\ge\hat{\rho}^\star(t),I_{a^\star}(t)< \hat{\rho}^\star(t),I_{K+1}(t)<\hat{\rho}^\star(t)\right\} \nonumber \\
&+ \sum_{t=0}^{T-1} \Prob\left\{I_{K+1}(t)\ge\hat{\rho}^\star(t),I_{a^\star}(t)< \hat{\rho}^\star(t)\right\}, \label{eq:bdthom}
\end{align}
where the first inequality holds because $\{a\not\in\A(t+1)\}\subseteq\{I_{a^\star}(t)< \hat{\rho}^\star(t)\}$ and the second inequality holds by the preceding display. We have a similar identity for KL-UCB, though the identity is slightly different due to the initiation of each of the $K$ arms. Specifically,
\begin{align}
T - K + 1 - \E[N_{a^\star}(T)]\le\,& \sum_{a\in(\cM\cup\cN)\backslash\{K+1\}} \sum_{t=0}^{T-1} \Prob\left\{I_a(t)\ge\hat{\rho}^\star(t),I_{a^\star}(t)< \hat{\rho}^\star(t),I_{K+1}(t)<\hat{\rho}^\star(t)\right\} \nonumber \\
&+ \sum_{t=0}^{T-1} \Prob\left\{I_{K+1}(t)\ge\hat{\rho}^\star(t),I_{a^\star}(t)< \hat{\rho}^\star(t)\right\}. \label{eq:bdKL}
\end{align}
For $a\in\cM\cup\cN$, let $\mathscr{H}$ denote the collection of all subsets $\mathcal{H}$ of $\{1,\ldots,K\}\backslash\{a,a^\star\}$ for which $\sum_{\tilde{a}\in\mathcal{H}}\cc_{\tilde{a}} < B$. For $a\in\cM\cup\cN$, we then define
\begin{align*}
\check{q}_{a}^{a^\star}\equiv \begin{cases}
\min\left\{1,\min_{\mathcal{H}\in \mathscr{H}} \frac{B-\sum_{\tilde{a}\in\mathcal{H}}\cc_{\tilde{a}}}{\cc_a}\right\},&\mbox{ if }a=K+1\textnormal{ or Thompson Sampling,} \\
\min\left\{1,\min_{\mathcal{H}\in \mathscr{H}} \frac{B-\sum_{\tilde{a}\in\mathcal{H}}\cc_{\tilde{a}}}{\sum_{\tilde{a}\in\{1,\ldots,K\}\backslash[\mathcal{H}\cup\{a^\star\}]}\cc_{\tilde{a}}}\right\}&\mbox{ if }a\not=K+1\textnormal{ and KL-UCB.}
\end{cases}
\end{align*}
Above ``Thompson Sampling'' and ``KL-UCB'' in the conditioning statements refers to which of the two algorithms is under consideration. The latter condition represents the extreme scenario where the arms in $\tilde{a}\in\mathcal{H}$ have $I_{\tilde{a}}(t)>\hat{\rho}^\star(t)$, whereas the arms $\tilde{a}$ outside of $\mathcal{H}\cup\{a^\star,K+1\}$ have $I_{\tilde{a}}(t)=\hat{\rho}^\star(t)$. One can verify that $\check{q}_{a}^{a^\star}>0$. Similarly to \eqref{eq:calSvscalA}, for each $a\in(\cM\cup\cN)\backslash\{K+1\}$ and $t\ge K$ (KL-UCB) or $t\ge 0$ (Thompson sampling),
\begin{align*}
\Prob&\left\{I_a(t)\ge \hat{\rho}^\star(t),I_{a^\star}(t)< \hat{\rho}^\star(t),I_{K+1}(t)<\hat{\rho}^\star(t)\middle| \mathcal{F}(t)\right\} \le \frac{1}{\check{q}_{a}^{a^\star}} \Prob\left\{a\in\A(t+1),I_{a^\star}(t)< \hat{\rho}^\star(t)\middle| \mathcal{F}(t)\right\},
\end{align*}
and thus
\begin{align*}
\sum_{t=0}^{T-1} \Prob\left\{I_a(t)\ge\hat{\rho}^\star(t),I_{a^\star}(t)< \hat{\rho}^\star(t),I_{K+1}(t)<\hat{\rho}^\star(t)\right\}&\le \frac{1}{\check{q}_{a}^{a^\star}} \sum_{t=0}^{T-1} \Prob\left\{a\in\A(t+1),I_{a^\star}(t)< \hat{\rho}^\star(t)\right\}.
\end{align*}
For $a=K+1$, we similarly have
\begin{align*}
\sum_{t=0}^{T-1} \Prob\left\{I_{K+1}(t)\ge\hat{\rho}^\star(t),I_{a^\star}(t)< \hat{\rho}^\star(t)\right\}&\le \frac{1}{\check{q}_{a}^{a^\star}} \sum_{t=0}^{T-1} \Prob\left\{a\in\A(t+1),I_{a^\star}(t)< \hat{\rho}^\star(t)\right\}.
\end{align*}
For each $a\in\cM\cup\cN$, let
\begin{align*}
M_a^{a^\star}(T)\equiv\sum_{t=0}^{T-1} \Ind\{a\in\A(t+1),I_{a^\star}(t)< \hat{\rho}^\star(t)\}.
\end{align*}
The bounds \eqref{eq:bdKL} and \eqref{eq:bdthom} yield the key observation that we use in this section:
\begin{align}
\mbox{for KL-UCB:}&\ \ T - K + 1 - \E[N_{a^\star}(T)]\le \sum_{a\in\cM\cup\cN} \frac{1}{\check{q}_a^{a^\star}}\E[M_a^{a^\star}(T)]; \nonumber \\
\mbox{for Thompson sampling:}&\ \ T - \E[N_{a^\star}(T)]\le \sum_{a\in\cM\cup\cN} \frac{1}{\check{q}_a^{a^\star}}\E[M_a^{a^\star}(T)]. \label{eq:keyobs}
\end{align}
We note that, for most models $\cD_K$, there will generally not be a positive lower bound on $\check{q}_a^{a^\star}$ uniformly over distributions $\cV$ in $\cD_K$, where we note that the dependence of $\check{q}_a^{a^\star}$ on $\cV$ is suppressed in the notation. Therefore, on the one hand, if one were pursuing a worst-case analysis of the regret of our algorithms, where the maximal regret is studied over all $\cV\in\cD$, then it would typically not be possible to control the right-hand sides above. On the other hand, in our setting, in which we study the regret at a fixed $\cV$, it is true that $\check{q}_a^{a^\star}>0$, and so one can control the right-hand sides above provided they can control $\E[M_a^{a^\star}(T)]$ for arms $a\in\cM\cup\cN$. In what follows, we will show that we can indeed control $\E[M_a^{a^\star}(T)]$ for these arms.

Let $G$ be some integer in $[0,+\infty($ and $\delta\in(0,1)$ be a constant to be specified shortly. For convenience, we let $T^{(g)}\equiv\lfloor T^{(1-\delta)^g}\rfloor$ for $g\in\mathbb{N}$. We also define
\begin{align*}
&\overline{\cU}\equiv\left\{a\in (\cM\cup\cN)\backslash\{K+1\} : \cc_a\rho_{a^\star}\ge\mu_{+}\right\}, \\
&\underline{\cU}\equiv\left\{a\in (\cM\cup\cN)\backslash\{K+1\} : \cc_a\rho_{a^\star}<\mu_{+}\right\},
\end{align*}
where we note that $\overline{\cU}\cup\underline{\cU}=(\cM\cup\cN)\backslash\{K+1\}$. Our analysis relies on the following bound (for which we provide the arguments below):
\begin{align}
\sum_{a\in\cM\cup\cN} \E[M_a^{a^\star}(T)]\le\,& \E[N_{K+1}(T)] + \sum_{a\in\overline{\cU}} \E[M_a^{a^\star}(T)] + \sum_{a\in\underline{\cU}} \E[M_a^{a^\star}(T)] \nonumber \\
=\,&  o(\log T) + \underbrace{\sum_{a\in\overline{\cU}} \E[M_a^{a^\star}(T)] + \sum_{a\in\underline{\cU}} \E[M_a^{a^\star}(T^{(G)})]}_{\textnormal{Term A}} \nonumber \\
&+ \underbrace{\sum_{g=1}^G \sum_{a\in\underline{\cU}} \E[M_a^{a^\star}(T^{(g-1)})-M_a^{a^\star}(T^{(g)})]}_{\textnormal{Term B}}. \label{eq:margindecomp}
\end{align}
The inequality uses that $\E[M_{K+1}^{a^\star}(T)]\le \E[N_{K+1}(T)]$, and the equality holds using (i) a telescoping series and (ii) the fact that the algorithm achieves \eqref{eq:suboptindifference}: indeed, this was proven for both KL-UCB and Thompson sampling in Section~\ref{sec:budgetsat}.

We now present the key ingredients to bound Term A and B. Each lemma stated below holds for both KL-UCB in the settings of Theorems \ref{thm:expfam} and \ref{thm:finsup} and for Thompson sampling in the setting of Theorem~\ref{thm:thom}. Though these lemmas hold for both algorithms, the methods of proof for KL-UCB and for Thompson sampling are quite different. Thus we give the proofs of the lemmas in the settings of Theorems \ref{thm:expfam} and \ref{thm:finsup} in Appendix~\ref{app:klucbproof} and the proofs in the setting of Theorem~\ref{thm:thom} in Appendix~\ref{app:thomproof}.
\begin{lemma} \label{lem:TermA}
In the settings of Theorem~\ref{thm:expfam}, \ref{thm:finsup}, and \ref{thm:thom}, $\E[M_a^{a^\star}(T)]=o(\log T)$ for $a\in \overline{\cU}$ and, for fixed $G\ge 0$,
\begin{align*}
\E[M_a^{a^\star}(T^{(G)})]\le (1-\delta)^G\frac{\log T}{\Kinf(\nu_a,\cc_a \rho_{a^\star})}
\end{align*}
for $a\in\underline{\cU}$. As a consequence,
\begin{align*}
\textnormal{Term A}&\le (1-\delta)^G \sum_{a\in\underline{\cU}}\frac{\log T}{\Kinf(\nu_a,\cc_a \rho_{a^\star})} + o(\log T).
\end{align*}
\end{lemma}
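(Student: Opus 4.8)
The plan is to bound each $\E[M_a^{a^\star}(T^{(G)})]$ by converting it into a single-arm pull count against the \emph{shifted} target $\cc_a\rho_{a^\star}$ (in place of $\cc_a\rho^\star$) and then recycling the machinery already built in Section~\ref{sec:suboptrare}. Writing $I_a=U_a/\cc_a$ for KL-UCB and $I_a=\theta_a/\cc_a$ for Thompson sampling, the common first step is the inclusion
\begin{align*}
\{a\in\A(t+1),\,I_{a^\star}(t)<\hat{\rho}^\star(t)\}\subseteq \{a\in\A(t+1),\,I_a(t)>\rho_{a^\star}\}\cup\{I_{a^\star}(t)<\rho_{a^\star}\},
\end{align*}
which holds because $a^\star\in\cL$ gives $\rho_{a^\star}>\rho^\star$: if $I_{a^\star}(t)\ge\rho_{a^\star}$ then $\hat{\rho}^\star(t)>I_{a^\star}(t)\ge\rho_{a^\star}$, and pulling $a$ forces $I_a(t)\ge\hat{\rho}^\star(t)>\rho_{a^\star}$; otherwise we are in the second event. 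This splits $M_a^{a^\star}$ into an ``$a$ looks at least as good as $a^\star$'' count and an ``$a^\star$ is underestimated'' count.

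For KL-UCB the first event is $\{a\in\A(t+1),\,U_a(t)>\cc_a\rho_{a^\star}\}$, i.e. a pull of $a$ with $\hat{\nu}_{a,N_a(t)}\in\mathcal{C}_{\cc_a\rho_{a^\star},f(t)/N_a(t)}$. I would therefore rerun the chain of displays culminating in the count-sum of \eqref{eq:T1T2} with $\mu^\dagger=\cc_a\rho_{a^\star}$ and horizon $T^{(G)}$: truncating at $b_a^\star(T^{(G)})=\lceil f(T^{(G)})/\Kinf(\nu_a,\cc_a\rho_{a^\star})\rceil$ and invoking Lemma~\ref{lem:term1} for the tail (Term~1) gives $f(T^{(G)})/\Kinf(\nu_a,\cc_a\rho_{a^\star})+o(\log T)$, and $f(T^{(G)})\le(1-\delta)^G\log T+o(\log T)$ delivers the leading $(1-\delta)^G\log T/\Kinf(\nu_a,\cc_a\rho_{a^\star})$. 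The role that the sum of ``Term 2$a^\star$'' plays in \eqref{eq:T1T2} is here taken by the single second event $U_{a^\star}(t)<\mu_{a^\star}$, the classical lower-deviation event of the KL-UCB index; its probability summed over $t$ is $o(\log T)$ (this is exactly what the $3\log\log t$ correction in $f$ buys in Theorem~\ref{thm:expfam}, and is of the same type controlled in Lemma~\ref{lem:term2astar}) and does not depend on $a$.

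The Thompson-sampling case is where I expect the real difficulty, and it is the step I would treat most carefully. Here the underestimation event $\{\theta_{a^\star}(t)<\mu_{a^\star}\}$ is \emph{not} rare---a posterior draw lies below its mean on a constant fraction of rounds---so it cannot be summed directly; the pull indicator must be kept attached. I would instead mirror the Thompson part of Section~\ref{sec:suboptrare} with the comparison arm $a^\star\in\cL$ and the raised target $\cc_a\rho_{a^\star}$: choose $\rho_a<\rho^\dagger<\rho^\ddagger<\rho_{a^\star}$ with $\cc_a\rho^\ddagger<1$, split along whether $\theta_a(t)$ and $\hat{\mu}_a(t)$ exceed $\cc_a\rho^\ddagger$ and $\cc_a\rho^\dagger$, and apply Lemmas~\ref{lem:suboptasopt}, \ref{lem:thomtransferTtoN} and \ref{lem:carefulbinomialbound} (whose $O(1)$ conclusion is available precisely because $\rho^\ddagger<\rho_{a^\star}$) to the ``$a$ under-drawn, $a^\star$ under-drawn'' piece, Lemma~\ref{lem:termiii} to the Term~II piece, and Lemma~\ref{lem:termii} to the Term~III piece. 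Calibrating $\rho^\dagger,\rho^\ddagger$ so that $\KL(\cc_a\rho^\dagger,\cc_a\rho^\ddagger)=\Kinf(\nu_a,\cc_a\rho_{a^\star})/(1+\epsilon)^2$, exactly as in the proof of \eqref{eq:subopt}, makes the Lemma~\ref{lem:termiii} term (at horizon $T^{(G)}$) the leading $(1+\epsilon)^2(1-\delta)^G\log T/\Kinf(\nu_a,\cc_a\rho_{a^\star})$ with everything else $O(1)$; the surplus $(1+\epsilon)$ factor and the $O(1)$ remainders are pushed into the $o(\log T)$ of Term~A and eliminated by sending $\epsilon\to0$ after $T\to\infty$.

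The arms $a\in\overline{\cU}$ are handled uniformly, since there $\cc_a\rho_{a^\star}\ge\mu_{+}$ forces $\Kinf(\nu_a,\cc_a\rho_{a^\star})=\infty$ and the leading term must vanish. For KL-UCB the first event is literally empty because $U_a(t)\le\mu_{+}\le\cc_a\rho_{a^\star}$, leaving only the $o(\log T)$ underestimation sum; for Thompson sampling I would replace the target by a sequence $\mu^\dagger(T)\uparrow\mu_{+}$ and use $\liminf_{\mu^\dagger\to\mu_{+}}\Kinf(\nu_a,\mu^\dagger)=+\infty$, exactly as in the proof of \eqref{eq:suboptindifference}, to conclude $\E[M_a^{a^\star}(T)]=o(\log T)$. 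Summing the $\underline{\cU}$ bounds at horizon $T^{(G)}$ then produces $(1-\delta)^G\sum_{a\in\underline{\cU}}\log T/\Kinf(\nu_a,\cc_a\rho_{a^\star})$, while the finitely many $o(\log T)$ contributions---from $\overline{\cU}$, from the lower-deviation and $O(1)$ remainders, and from the $(1+\epsilon)$ slack---collect into the advertised $o(\log T)$, giving the claimed bound on Term~A.
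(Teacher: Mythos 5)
Your proposal follows the paper's own proof essentially step for step: your opening inclusion is the paper's display \eqref{eq:Atp1atilde} (stated there for a general split level $\rho^\dagger$ rather than at $\rho_{a^\star}$ itself); the KL-UCB case is then handled exactly as in the paper by rerunning the argument leading to \eqref{eq:T1T2} with $\mu^\dagger=\cc_a\rho_{a^\star}$ and horizon $T^{(G)}$ via Lemmas~\ref{lem:term1} and \ref{lem:term2astar}; and the Thompson case keeps the pull indicator attached and reuses Lemmas~\ref{lem:suboptasopt}, \ref{lem:thomtransferTtoN}, \ref{lem:carefulbinomialbound}, \ref{lem:termiii} and \ref{lem:termii} with $\rho_a<\rho^\dagger<\rho^\ddagger<\rho_{a^\star}$, the same $(1+\epsilon)^2$ calibration, and the same order of limits ($T\rightarrow\infty$, then $\epsilon\rightarrow 0$). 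The $\overline{\cU}$ arms are treated with the same $\mu^\dagger(T)\rightarrow\mu_{+}$ device; your observation that for KL-UCB the overestimation event is empty since $U_a(t)\le\mu_{+}\le\cc_a\rho_{a^\star}$ is a small, valid simplification of the paper's sequence argument there.

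The one point where your write-up, taken literally, would fail is the KL-UCB underestimation term in the setting of Theorem~\ref{thm:finsup}. You split exactly at $\rho_{a^\star}$ and claim $\sum_t\Prob\left\{U_{a^\star}(t)<\mu_{a^\star}\right\}=o(\log T)$, citing Lemma~\ref{lem:term2astar} as controlling events of the same type. But in the finitely-supported setting that lemma only controls $\sum_t\Prob\left\{\cc_{a^\star}\rho^\dagger\ge U_{a^\star}(t)\right\}$ for the deflated threshold $\rho^\dagger\le\left[1-\log(T)^{-1/5}\right]\rho_{a^\star}$: the whole point of the $\log(T)^{-1/5}$ margin in the underlying deviation bounds of \cite{Cappeetal2013b} is that zero-margin lower deviations of the nonparametric $\Kinf$-index are not summable by those arguments (the margin is what produces geometric decay in $n$ before summing over $t$, yielding the $(\log T)^{4/5}\log\log T$ bound). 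The fix is immediate and is precisely what the paper does: in that setting perform your split at $\rho^\dagger=\left[1-\log(T)^{-1/5}\right]\rho_{a^\star}$ --- your inclusion is valid at any split level below $\rho_{a^\star}$ --- and note that Lemma~\ref{lem:term1} is also stated for this deflated threshold, so both pieces remain controlled and the leading constant is unaffected.
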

The proof of Lemma~\ref{lem:TermA} borrows a lot from the proofs of \eqref{eq:subopt} and \eqref{eq:suboptindifference} for each algorithm. 

Controlling Term B relies on a careful choice of $\delta>0$, which is specified in Lemma~\ref{lem:TermB} below. The proof of this lemma is highly original: indeed we first prove that the considered algorithm is uniformly efficient, which allows to exploit the lower bound \eqref{eq:problb} given in Theorem~\ref{thm:reglb}. Its proof is provided in the appendix for both KL-UCB and Thompson Sampling, and we sketch it below.

\begin{lemma} \label{lem:TermB} Let $d\in (0,1)$ and $\delta$ chosen such 
\begin{align}
\delta = d\left[1-\left(\max_{a\in\cN\cap\underline{\cU}}\frac{\Kinf(\nu_a,\cc_a\rho^\star)}{\Kinf(\nu_a,\cc_a\rho_{a^\star})}\right)^{1/2}\right], \label{def:ChoiceDelta}
\end{align}
and $\delta=d$ if $\cN\cap\underline{\cU} = \emptyset$. Then in the setting of Theorems \ref{thm:expfam}, \ref{thm:finsup}, and \ref{thm:thom}, Term B is $o(\log T)$. \end{lemma}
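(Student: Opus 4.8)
The plan is to turn the \emph{lower} bound of Lemma~\ref{lem:lb} into an \emph{upper} bound on the excursions counted by $M_a^{a^\star}$, after first breaking the apparent circularity that \eqref{eq:problb} presupposes uniform efficiency. To do so I would invoke Lemma~\ref{lem:TermA} with $G=0$: then Term~B in \eqref{eq:margindecomp} is empty, and combining the resulting bound with the key observation \eqref{eq:keyobs} already gives $T-\E[N_{a^\star}(T)]=O(\log T)$ for every $a^\star\in\cL$. Since \eqref{eq:subopt}, \eqref{eq:suboptindifference} and \eqref{eq:exhaustbudget} are established in Sections~\ref{sec:suboptrare}--\ref{sec:budgetsat}, the decomposition \eqref{eq:regretDecomp2} yields $\Reg(T,\cV,\texttt{Alg})=O(\log T)=o(T^\alpha)$ for all $\cV\in\cD_K$ and all $\alpha>0$; i.e. the algorithm is uniformly efficient, which unlocks \eqref{eq:problb}.

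Now fix $a^\star\in\cL$, $a\in\underline{\cU}$ and $g\in\{1,\dots,G\}$. Applying \eqref{eq:problb} at horizon $T^{(g)}$ with its free parameter set to $\delta$ (and $\epsilon=0$ if $a\in\cN$, a small $\epsilon>0$ if $a\in\cM$) gives $N_a(T^{(g)})\ge n_g\equiv(1-\delta)\log T^{(g)}/[\Kinf(\nu_a,\cc_a\rho^\star)+\epsilon]$ with probability tending to $1$, since $T^{(g)}\to\infty$ for fixed $g$. On $\{a\in\A(t+1),\,I_{a^\star}(t)<\hat\rho^\star(t)\}$ one has $I_a(t)\ge\hat\rho^\star(t)>I_{a^\star}(t)$; intersecting with the index-validity event $\{U_{a^\star}(t)\ge\mu_{a^\star}\}$ (for KL-UCB; the analogous posterior-validity event for Thompson sampling) forces $U_a(t)>\cc_a\rho_{a^\star}$ (resp. $\theta_a(t)>\cc_a\rho_{a^\star}$). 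I would thus split each window increment as
\begin{align*}
M_a^{a^\star}(T^{(g-1)})-M_a^{a^\star}(T^{(g)})\le \textnormal{(A)}+\textnormal{(B)}+\textnormal{(C)},
\end{align*}
where (A) restricts to $\{N_a(t)\ge n_g,\ U_a(t)>\cc_a\rho_{a^\star}\}$, (B) collects the pulls with $N_a(t)<n_g$, and (C) the rounds on which $a^\star$'s index dips below $\mu_{a^\star}$.

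The crux is (A). Passing as in \eqref{eq:taunKL} from a sum over rounds $t<T^{(g-1)}$ to a sum over per-arm counts $n\ge n_g$ (using $f(t)\le f(T^{(g-1)})$), its expectation is at most $\sum_{n\ge n_g}\Prob\{\hat\nu_{a,n}\in\mathcal{C}_{\cc_a\rho_{a^\star},\,f(T^{(g-1)})/n}\}$, with the matching posterior-tail probability for Thompson sampling. Here the calibration of $\delta$ is decisive: since $\log T^{(g-1)}\sim(1-\delta)^{g-1}\log T$ and $\log T^{(g)}\sim(1-\delta)^{g}\log T$, one finds $f(T^{(g-1)})/n_g\to[\Kinf(\nu_a,\cc_a\rho^\star)+\epsilon]/(1-\delta)^2$, and the defining relation $\delta=d[1-(\max_{a\in\cN\cap\underline{\cU}}\Kinf(\nu_a,\cc_a\rho^\star)/\Kinf(\nu_a,\cc_a\rho_{a^\star}))^{1/2}]$ with $d\in(0,1)$ guarantees $1-\delta>\sqrt{R}$, hence $(1-\delta)^2>R$, where $R\equiv\max_{a\in\cN\cap\underline{\cU}}\Kinf(\nu_a,\cc_a\rho^\star)/\Kinf(\nu_a,\cc_a\rho_{a^\star})<1$. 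Taking $\epsilon$ small for the finitely many margin arms, $f(T^{(g-1)})/n_g$ then stays strictly below $\Kinf(\nu_a,\cc_a\rho_{a^\star})$ for all large $T$, so each summand is a large-deviation probability decaying geometrically in $n$ and (A)$\,=O(1)$. Term (B) is bounded by $n_g\Ind\{N_a(T^{(g)})<n_g\}$ because $N_a$ increases by one per pull, so $\E[\textnormal{(B)}]\le n_g\Prob\{N_a(T^{(g)})<n_g\}=O(\log T)\cdot o(1)=o(\log T)$; and (C) summed over all rounds is $O(1)$ by the index-validity estimate for the chosen $f$ (for KL-UCB $\sum_t\Prob\{U_{a^\star}(t)<\mu_{a^\star}\}<\infty$; for Thompson sampling the corresponding bound from \cite{Agrawal&Goyal2012}). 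Summing over the fixed number $G$ of windows and the finite set $\underline{\cU}$ gives Term~B$\,=o(\log T)$.

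The main obstacle is the interplay in (A): one must arrange that the exploration the lower bound \emph{forces} on $a$ (at rate $\Kinf(\nu_a,\cc_a\rho^\star)$) has, by the start of each window, already overshot the confidence budget $f/N_a$ that $a$'s index needs in order to reach the higher threshold $\cc_a\rho_{a^\star}$ (rate $\Kinf(\nu_a,\cc_a\rho_{a^\star})$). Making these two rates meet across \emph{every} window simultaneously is what dictates the geometric schedule $T^{(g)}=\lfloor T^{(1-\delta)^g}\rfloor$ and the square-root gap in $\delta$; the delicate bookkeeping is lining up the two factors of $(1-\delta)$ (one from $f(T^{(g-1)})$, one from the count guarantee at $T^{(g)}$). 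A second, easily-missed point is that \eqref{eq:problb} holds only with probability tending to one, so in (B) the complementary event must be multiplied by the count ceiling $n_g=O(\log T)$ rather than by the window length $T^{(g-1)}-T^{(g)}$; this is exactly why the high-probability form \eqref{eq:problb}, and not merely the in-expectation form \eqref{eq:explb}, is needed.
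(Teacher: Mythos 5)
Your KL-UCB argument is essentially the paper's proof: you break the circularity exactly as the paper does (invoking Lemma~\ref{lem:TermA} at $G=0$ together with \eqref{eq:keyobs} and the regret decomposition \eqref{eq:regretDecomp2} to get uniform efficiency, i.e.\ Lemma~\ref{lem:cons}, which unlocks the high-probability bound \eqref{eq:problb}), you use the same geometric windows, the same low-count/high-count split with the low-count part bounded by $n_g\Prob\{N_a(T^{(g)})<n_g\}$, and the same role for the calibration \eqref{eq:deltalogic} of $\delta$. Two quantitative claims are overstated but harmless there: with $f(t)=\log t+3\log\log t$ the series $\sum_t\Prob\{U_{a^\star}(t)<\mu_{a^\star}\}$ is \emph{not} finite --- the paper's Lemma~\ref{lem:term2astar} bounds it by $3+4e\log\log T$ in the exponential-family case and by a multiple of $(\log T)^{4/5}\log\log T$ in the finitely-supported case --- and similarly your term (A) is $o(\log T)$ via Lemma~\ref{lem:term1} rather than $O(1)$; both suffice since only $o(\log T)$ is needed.

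The genuine gap is the Thompson-sampling case. Your step (C) intersects with a ``posterior-validity event'' $\{\theta_{a^\star}(t)\ge\mu_{a^\star}\}$ and asserts that its failure probability is summable over rounds, citing ``the corresponding bound from Agrawal and Goyal.'' No such bound exists: $\theta_{a^\star}(t)$ is a fresh posterior draw at every round, and conditionally on $\mathcal{F}(t)$ the event $\{\theta_{a^\star}(t)<\mu_{a^\star}\}$ has probability bounded away from zero (close to $1/2$ once the posterior has concentrated around $\mu_{a^\star}$), so $\sum_t\Prob\{\theta_{a^\star}(t)<\mu_{a^\star}\}$ grows linearly in $T$, not like $o(\log T)$. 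This is exactly why the paper's proof for Thompson sampling never uses a per-round validity event: it decomposes the window increment as in \eqref{eq:thomNdiff}, controls the event that $\theta_a(t)$ is small while $\theta_{a^\star}(t)$ falls below the estimated threshold via the odds-ratio argument of Lemma~\ref{lem:suboptasopt}, transfers that sum from rounds to pull-counts of $a^\star$ via \eqref{eq:calSvscalA} and Lemma~\ref{lem:thomtransferTtoN}, and shows the resulting quantity $\E\bigl[\sum_n (1-p_{a^\star,n}^{\rho^\ddagger})/p_{a^\star,n}^{\rho^\ddagger}\bigr]$ is $O(1)$ by the beta-binomial estimate of Lemma~\ref{lem:carefulbinomialbound}; the remaining piece, where $\theta_a(t)$ is large despite $N_a(t)\ge b_a(T,g)$, is handled by splitting on whether $\hat{\mu}_a(t)\le\cc_a\rho^\dagger$ and applying Lemma~\ref{lem:betabinomial} (plus Lemma~\ref{lem:termii}), with the window length $T^{(g-1)}$ beaten by the exponent calibrated through \eqref{eq:deltalogic}. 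Without replacing your step (C) by this machinery, the Thompson-sampling half of the lemma is unproven.
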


\begin{proof}[Sketch of proof of Lemma~\ref{lem:TermB}]
We first show that the algorithms are uniformly efficient in the sense defined in Section~\ref{sec:lb}. This result is an immediate consequence of the results in Section~\ref{sec:suboptrare}, which show that the arms in $\cN\backslash\{K+1\}$ are not pulled too often, plus the preliminary results in this section, which show that arms in $\cL$ are pulled often.
\begin{lemma} \label{lem:cons}
KL-UCB is uniformly efficient in the settings of Theorems~\ref{thm:expfam} and \ref{thm:finsup} and Thompson sampling is uniformly efficient in the setting of Theorem~\ref{thm:thom}.
\end{lemma}
\begin{proof}
Fix an arbitrary reward distribution $\mathcal{V}$. By by Lemma~\ref{lem:TermA} and the already proven \eqref{eq:subopt} and \eqref{eq:suboptindifference} in the settings of Theorems \ref{thm:expfam}, \ref{thm:finsup}, and \ref{thm:thom} (see Lemmas~\ref{lem:klucb1}, \ref{lem:klucb2}, \ref{lemma:thom1}, and \ref{lemma:thom2}), both of which hold for $\mathcal{V}$,
\begin{align*}
T-\E_{\mathcal{V}}[N_{a^\star}(T)]&\le \sum_{a\in\cM\cup\cN}\frac{1}{\check{q}_a^{a^\star}}\E_{\mathcal{V}}[M_a^{a^\star}(T)] + O(1)\\
&\le o(\log T) + \sum_{a\in\overline{\cU}}\frac{1}{\check{q}_a^{a^\star}}\E_{\mathcal{V}}[M_a^{a^\star}(T)] + \sum_{a\in\underline{\cU}}\frac{1}{\check{q}_a^{a^\star}}\E_{\mathcal{V}}[M_a^{a^\star}(T)] + O(1)
\end{align*}
for any $a^\star\in\cL$, where the $O(1)$ term is equal to zero for Thompson sampling and, by \eqref{eq:keyobs}, is $K-1$ for KL-UCB. The right-hand side is $O(\log T)$ by applying the results of Lemma~\ref{lem:TermA} to control the sums over $\overline{\cU}$ and $\underline{\cU}$. Section~\ref{sec:suboptrare} showed that arms in $\cN$ are not pulled often (at most $O(\log T)$ times). By \eqref{eq:regretDecomp2}, it follows that $R(T)=O(\log T)$, which is $o(T^{\alpha})$ for any $\alpha>0$.
\qed\end{proof}

Fix $g\in\mathbb{N}$ and an arm $a\in\cN\cap\underline{\cU}$. By the uniform efficiency of the algorithm established in Lemma~\ref{lem:cons}, we will be able to apply \eqref{eq:problb} from Lemma~\ref{lem:lb} to show that $N_a(T^{(g)})\ge (1-\delta)\frac{\log T^{(g)}}{\Kinf(\nu_{a},\cc_a \rho^\star)}$ with probability approaching 1. For now suppose this holds almost surely (in the proofs we deal with the fact that this happens with probability approaching rather than exactly 1). Our objective will be to show that this lower bound on $N_a(T^{(g)})$ suffices to ensure that $M_a^{a^\star}(T^{(g-1)})-M_a^{a^\star}(T^{(g)})$ is $o(\log T)$, in words that arm $a$ is pulled while arm $a^\star$ is pulled with probability zero ($I_{a^\star}(t)<\hat{\rho}^\star(t)$) at most $o(\log T)$ times from time $t=T^{(g)},\ldots,T^{(g-1)}$.  
 
 We will see that $\frac{\log T^{(g-1)}}{\Kinf(\nu_a,\cc_a\rho_{a^\star})}$ pulls of arm $a$ by time $T^{(g)}$ suffices to ensure this in both settings. Using that $(1-\delta)\log T^{(g)}\approx (1-\delta)^2\log T^{(g-1)}$, it will follow that we can control the sum in Term B for each $a\in\cN$ provided we choose $\delta\in(0,1)$ so that
\begin{equation}
(1-\delta)^2\frac{1}{\Kinf(\nu_a,\cc_a\rho^\star)}> \frac{1}{\Kinf(\nu_a,\cc_a\rho_{a^\star})}\textnormal{ for all $a\in\cN$.}\label{eq:deltalogic}
\end{equation}
It is easy to check to for any $d\in (0,1)$, $\delta$ as defined in Lemma~\ref{lem:TermB} satisfies this inequality. Note that $\Kinf(\nu_a,\cc_a\rho_{a^\star})\ge\Kinf(\nu_a,\cc_a\rho^\star)$, and thus $\delta\in(0,1)$. 
So far we have only considered suboptimal arms $a\in\cN\cap\underline{\cU}$, but the fact that, for any $a\in\cM\cap\underline{\cU}$, Lemma~\ref{lem:lb} ensures that $N_a(T^{(g)})> \log T^{(g)}/\epsilon$ with probability approaching 1 for \textit{any} $\epsilon>0$ shows that $a\in\cN\cap\underline{\cU}$ is indeed the harder case. Indeed, this is what we see in our proofs controlling Term B for the two algorithms.
\qed\end{proof}

We now conclude the analysis. Combining Equations \eqref{eq:keyobs} and \eqref{eq:margindecomp} with the bounds on Term A and B obtained in Lemma~\ref{lem:TermA} and Lemma~\ref{lem:TermB} yield, for any finite $G$ and for the particular choice of $\delta \in (0,1)$ given in \eqref{def:ChoiceDelta}
\begin{align}
\limsup_T \frac{T-\E[N_{a^\star}(T)]}{\log T}&\le (1-\delta)^G \sum_{a\in \underline{\cU}} \frac{1}{\check{q}_a^{a^\star} \Kinf(\nu_a,\cc_a\rho_{a^\star})}. \label{eq:fixedG}
\end{align} 
Taking $G$ to infinity yields the result. 
\section{Conclusion} \label{sec:conc}
We have established the asymptotic efficiency of KL-UCB and Thompson sampling for budgeted multiple-play bandit problem in which the cost of pulling each arm is known and, in each round, the agent may use any strategy for which the expected cost is no more than their budget. We have also introduced a pseudo-arm so that the agent has the option of reserving the remainder of their budget if the remaining arms have reward-to-cost ratios that fall below a prespecified indifference point. Thompson sampling outperforms KL-UCB in three of our four simulations scenarios. Despite the strong performance of Thompson sampling for Bernoulli rewards, we have been able to prove stronger results about KL-UCB in this work, dealing with more general distributions. Understanding for which distributions one of these algorithms is preferable to the other is an interesting area for future work.

All of the proofs in this work can handle the case that the set of optimal arms is not unique. In an earlier work, \cite{Komiyamaetal2015} established the optimality of Thompson sampling under a multiple play bandit model in which the set of optimal arms is unique. A potential area for future work would be to extend their arguments to the special case of our budgeted bandit setting in which the set of optimal arms is unique -- it would be interesting to see if their technique yields a shorter proof in this special case.

In future work, it would be interesting to consider an extension of our setting where the budget ($B_t$), indifference points ($\rho_t$), and costs ($\cc_t$) are random over time according to some exogeneous source of randomness. If only the budget is random over time, then, under some regularity conditions, the regret lower bound and regret of our algorithms would seem to be driven by the behavior of our algorithm for the fixed budget representing the upper edge of the support for the random budget, since this is the setting in which the most information is learned about the arm distributions (arms that are otherwise suboptimal can be optimal in this setting). If only the indifference point is variable over time, then the behavior of our algorithm will similarly be driven by the lowest indifference point, since the most information is available in this case. Combinations of variable budgets and indifference points will result in a similar analysis. Variable but known costs are more complex, because they have the potential to change the order and indices of the optimal arms. For sufficiently variable costs, we in fact expect that all arms will be pulled more than order $\log T$ times, since all arms will be optimal for certain cost realizations. Therefore, a careful study of a variable cost budgeted bandit problem may require very different techniques than those used in this work.

 \paragraph{Structure of the Supplementary Material}
 {\small Appendix~\ref{proofs:Oracle} focuses on  oracle strategy  and regret  decomposition.  Appendix~\ref{app:lbproof} contains  proofs establishing  the asymptotic  lower  bound on  the number  of
 suboptimal arm draws.  Appendices~\ref{app:klucbproof} and \ref{app:thomproof}
 contain technical proofs for KL-UCB and the Thompson sampling, respectively. 
 }

\paragraph{Acknowledgements}
{\small The authors acknowledge the support of the French Agence Nationale de la Recherche (ANR), under grant ANR-13-BS01-0005 (project SPADRO) and ANR-16-CE40-0002 (project BADASS). Alex Luedtke gratefully acknowledges the support of a Berkeley Fellowship.}

\newpage

\vskip 0.2in
\def\bibfont{\footnotesize}
\bibliography{persrule}

\newpage
\appendix
\setcounter{equation}{0}
\renewcommand{\theequation}{A.\arabic{equation}}
\setcounter{theorem}{0}
\renewcommand{\thetheorem}{A.\arabic{theorem}}
\renewcommand{\thecorollary}{A.\arabic{theorem}}
\renewcommand{\thelemma}{A.\arabic{theorem}}
\renewcommand{\theproposition}{A.\arabic{theorem}}
\renewcommand{\theconjecture}{A.\arabic{theorem}}

\section*{Appendix}
We begin with an outline of the results proven in this appendix and how they are related to one another. Lemma~\ref{lem:lb} gives a lower bound on the number of draws of each suboptimal arm  for a uniformly efficient algorithm. Deduced from Lemma~\ref{lem:lb}, Theorem~\ref{thm:reglb} gives an asymptotic regret lower bound \eqref{eq:reglb} for a uniformly efficient algorithm. The asymptotic lower bound is achieved whenever the expected number of draws of each suboptimal arm satisfies the appropriate asymptotic condition, either \eqref{eq:subopt} or \eqref{eq:suboptindifference} depending on the arm, and the expected number of draws of each optimal arm away from the margin satisfies the asymptotic condition \eqref{eq:nonmargin}. Theorems~\ref{thm:expfam} and \ref{thm:finsup} state that the variants of KL-UCB are uniformly efficient and achieve \eqref{eq:reglb} for rewards sampled either from a single parameter exponential family or from bounded and finitely supported distributions. Theorem~\ref{thm:thom} states that Thompson sampling is uniformly efficient and achieves \eqref{eq:reglb} for Bernoulli distributed rewards.

The first step of the proof of Theorems~\ref{thm:expfam}, \ref{thm:finsup}, and \ref{thm:thom} consists in showing that KL-UCB and Thompson sampling achieve the asymptotically optimal expected number of suboptimal arm draws, i.e. that \eqref{eq:subopt} and \eqref{eq:suboptindifference} hold in their contexts. For KL-UCB, this is a consequence of a preliminary analysis given in Lemmas~\ref{lem:term1} and \ref{lem:term2astar}. For Thompson sampling, this is a consequence of another preliminary analysis given in Lemmas~\ref{lem:suboptasopt} through \ref{lem:termiii}. The proof of Lemma~\ref{lem:termiii} relies on a link between the beta and binomial distributions given in Lemma~\ref{lem:betabinomial}.

The second step of the proof of Theorems~\ref{thm:expfam}, \ref{thm:finsup}, and \ref{thm:thom} consists in showing that KL-UCB and Thompson sampling are uniformly efficient in their respective contexts. This is a consequence of yet another preliminary analysis, \eqref{eq:subopt}, \eqref{eq:suboptindifference}, and Lemma~\ref{lem:TermA}.

The third step of the proof of Theorems~\ref{thm:expfam}, \ref{thm:finsup}, and \ref{thm:thom} consists in showing that KL-UCB and Thompson sampling achieve the asymptotically optimal expected number of optimal draws away from the margin, i.e. that \eqref{eq:nonmargin} holds in their contexts. This is a consequence of the preliminary analysis undertaken in step two and of Lemmas~\ref{lem:TermA} and \ref{lem:TermB}. The proofs of Lemmas~\ref{lem:TermA} and \ref{lem:TermB} hinge on Lemmas~\ref{lem:suboptasopt} through \ref{lem:termiii}. The proof of Lemma~\ref{lem:TermB} also relies on Lemma~\ref{lem:betabinomial}.

The fourth and final step of the proof of Theorems~\ref{thm:expfam}, \ref{thm:finsup}, and \ref{thm:thom} boils down to applying Theorem~\ref{thm:reglb}.

\section{Oracle strategy and regret decomposition} \label{proofs:Oracle}

\subsection{Proof of Proposition~\ref{prop:Oracle}}
Recall that 
\[\bm q^\star \in \underset{{\bm q \in [0,1]^K}}{\argmax} \sum_{a=1}^K q_a (\mu_a - c_a \rho) \ \ \ \text{such that} \ \ \ \sum_{a=1}^K q_a c_a \leq B.\]
Introducing $c_{K+1} = B$ and $\mu_{K+1} = B\rho$, one can prove that $\bm q^\star$ coincides with the first $K$ components of $\bm q^\star_{K+1} \in [0,1]^{K+1}$, that is defined as the solution to 
\begin{equation}\bm q^\star_{K+1} \in \underset{\bm q \in [0,1]^{K+1}}{\argmax} \sum_{a=1}^{K+1} q_a (\mu_a - c_a \rho) \ \ \ \text{such that} \ \ \ \sum_{a=1}^{K+1} q_a c_a = B\label{opt:2}\end{equation}
and that the two optimization problems have the same value. This is because as $\mu_{K+1} - c_{K+1}\rho = 0$, the two objective functions coincide: \[f_K(\bm q) \equiv \sum_{a=1}^{K} q_a (\mu_a - c_a \rho) =\sum_{a=1}^{K+1} q_a (\mu_a - c_a \rho)  \equiv f_{K+1}(\bm q_{K+1})\]
and if $\bm q$ satisfies the first constraint, there exists $q_{K+1}$ such that $\bm q_{K+1} = (\bm q,q_{K+1})$ satisfies the second constraint: $\sum_{a=1}^{K+1} q_a c_a = B$ (as $c_{K+1}=B$). Conversely, if $\bm q_{K+1}$ satisfies the second constraint, its first $K$ marginals clearly satisfy the first constraint. 

The common value $M^\star$ of these two optimization problem, that is the maximal achievable reward, can be rearranged a bit, using that $\sum_{a=1}^{K+1} q_a c_a \rho = \rho B$:
 \[M^\star = \sum_{a=1}^{K+1} q_a^\star \mu_a - \rho B,\]
where $\bm q^\star_{K+1} \in [0,1]^{K+1}$ is the solution to
\begin{equation}\bm q^\star_{K+1} \in \underset{\bm q \in [0,1]^{K+1}}{\argmax} \sum_{a=1}^{K+1} q_a \mu_a \ \ \ \text{such that} \ \ \ \sum_{a=1}^{K+1} q_a c_a = B.\label{opt:3}\end{equation}

Now introduce
\[L^\star \equiv \sum_{a \in \cL} \mu_{a} + \rho^\star\left(B - \sum_{a \in \cL} c_{a}\right).\]
The optimal weights are also defined by 
\[\bm q^\star_{K+1} \in \argmin{\bm q \in [0,1]^{K+1}} \left[L^\star - \sum_{a=1}^{K+1} q_a \mu_a \right] \ \ \ \text{such that} \ \ \ \sum_{a=1}^{K+1} q_a c_a = B.\]
The new objective can be rewritten as follows, where the `virtual' arm $K+1$ that has characteristics $\mu_{K+1} = B\rho$ and $c_{K+1}=B$ is added to either the set $\cM$ (if $\rho_{K+1}=\rho=\rho^\star$) or $\cN$ (if $\rho_{K+1}=\rho < \rho^\star$). 
\begin{align*}
 &L^\star - \sum_{a=1}^{K+1} q_a \mu_a  =  \sum_{a \in \cL} \mu_{a} + \rho^\star\left(B - \sum_{a \in \cL} c_{a}\right) - \sum_{a \in \cL} c_a q_a \rho_a - \sum_{a \in \cM} c_a q_a \rho^\star  - \sum_{b \in \cN} c_b q_b \rho_b\\
 & =  \sum_{a \in \cL} c_a\rho_{a} + \rho^\star\left(B - \sum_{a \in \cL} c_{a}\right) - \sum_{a \in \cL} c_a q_a \rho_a - \rho^\star \left( B - \sum_{a \in \cL}c_aq_a - \sum_{b \in \cN}c_b q_b\right)  - \sum_{b \in \cN} c_b q_b \rho_b\\
 & =  \sum_{a \in \cL} c_a\underbrace{(\rho_a - \rho^\star)}_{> 0}(1-q_a) + \sum_{b \in \cN}c_b\underbrace{(\rho^\star - \rho_b)}_{> 0} q_b.
\end{align*}
This shows that the objective function is always non negative, and that it can actually be set to the zero by choosing weights that satisfy $q_a = 1$ for all $a\in \cL$ and $q_b = 0$ for all $b \in \cL$. 

It remains to justify that such a choice is indeed feasible for some choices of weights on the arms in the margin $\cM$. This margin is never empty, as in the case $\rho^\star = \rho$, it does contain the `pseudo-arm' mentioned above. By definition of the sets $\cL$ and $\cM$,
\[\sum_{a \in \cL} c_a < B \ \ \ \text{and} \ \ \ \sum_{a \in \cL \cup \cM} c_a \geq B\]
hence, the solution can be ``completed'' by putting weight on the margin such that $\sum_{a \in \cL}c_a + \sum_{a \in \cN}q_ac_a = B$.

If $\rho < \rho^\star$, then the arm $K+1$ belongs to $\cN$ and as such $q_{K+1} = 0$ and the first $K$ marginals indeed satisfy the statement of Proposition~\ref{prop:Oracle}, with a non-empty margin. If $\rho = \rho^\star$, our `extended' margin only contains arm $K+1$, while the original margin is empty. As such the only arms with non-zero weights among the first $K$ marginals are the arms in $\cL$, for which the weight is one.

\subsection{Proof of Proposition~\ref{prop:RegretDec}} 
\[\Reg(T,\cV) = \bE\left[\sum_{t=1}^T (G^\star  - G(t))\right] =  \bE\left[\sum_{t=1}^T (G^\star  - \sum_{a=1}^Kq_a(t)(\mu_a - c_a \rho))\right]\]
The proof follows from a rewriting of 
\begin{align*}
 G^\star  - \sum_{a=1}^Kq_a(t)(\mu_a - c_a \rho)  = &\sum_{a\in \cL}c_a \rho_a + \rho^\star\left(B - \sum_{a \in \cL} c_a\right) - B\rho - \sum_{a=1}^Kq_a(t)(\mu_a -c_a\rho) \\
  = & \sum_{a\in \cL}c_a \rho_a + \rho^\star\left(B - \sum_{a \in \cL} c_a\right) - B\rho - \sum_{a=1}^{K+1}q_a(t)(\mu_a -c_a\rho),
\end{align*}
where we define $\mu_{K+1}=\rho B$, $c_K = B$ and let $q_{K+1}(t)$ be such that $\sum_{a=1}^{K+1}q_a(t) c_a = B$. This is possible as $\sum_{a=1}^{K}q_a(t) c_a \leq B$ due to the soft budget constraints and $c_{K+1}=B$ and 
\[q_{K+1}(t) = \frac{B - \sum_{a=1}^K c_aq_a(t)}{B}.\]
Thus one can further write
\begin{align*}
 &G^\star  - \sum_{a=1}^Kq_a(t)(\mu_a - c_a \rho)   =  \sum_{a\in \cL}c_a \rho_a + \rho^\star\left(B - \sum_{a \in \cL} c_a\right)  - \sum_{a=1}^{K+1}q_a(t)\mu_a \\
 & \ \ \ = \sum_{a\in \cL}c_a \rho_a + \rho^\star\left(B - \sum_{a \in \cL} c_a\right)  - \sum_{a\in \cL}q_a(t)c_a\rho_a - \rho^\star\sum_{a\in \cM}q_a(t)c_a- \sum_{a\in \cN}q_a(t)c_a\rho_a - q_{K+1}(t)\rho B
\end{align*}
Using that 
\[\sum_{a\in \cM}q_a(t)c_a = B - \sum_{a\in \cL}q_a(t)c_a - \sum_{a\in \cM}q_a(t)c_a - q_{K+1}(t)B,\]
one obtains 
\begin{align*}
 & G^\star  - \sum_{a=1}^Kq_a(t)(\mu_a - c_a \rho)   =  \sum_{a\in \cL}c_a (\rho_a - \rho^\star)(1-q_a(t)) + \sum_{a \in \cN}c_a(\rho^\star - \rho_a) q_a(t) + B(\rho^\star - \rho) q_{K+1}(t)\\
 & \ \ \ = \sum_{a\in \cL}c_a (\rho_a - \rho^\star)(1-q_a(t)) + \sum_{a \in \cN}c_a(\rho^\star - \rho_a) q_a(t) + (\rho^\star - \rho)\left(B - \sum_{a=1}^K c_a q_a(t)\right)
\end{align*}
Summing over $t$, the regret can be decomposed as  
\[\sum_{a\in \cL}c_a (\rho_a - \rho^\star)\left(T- \bE\left[\sum_{t=1}^Tq_a(t)\right]\right) + \sum_{a \in \cN}c_a(\rho^\star - \rho_a) \bE\left[ \sum_{t=1}^T q_a(t)\right] + (\rho^\star - \rho)\left(B - \sum_{a=1}^K c_a \bE\left[\sum_{t=1}^T q_a(t)\right]\right)\]
and the conclusion follows by noting that $N_a(T) = \bE\left[\sum_{a=1}^T q_a(t)\right]$.

\section{Proof of Lower Bound on Suboptimal Arm Draws} \label{app:lbproof}
\begin{proof}[Proof of Lemma~\ref{lem:lb}]
Fix some arm $a\in(\cM\cup\underline{\cN})\backslash\{K+1\}$, natural number $T$, and $\delta\in(0,1)$. By definition, $\cc_a \rho^{\star}<\mu_{+}$ for all $a\in\underline{\cN}$, and, for $a\in\cM$ the same property holds by our assumption that $\cc_a\rho^\star=\mu_a<\mu_{+}$. Hence, the set $\{\tilde{\nu}_a\in\cD : E(\tilde{\nu}_a)>\cc_a \rho^{\star}\}$ is non-empty. If the intersection of this set with the set of distributions $\{\tilde{\nu}_a\in\cD : \nu_a \ll \tilde{\nu}_a\}$ is empty, then the bounds are trivial by our convention that $d/\infty=0$ for finite $d$. Otherwise, let $\mathcal{V}'$ be some distribution that is equal to $\mathcal{V}$ except in the $a^{\textnormal{th}}$ component, where its $a^{\textnormal{th}}$ component $\nu_a'\in\cD$ is such that $\mu_a'\equiv E(\nu_a')>\cc_a \rho^{\star}$ and $\nu_a \ll \nu_a'$. Furthermore, one can select $\mathcal{V}'$ to fall in the statistical model for the joint distribution of the arm-specific rewards by our variation-independence assumption. For each $b$, let $\rho_{b}'=\rho_{b}$, $b\not=a$, and let $\rho_a'=\mu_a'/\cc_a$. Observe that $\mu_a'>\cc_a\rho^{\star}\ge \mu_a$ implies that $\KL(\nu_a,\nu_a')>0$. Define the log-likelihood ratio random variable $L_a(T)\equiv L_{a,N_a(T)}\equiv \sum_{n=1}^{N_a(T)} \log\frac{d\nu_a}{d\nu_a'}(X_{a,n})$. Let $b_a(T)\equiv (1-\delta)\frac{\log T}{\KL(\nu_a,\nu_a')}$ and $d(T)\equiv (1-\delta/2)\log T$. We have that
\begin{align}
&\Prob_{\mathcal{V}}\left\{N_a(T)<b_a(T)\right\} \nonumber \\
&\le \Prob_{\mathcal{V}}\left\{N_a(T)<b_a(T),L_a(T)\le d(T)\right\} + \Prob_{\mathcal{V}}\left\{N_a(T)<b_a(T),L_a(T)> d(T)\right\} \nonumber \\
&\le e^{d(T)}\Prob_{\mathcal{V}'}\left\{N_a(T)<b_a(T)\right\} + \Prob_{\mathcal{V}}\left\{N_a(T)<b_a(T),L_a(T)> d(T)\right\}, \label{eq:lbdecomp}
\end{align}
where the final inequality holds because, for any event $D\subseteq\{N_a(T)=b,L_a(T)\le d(T)\}$, a change of measure shows that $\Prob_{\mathcal{V}}\{D\}=\bE_{\mathcal{V}'}\left[e^{L_{a,b}}\ind_{\{D\}}\right]\le e^{d(T)}\Prob_{\mathcal{V}'}\{D\}$ \citep[see Equation 2.6 in][]{Lai&Robbins1985}. Let $\tilde{\rho}^\star\equiv \rho^\star(\cc_a \rho_{a}' : a=1,\ldots,K+1)$. Observe that arm $a$ under the reward distribution involving $\nu_a'$ satisfies either (i) $\rho_{a'}'>\tilde{\rho}^\star$ or (ii) $\rho_a'=\tilde{\rho}^\star$ and $g_a\equiv B - \sum_{\tilde{a}\not=a : \rho_{\tilde{a}}\ge \tilde{\rho}^\star} \cc_{\tilde{a}}>0$, where the sum over the empty set is zero. 
Under (i), we note that the uniform efficiency of the algorithm and Markov's inequality yield that
\begin{align*}
\Prob_{\mathcal{V}'}\left\{N_a(T)<b_a(T)\right\}&= \Prob_{\mathcal{V}'}\left\{T-N_a(T)>T-b_a(T)\right\} = o\left(T^{\delta/2-1}\right).
\end{align*}
Thus, the first term in \eqref{eq:lbdecomp} converges to zero as $T\rightarrow\infty$ when (i) holds. We now show the same result when (ii) holds. We first note that
\begin{align*}
g_a T - \cc_a \E[N_a(T)]&\ge BT-\sum_{\tilde{a}\not=a : \rho_{\tilde{a}}\ge \tilde{\rho}^\star} \cc_{\tilde{a}}\E[N_{\tilde{a}}(T)] - \cc_a \E[N_a(T)]= BT-\sum_{\tilde{a} : \rho_{\tilde{a}}\ge \tilde{\rho}^\star} \cc_{\tilde{a}}\E[N_{\tilde{a}}(T)].
\end{align*}
The right-hand side is $o(T^{\delta/2})$ by the uniform efficiency of the algorithm. Hence, Markov's inequality yields that,
\begin{align*}
\Prob_{\mathcal{V}'}\left\{N_a(T)<b_a(T)\right\}&= \Prob_{\mathcal{V}'}\left\{g_aT-\cc_a N_a(T)>g_aT-\cc_a b_a(T)\right\} = o\left(T^{\delta/2-1}\right).
\end{align*}
Thus, the first term in \eqref{eq:lbdecomp} also converges to zero as $T\rightarrow\infty$ when (ii) holds. For the second term, observe that
\begin{align*}
\left\{N_a(T)<b_a(T),L_a(T)> d(T)\right\}&\subseteq \left\{\max_{n\le b_a(T)} \frac{L_{a,n}}{b_a(T)}> \frac{d(T)}{b_a(T)}\right\} \\
&= \left\{\max_{n\le b_a(T)} \frac{L_{a,n}}{b_a(T)}> \frac{1-\delta/2}{1-\delta}\KL(\nu_a,\nu_a')>\KL(\nu_a,\nu_a')\right\}.
\end{align*}
By the strong law of large numbers, $b_a(T)^{-1}L_{a,\lfloor b_a(T)\rfloor}\rightarrow \KL(\nu_a,\nu_a')$ almost surely under $\nu_a$. Further, $\max_{n\le b_a(T)} b_a(T)^{-1}L_{a,n}\rightarrow \KL(\nu_a,\nu_a')$ almost surely as $T\rightarrow\infty$. 
It follows that the second term in \eqref{eq:lbdecomp} converges to zero as $T\rightarrow\infty$ so that
\begin{align}
\Prob_{\mathcal{V}}\left\{N_a(T)<(1-\delta) \frac{\log(T)}{\KL(\nu_a,\nu_a')}\right\}\rightarrow 0. \label{eq:problbnup}
\end{align}
For convenience, we let $\mathcal{K}\equiv \Kinf(\nu_a,\cc_a \rho^{\star})$ in what follows. By the definition of the infimum, for every $\epsilon>0$ there exists some $\nu_a'$ such that $\mathcal{K}+\epsilon>\KL(\nu_a,\nu_a')$. This proves \eqref{eq:problb}. If $a\in\underline{\cN}$ so that $\mathcal{K}>0$, then take $\epsilon = \left[(1-\delta)^{-1/2}-1\right]\mathcal{K}$ and write 
\begin{align*}
\Prob_{\mathcal{V}}\left\{N_a(T)<(1-\delta)^{3/2} \frac{\log(T)}{\Kinf(\nu_a,\cc_a \rho^\star)}\right\}\rightarrow 0. 
\end{align*}
Applying the above to $\delta' = 1 - (1-\delta)^{2/3}$ (such that $(1-\delta')^{3/2}=(1-\delta)$) yield the result for $a\in \underline{\cN}$. For $a\in \underline{\cN}$, it also follows that for all $\delta \in (0,1)$ one has  
\begin{align*}
\E[N_a(T)]\ge \frac{(1-\delta)\log T}{\Kinf(\nu_a,\cc_a \rho^\star)}\Prob_{\mathcal{V}}\left\{N_a(T)\ge (1-\delta) \frac{\log T }{\Kinf(\nu_a,\cc_a \rho^\star)}\right\} \underset{T \rightarrow \infty}{\sim}  \frac{(1-\delta)\log T}{\Kinf(\nu_a,\cc_a \rho^\star)},
\end{align*}
which yields \eqref{eq:explb}, letting $\delta$ go to zero. 
\qed\end{proof}

\section{Supplementary Proofs for KL-UCB} \label{app:klucbproof}
\begin{lemma} \label{lem:term1}
Fix an $a\in\{1,\ldots,K\}$ and a fixed $\mu^\dagger$ (not relying on $T$) with $\mu_a<\mu^\dagger$. In the setting of Theorem~\ref{thm:expfam} with $\rho^\dagger=\mu^\dagger/\cc_a$ or in the setting of Theorem~\ref{thm:finsup} with $\rho^\dagger=\left[1-\log(T)^{-1/5}\right]\mu^\dagger/\cc_a$, it holds that
\begin{align*}
\sum_{n=b(T) + 1}^{\infty} \Prob\left\{\hat{\nu}_{a,n}\in\mathcal{C}_{\cc_a \rho^\dagger,f(T)/n}\right\}&= o(\log T),
\end{align*}
where $b(T)$ is any number satisfying
\begin{align*}
b(T)&\ge \left\lceil\frac{f(T)}{\Kinf(\nu_a,\mu^\dagger)}\right\rceil.
\end{align*}
An explicit finite sample bound on the $o(\log T)$ term can be found in \cite{Cappeetal2013b}.
\end{lemma}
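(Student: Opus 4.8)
The plan is to translate the event inside the sum into a statement about the KL-UCB index and then to handle separately a ``bulk'' block of indices and a ``tail'' block. By the defining property of $\mathcal{C}$ recorded just before \eqref{eq:taunKL} (namely that $\tilde\mu<U_a(t)$ iff $\hat\nu_a(t)\in\mathcal{C}_{\tilde\mu,f(t)/N_a(t)}$), the event $\{\hat\nu_{a,n}\in\mathcal{C}_{\cc_a\rho^\dagger,f(T)/n}\}$ is exactly $\{U_{a,n}>\cc_a\rho^\dagger\}$, where $U_{a,n}$ denotes the index built from the first $n$ i.i.d.\ draws $X_{a,1},\ldots,X_{a,n}$ of $\nu_a$ at the fixed confidence level $f(T)$; equivalently, up to boundary cases, $\{\Kinf(\hat\nu_{a,n},\cc_a\rho^\dagger)\le f(T)/n\}$. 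Since the summand only involves the arm-specific i.i.d.\ sequence $\{X_{a,n}\}_n$, the whole problem reduces to the one-arm tail-sum studied in \cite{Cappeetal2013,Cappeetal2013b}, and I would import their large-deviation and maximal-inequality estimates for the tail.

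In the exponential-family setting of Theorem~\ref{thm:expfam} we have $\cc_a\rho^\dagger=\mu^\dagger$ (fixed, not depending on $T$) and $\Kinf(\nu_a,\mu^\dagger)=\KL(\mu_a,\mu^\dagger)$. Thus $b(T)\ge\lceil f(T)/\Kinf(\nu_a,\mu^\dagger)\rceil$ is precisely the concentration threshold: for every $n>b(T)$ one has $f(T)/n<\Kinf(\nu_a,\mu^\dagger)$, so $\{U_{a,n}>\mu^\dagger\}$ forces $\hat\mu_{a,n}$ to exceed $\mu_a$ by a bounded-away-from-zero amount, a large-deviation event. A Chernoff bound together with the choice $f(t)=\log t+3\log\log t$ renders $\sum_{n>b(T)}\Prob\{U_{a,n}>\mu^\dagger\}$ summable; the explicit finite-sample value is the quantity bounded in \cite{Cappeetal2013b}, and in particular it is $o(\log T)$.

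The finitely-supported setting of Theorem~\ref{thm:finsup} is where the real work lies, because there $\cc_a\rho^\dagger=[1-\log(T)^{-1/5}]\mu^\dagger$ is strictly below $\mu^\dagger$ and depends on $T$, so the natural concentration threshold $n^\star\equiv\lceil f(T)/\Kinf(\nu_a,\cc_a\rho^\dagger)\rceil$ exceeds $b(T)$. I would split the sum at $n^\star$. The tail $n>n^\star$ is controlled exactly as above (the target satisfies $\cc_a\rho^\dagger\ge\tfrac12(\mu_a+\mu^\dagger)$ for $T$ large, hence stays bounded away from $\mu_a$, so the estimates of \cite{Cappeetal2013b} apply uniformly) and is $o(\log T)$. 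For the intermediate block $b(T)<n\le n^\star$ I would bound each probability crudely by $1$ and control the number of terms: using the local Lipschitz continuity of $\mu\mapsto\Kinf(\nu_a,\mu)$ near $\mu^\dagger$, the shrinkage $\mu^\dagger-\cc_a\rho^\dagger=\log(T)^{-1/5}\mu^\dagger\to 0$, and $f(T)=O(\log T)$,
\[
n^\star-b(T)=f(T)\left[\frac{1}{\Kinf(\nu_a,\cc_a\rho^\dagger)}-\frac{1}{\Kinf(\nu_a,\mu^\dagger)}\right]+O(1)=O\!\bigl((\log T)^{4/5}\bigr)=o(\log T).
\]

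The main obstacle is precisely this intermediate block in the finitely-supported case: one must show that shifting the target from $\mu^\dagger$ to the slightly smaller, $T$-dependent value $\cc_a\rho^\dagger$ widens the window between the stated threshold $b(T)$ and the true concentration point $n^\star$ by only a sub-logarithmic amount. This is exactly where the specific rate $\log(T)^{-1/5}$ and the regularity (local Lipschitzness) of $\Kinf(\nu_a,\cdot)$ enter; any fixed shrinkage rate of the form $(\log T)^{-\gamma}$ with $\gamma\in(0,1)$ would suffice for this lemma in isolation. The remaining tail estimate is essentially bookkeeping once the results of \cite{Cappeetal2013b} are invoked.
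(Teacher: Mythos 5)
Your proposal takes essentially the same route as the paper: the paper's proof of Lemma~\ref{lem:term1} consists entirely of pointing to the finite-sample estimates of \cite{Cappeetal2013b} (their Equation~25 for the setting of Theorem~\ref{thm:expfam}, and their Equation~33 together with the display preceding their Equation~36 for the setting of Theorem~\ref{thm:finsup}), and your bulk/intermediate/tail decomposition, including the counting argument showing that the $T$-dependent shift costs only $O((\log T)^{4/5})$ terms, is a faithful reconstruction of what sits inside those citations, with the hard deviation bounds deferred to the same source.

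That said, two of your explanatory glosses would fail if taken literally rather than delegated to \cite{Cappeetal2013b}, and they would become real gaps in a self-contained write-up. First, in the exponential-family case the event $\{U_{a,n}>\mu^\dagger\}$ for $n>b(T)$ forces $\hat{\mu}_{a,n}$ above a \emph{moving} threshold $x_n$ defined by $\KL(x_n,\mu^\dagger)=f(T)/n$, and $x_n\downarrow\mu_a$ as $n\downarrow b(T)$; the deviation is not bounded away from zero, and the Chernoff-bounded tail sum is not a convergent series but a quantity of order $\sqrt{f(T)}$ (this is exactly the $\sqrt{\log T}$-order term in the bound of \cite{Cappeetal2013b}), which is still $o(\log T)$, so your conclusion survives but your stated reason does not. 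Second, and more substantively, the tail block $n>n^\star$ in the finitely-supported case is \emph{not} ``controlled exactly as above'': the event is a lower deviation of $\Kinf(\hat{\nu}_{a,n},\cc_a\rho^\dagger)$, which a plain Chernoff bound on $\hat{\mu}_{a,n}$ does not control; it requires the duality-based deviation inequalities for $\Kinf$ in \cite{Cappeetal2013b}, and those carry prefactors that blow up as the shift $\epsilon_T$ shrinks (compare the bound $36\mu_{a^\star}^{-4}(2+\log\log T)(\log T)^{4/5}$, i.e. $O(\epsilon_T^{-4}\log\log T)$, in the proof of Lemma~\ref{lem:term2astar}). For the same reason your closing remark that any shrinkage rate $(\log T)^{-\gamma}$ with $\gamma\in(0,1)$ would suffice is wrong: the intermediate block only needs $\gamma<1$, but the deviation machinery imposes an upper constraint on $\gamma$ (with $\epsilon_T^{-4}$-type prefactors one needs $\gamma<1/4$), and the choice $\gamma=1/5$ is precisely what balances the two blocks at order $(\log T)^{4/5}$.
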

\begin{proof}
In the setting of Theorem~\ref{thm:expfam}, Equation 25 in \cite{Cappeetal2013b} gives the result for $\rho^\dagger=\mu^\dagger/\cc_a$. We refer the readers to that equation for the explicit finite sample bound that we are summarizing with little-oh notation.

In the setting of Theorem~\ref{thm:finsup}, Equation 33 combined with the unnumbered equation preceding Equation 36 in Section B.4 of \cite{Cappeetal2013b} gives the result for $\rho^\dagger=\left[1-\log(T)^{-1/5}\right]\mu^\dagger/\cc_a$. An explicit finite sample upper bound on this quantity can be found in Section B.4 of \cite{Cappeetal2013b}.
\qed\end{proof}

\begin{lemma} \label{lem:term2astar}
Fix an arm $a^\star\in\cS$. In the setting of Theorem~\ref{thm:expfam} with $\rho^\dagger\le\rho_{a^\star}$ or in the setting of Theorem~\ref{thm:finsup} with $\rho^\dagger\le\left[1-\log(T)^{-1/5}\right]\rho_{a^\star}$, it holds that
\begin{align*}
\sum_{t=K}^{T-1} \Prob\left\{\cc_{a^\star}\rho^\dagger\ge U_{a^\star}(t)\right\}&= o(\log T).
\end{align*}
Explicit finite sample constants can be found in the proof.
\end{lemma}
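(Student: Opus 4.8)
The plan is to recognize $\sum_{t=K}^{T-1}\Prob\{\cc_{a^\star}\rho^\dagger\ge U_{a^\star}(t)\}$ as the classical ``the optimal arm's index undershoots its mean'' term from the single-play KL-UCB analysis, and then invoke the corresponding deviation bounds of \cite{Cappeetal2013b}, exactly as was done for Lemma~\ref{lem:term1}. Write $m\equiv \cc_{a^\star}\rho^\dagger$, so that $m\le\mu_{a^\star}$ in the setting of Theorem~\ref{thm:expfam} and $m\le[1-\log(T)^{-1/5}]\mu_{a^\star}<\mu_{a^\star}$ in the setting of Theorem~\ref{thm:finsup}. By the remark following \eqref{eq:Uadef}, the event $\{m\ge U_{a^\star}(t)\}$ forces $N_{a^\star}(t)\,\Kinf(\Pi_{\cD}(\hat{\nu}_{a^\star}(t)),m)\ge f(t)$, and since $\Kinf(\cdot,m)=0$ whenever the mean exceeds $m$, it also forces $\hat{\mu}_{a^\star}(t)\le m$. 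The essential point is that $U_{a^\star}(t)$ is a deterministic function of $(N_{a^\star}(t),\hat{\mu}_{a^\star}(t),f(t))$ alone, so this event involves only the arm-specific i.i.d. sample-mean process $\{\hat{\mu}_{a^\star,n}\}_n$ and not the budget/multiple-play structure; the bound is therefore literally the single-arm one.

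First I would treat the exponential family setting. Here $\Kinf(\cdot,m)=\KL(\cdot,m)$ and $m\le\mu_{a^\star}$, so $\{U_{a^\star}(t)\le m\}\subseteq\{U_{a^\star}(t)\le\mu_{a^\star}\}$. A naive union bound over the possible values $n=N_{a^\star}(t)\in\{1,\dots,t\}$, using the Chernoff-type estimate $\Prob\{n\KL(\hat{\mu}_{a^\star,n},m)\ge\gamma,\ \hat{\mu}_{a^\star,n}\le m\}\le e^{-\gamma}$ (valid for $m\le\mu_{a^\star}$), loses too much: it yields only $t\,e^{-f(t)}$ per time step. The fix is the peeling / self-normalized deviation argument of \cite{Cappeetal2013,Cappeetal2013b}: slicing the range of $n$ geometrically and applying a maximal inequality on each slice replaces the factor $t$ by a factor $O(\log t)$, so that each summand is $O((\log t)\,e^{-f(t)})$. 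With $f(t)=\log t+3\log\log t$ this is $o(\log T)$ when summed over $t$, and I would read off the explicit finite-sample constant from the relevant equation of \cite{Cappeetal2013b}.

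Next I would treat the finitely-supported setting. The same reduction applies, but now $\Kinf(\cdot,m)$ ranges over the whole simplex of measures supported on $\Supp[\hat{\nu}_{a^\star}(t)]\cup\{1\}$, and its large-deviation control carries a polynomial-in-$n$ prefactor; moreover the weaker penalty $f(t)=\log t+\log\log t$ only yields the borderline bound $O(\log T)$ under the peeling argument above. This is precisely where the slowly vanishing gap $m\le[1-\log(T)^{-1/5}]\mu_{a^\star}$ enters: keeping $m$ strictly below $\mu_{a^\star}$ supplies an additional decaying factor in the deviation estimate that upgrades $O(\log T)$ to $o(\log T)$. I would invoke the corresponding bound in \cite{Cappeetal2013b} (Section~B.4) for the explicit finite-sample statement.

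The main obstacle is the finitely-supported case. The exponential-family case is essentially routine once the reduction is in place, since a fixed gap $m\le\mu_{a^\star}$ together with the strong penalty $3\log\log t$ makes the peeled sum converge. The delicate step is to verify in the finitely-supported case that the rate $\log(T)^{-1/5}$ is chosen so that the gap's decay strictly dominates the sub-summability coming from $f(t)=\log t+\log\log t$ and from the polynomial prefactor attached to $\Kinf$, yielding $o(\log T)$ rather than merely $O(\log T)$; all of this is encapsulated in the finite-sample computations of \cite{Cappeetal2013b}, which I would cite for the explicit constants.
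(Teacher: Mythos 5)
Your proposal matches the paper's proof in its essentials: the same reduction of $\{\cc_{a^\star}\rho^\dagger\ge U_{a^\star}(t)\}$ to a UCB-undershoot event (at $\mu_{a^\star}$ in the exponential-family setting, at $\mu_{a^\star}-\epsilon$ with $\epsilon=\log(T)^{-1/5}\mu_{a^\star}$ in the finitely-supported setting), the same union bound over the number of pulls $n$, and the same appeal to the deviation bounds of \cite{Cappeetal2013b} for the final estimates (yielding $3+4e\log\log T$ and $36\mu_{a^\star}^{-4}\left(2+\log\log T\right)\left(\log T\right)^{4/5}$, respectively, both $o(\log T)$). The only cosmetic difference is that the paper implements the gap in the finitely-supported case through the $\Kinf$ regularity inclusion $\{\Kinf(\hat{\nu},\mu_{a^\star}-\epsilon)\ge\gamma\}\subseteq\{\Kinf(\hat{\nu},\mu_{a^\star})\ge\gamma+\epsilon^2/2\}$, which makes the union bound over $n$ summable, rather than your looser ``additional decaying factor'' phrasing --- but that is exactly the mechanism inside the computations you cite.
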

\begin{proof}
In the setting of Theorem~\ref{thm:expfam}, it holds that $\{\cc_{a^\star}\rho^\dagger\ge U_{a^\star}(t)\}\subseteq \{\mu_{a^\star}\ge U_{a^\star}(t)\}$. Hence,
\begin{align*}
\sum_{t=K}^{T-1} \Prob\left\{\cc_{a^\star}\rho^\dagger\ge U_{a^\star}(t)\right\}&\le \sum_{t=K}^{T-1} \Prob\{\mu_{a^\star}\ge U_{a^\star}(t)\}.
\end{align*}
Furthermore,
\begin{align*}
\{\mu_{a^\star}\ge U_{a^\star}(t)\}&\subseteq \bigcup_{n=1}^{t-K+1}\left\{\mu_{a^\star}\ge \hat{\mu}_{a^\star,n},\KL(\hat{\mu}_{a^\star,n},\mu_{a^\star})\ge\frac{f(t)}{n}\right\}.
\end{align*}
Using the above, Equations 17 and 18 in \cite{Cappeetal2013b} show that $\sum_{t=K}^{T-1} \Prob\{\mu_{a^\star}\ge U_{a^\star}(t)\}$ is upper bounded by $3 + 4e\log\log T = o(\log T)$ provided $T\ge 3$. 

In the setting of Theorem~\ref{thm:finsup}, it holds that $\{\cc_{a^\star}\rho^\dagger\ge U_{a^\star}(t)\}\subseteq \{\left[1-\log(T)^{-1/5}\right]\mu_{a^\star}\ge U_{a^\star}(t)\}$. Hence,
\begin{align}
\sum_{t=K}^{T-1} \Prob\left\{\cc_{a^\star}\rho^\dagger\ge U_{a^\star}(t)\right\}&\le \sum_{t=K}^{T-1} \Prob\left\{\left[1-\log(T)^{-1/5}\right]\mu_{a^\star}\ge U_{a^\star}(t)\right\}. \label{eq:PmudaggtUastar}
\end{align}
Let $\epsilon\equiv \log(T)^{-1/5} \mu_{a^\star}>0$. Arguments given in Section B.2 of \cite{Cappeetal2013b} show that
\begin{align*}
\left\{\mu_{a^\star}-\epsilon\ge U_{a^\star}(t)\right\}&\subseteq \left\{\Kinf(\hat{\nu}_{a^\star}(t),\mu_{a^\star}-\epsilon)\ge \frac{f(t)}{N_{a^\star}(t)}\right\} \\
&\subseteq \left\{\Kinf(\hat{\nu}_{a^\star}(t),\mu_{a^\star})\ge \frac{f(t)}{N_{a^\star}(t)} + \frac{\epsilon^2}{2}\right\} \\
&\subseteq \cup_{n=1}^{t-K+1} \left\{\Kinf(\hat{\nu}_{a^\star,n},\mu_{a^\star})\ge \frac{f(t)}{n}+\frac{\epsilon^2}{2}\right\}.
\end{align*}
The remainder of the proof is now the same as in \cite{Cappeetal2013b}. In particular, their Equation 26 combined with the bounds given after their Equation 35 shows that the right-hand side of \eqref{eq:PmudaggtUastar} is upper bounded by $36\mu_{a^\star}^{-4}\left(2 + \log\log T\right)\left(\log T\right)^{4/5}=o(\log T)$. 
\qed\end{proof}

\begin{proof}[Proof of Lemma~\ref{lem:TermA} for KL-UCB in the settings of Theorems \ref{thm:expfam} and \ref{thm:finsup}]
Fix $a\in\underline{\cU}\cup\overline{\cU}$. For ease of notation, we analyze $\E[M_a^{a^\star}(T)]$ rather than $\E[M_a^{a^\star}(T^{(G)})]$, but for fixed $G<\infty$ there is no loss of generality in doing so. If $a\in\underline{\cU}$, then let $\mu^{\dagger}= \cc_a\rho_{a^\star}$, and otherwise, fix $\mu^\dagger\in(\mu_a,\mu_{+})$. Let $\rho^\dagger\equiv\mu^\dagger/\cc_a$ (setting of Theorem~\ref{thm:expfam}) or $\rho^\dagger\equiv\left[1-\log(T)^{-1/5}\right]\mu^\dagger/\cc_a$ (setting of Theorem~\ref{thm:finsup}). Note that $\rho^\dagger<\mu_{+}/\cc_a$. Analogous arguments to those used for \eqref{eq:Atp1a} show that
\begin{align}
&\left\{a\in\A(t+1), \frac{U_{a^\star}(t)}{\cc_{a^\star}}<\hat{\rho}^\star(t)\right\} \nonumber \\
&\subseteq\left\{a\in\A(t+1), \frac{U_{a^\star}(t)}{\cc_{a^\star}}<\hat{\rho}^\star(t),\rho^\dagger\ge \frac{U_{a}(t)}{\cc_a}\right\}\cup\left\{a\in\A(t+1), \frac{U_{a^\star}(t)}{\cc_{a^\star}}<\hat{\rho}^\star(t), \rho^\dagger<\frac{U_{a}(t)}{\cc_a}\right\} \nonumber \\
&\subseteq \left\{\rho^\dagger\ge \frac{U_{a^\star}(t)}{\cc_{a^\star}}\right\}\cup \left\{a\in\A(t+1), \rho^\dagger<\frac{U_a(t)}{\cc_a}\right\}. \label{eq:Atp1atilde}
\end{align}
Let
\begin{align*}
b_a^{a^\star}(T)&\equiv \left\lceil\frac{f(T)}{\Kinf(\nu_a,\mu^\dagger)}\right\rceil.
\end{align*}
Similarly to \eqref{eq:T1T2}, we have that
\begin{align*}
\E[M_a^{a^\star}(T)]&\le \frac{f(T)}{\Kinf(\nu_a,\mu_{a^\star})} + \sum_{n=b_a^{a^\star}(T) + 1}^{\infty} \Prob\left\{\hat{\nu}_{a,n}\in\mathcal{C}_{\cc_a\rho^\dagger,f(T)/n}\right\} + \sum_{t=K}^{T-1} \Prob\left\{\rho^\dagger\ge \frac{U_{a^\star}(t)}{\cc_{a^\star}}\right\} + 2.
\end{align*}
By Lemmas \ref{lem:term1} and \ref{lem:term2astar},
\begin{align}
\E[M_a^{a^\star}(T)]\le \frac{\log T}{\Kinf(\nu_a,\mu^\dagger)} + o(\log T). \label{eq:TnotTG}
\end{align}
In what follows we refer to this $o(\log T)$ term as $r(T,\mu^\dagger)$, where we note that $r(T,\mu^\dagger)/\log T\rightarrow 0$ for each fixed $\mu^\dagger\in(\mu_a,\mu_{+})$. If $a\in\overline{\cU}$, we will obtain our result by letting $\mu^\dagger\rightarrow \mu_{+}$. Thus, there exists a sequence $\mu^\dagger(T)\rightarrow\mu_{+}$ such that $r(T,\mu^\dagger(T))/\log T\rightarrow 0$. Noting $\liminf_{\mu^\dagger\rightarrow\mu_{+}} \Kinf(\nu_a,\mu^\dagger) = +\infty$ in the setting of both theorems, we see that
\begin{align*}
\E[M_a^{a^\star}(T)]\le \frac{\log T}{\Kinf(\nu_a,\mu^\dagger(T))} + r(T,\mu^\dagger(T)) = o(\log T).
\end{align*}
This is the desired result when $a\in\overline{\cU}$. If, instead, $a\in\underline{\cU}$, then replacing $T$ by $T^{(G)}$ in \eqref{eq:TnotTG} (for $T$ large enough so that $T^{(G)}>1$), and recalling that $\mu^\dagger=\cc_a \rho_{a^\star}$ when $a\in\underline{\cU}$, gives the desired result.
\qed\end{proof}

\begin{proof}[Proof of Lemma~\ref{lem:TermB} for KL-UCB in the settings of Theorems \ref{thm:expfam} and \ref{thm:finsup}]
Fix $g\in\mathbb{N}$, $a\in\underline{\cU}\subset\cM\cup\cN$, and $T^{(g)}$ such that $T^{(g)}>1$. In the setting of Theorem~\ref{thm:expfam} let $\rho^\dagger=\rho_{a^\star}$, and in the setting of Theorem~\ref{thm:finsup} let $\rho^\dagger=[1-\log(T)^{-1/5}]\rho_{a^\star}$. By \eqref{eq:Atp1atilde} and the fact that $\{\rho^\dagger<U_a(t)/\cc_a\} = \left\{\hat{\nu}_{a,N_a(t)}\in\mathcal{C}_{\cc_a \rho^\dagger,f(t)/N_a(t)}\right\}$,
\begin{align*}
&\E[M_a^{a^\star}(T^{(g-1)})-M_a^{a^\star}(T^{(g)})] \\
&\le \sum_{t=T^{(g)}}^{T^{(g-1)}-1} \Prob\left\{\rho^\dagger\ge \frac{U_{a^\star}(t)}{\cc_{a^\star}}\right\} + \sum_{t=T^{(g)}}^{T^{(g-1)}-1} \Prob\left\{a\in\A(t+1), \hat{\nu}_{a,N_a(t)}\in\mathcal{C}_{\cc_a \rho^\dagger,f(t)/N_a(t)}\right\}.
\end{align*}
The first term in the right hand side is upper bounded by the same sum from $t=K$ to $T-1$, and is thus $o(\log T)$ by Lemma~\ref{lem:term2astar}. For the second term, let $b_a'(T,g)\equiv \lceil(1-\delta)\frac{f(T^{(g)})}{\Kinf(\nu_a,\cc_a \rho^\star)}\rceil$ if $a\in\cN$ and let $b_a'(T,g)\equiv \lceil\frac{f(T^{(g)})}{(1-\delta)\Kinf(\nu_a,\cc_a \rho_{a^\star})}\rceil$ if $a\in\cM$. Similar arguments to those used to derive \eqref{eq:taunKL} in Section~\ref{sec:suboptrare} show that, for $T$ large enough so that $T^{(g)}\ge K$,
\begin{align*}
\sum_{t=T^{(g)}}^{T^{(g-1)}-1} &\Prob\left\{a\in\A(t+1), \hat{\nu}_{a,N_a(t)}\in\mathcal{C}_{\cc_a \rho^\dagger,f(t)/N_a(t)}\right\} \\
\le&\, \sum_{n=1}^{T^{(g-1)}-K}\;\sum_{t=T^{(g)}}^{T^{(g-1)}-1} \Prob\left\{\hat{\nu}_{a,n}\in\mathcal{C}_{\cc_a \rho^\dagger,f(T^{(g-1)})/n}, \tau_{a,n+1}=t+1\right\}.
\end{align*}
We split the sum over $n$ into a sum $S_1$ from $n=1$ to $b_a'(T,g)$ and a sum $S_2$ from $n=b_a'(T,g)+1$ to $T^{(g-1)}-K$. For the latter sum, the fact that, for each $n$, $\tau_{a,n+1}=t+1$ for at most one $t$ in a given interval, yields that
\begin{align*}
S_2&\le \sum_{n= b_a'(T,g)+1}^{T^{(g-1)}-K} \Prob\left\{\hat{\nu}_{a,n}\in\mathcal{C}_{\mu^\dagger,f(T^{(g-1)})/n}\right\}.
\end{align*}
If $a\in\cN$, then $\delta$ satisfying \eqref{eq:deltalogic} yields that $b_a'(T,g)> \frac{f(T^{(g-1)})}{\Kinf(\nu_a,\cc_a \rho_{a^\star})}$, and so the above sum is $o(\log T)$ by Lemma~\ref{lem:term1}. If $a\in\cM$, then $b_a'(T,g)=\lceil\frac{f(T^{(g-1)})}{\Kinf(\nu_a,\cc_a \rho_{a^\star})}\rceil$, and so again the above sum is $o(\log T)$.

We now bound $S_1$. Note that if $N_a(T^{(g)}-1)>b_a'(T,g)$, then, for every $n\le b_a'(T,g)$, $\tau_{a,n+1}< T^{(g)}$ and $S_1=0$ (the sum over $t$ is void). Therefore,
\begin{align*}
S_1&\le \sum_{n=1}^{b_a'(T,g)} \Prob\left\{N_a(T^{(g)}-1)\le b_a'(T,g)\right\} = b_a'(T,g)\Prob\left\{N_a(T^{(g)}-1)\le b_a'(T,g)\right\}.
\end{align*}
From Lemma~\ref{lem:cons}, KL-UCB is uniformly efficient. Thus, by  \eqref{eq:problb}, for any $a\in\cM$ one has 
\[\lim_{T\rightarrow \infty} \Prob\left(N_a(T^{(g)}-1) \leq \frac{2\log(T^{(g)})}{(1-\delta)\Kinf(\nu_a,\cc_a \rho_{a^\star})}\right) = 0,\]
where we use the fact that $\Kinf(\nu_a,\cc_a \rho^\star)=0$ and choose $\epsilon =(1-\delta)\Kinf(\nu_a,\cc_a \rho_{a^\star})/2 >0$. This yields that $\Prob\left\{N_a\left(T^{(g)}-1\right)< b_a'(T,g)\right\}\rightarrow 0$ as $T\rightarrow\infty$ and $S_1=o(\log T)$.

If $a\in\cN$, then Lemma~\ref{lem:cons} and \eqref{eq:problb} from Lemma~\ref{lem:lb} yield that
\begin{align*}
\Prob\left\{N_a\left(T^{(g)}-1\right)< (1-\delta)\frac{\log T^{(g)}}{\Kinf(\nu_a,\cc_a \rho^\star)}\right\}\rightarrow 0\textnormal{ as $T\rightarrow 0$.}
\end{align*}
The fact that $\lim_T f(T)/\log T= 1$ shows that $b_a'(T,g)=(1-\delta)\frac{\log T^{(g)}}{\Kinf(\nu_a,\cc_a \rho^\star)} + o(\log T)$. Plugging this into \eqref{eq:problb} from Lemma~\ref{lem:lb} (which holds for \textit{every} $\delta$ between $0$ and $1$) yields that $\Prob\left\{N_a\left(T^{(g)}-1\right)< b_a'(T,g)\right\}\rightarrow 0$ as $T\rightarrow\infty$. It follows that $S_1=o(\log T)$.

We have then shown that $\E[M_a^{a^\star}(T^{(g-1)})-M_a^{a^\star}(T^{(g)})]= o(\log T)$ for each $a\in\underline{\cU}\subset\cM\cup\cN$ and each $g\le G$. As Term B is a sum of finitely many such terms, Term B is $o(\log T)$.
\qed\end{proof}

\newpage

\section{Supplementary Proofs for Thompson Sampling} \label{app:thomproof}
We begin with a lemma.
\begin{lemma} \label{lem:betabinomial}
For any fixed real number $L$, arm $a$, $\mu_a<\mu^\dagger<\theta^\dagger$, and $t\ge 1$,
\begin{align*}
I\left\{\hat{\mu}_a(t)\le \mu^\dagger,N_a(t)\ge L\right\}\Prob\left\{\theta_a(t)>\theta^\dagger\middle|\mathcal{F}(t)\right\}\le e^{-(L+1)\KL(\mu^\dagger,\theta^\dagger)}.
\end{align*}
\end{lemma}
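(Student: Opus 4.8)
The plan is to exploit the exact correspondence between the Beta posterior used by Thompson sampling (under a uniform prior and Bernoulli rewards) and a Binomial tail, and then to control that tail with a relative-entropy Chernoff bound. First I would dispose of the trivial case: whenever $\hat\mu_a(t)>\mu^\dagger$ or $N_a(t)<L$, the indicator vanishes and the claimed inequality holds because its right-hand side is nonnegative. So I can restrict attention to the event $\{\hat\mu_a(t)\le\mu^\dagger,\,N_a(t)\ge L\}$ and argue conditionally on $\mathcal{F}(t)$. On this event, write $n\equiv N_a(t)$ and $S\equiv n\hat\mu_a(t)$ for the (integer) number of observed successes; the uniform prior makes the posterior law of $\theta_a(t)$ exactly $\mathrm{Beta}(S+1,\,n-S+1)$.

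Next I would invoke the classical identity relating the Beta CDF to a Binomial CDF: for integers $\alpha,\beta\ge 1$, $\Prob\{\mathrm{Beta}(\alpha,\beta)\le y\}=\Prob\{\mathrm{Bin}(\alpha+\beta-1,y)\ge\alpha\}$. Taking $\alpha=S+1$ and $\beta=n-S+1$ (so that $\alpha+\beta-1=n+1$) and passing to complements gives
\[
\Prob\{\theta_a(t)>\theta^\dagger\mid\mathcal{F}(t)\}=\Prob\{\mathrm{Bin}(n+1,\theta^\dagger)\le S\}.
\]
Because $S\le n\mu^\dagger<n\theta^\dagger<(n+1)\theta^\dagger$, the threshold $S$ lies strictly below the mean $(n+1)\theta^\dagger$, so this is a genuine lower tail and the Chernoff--Hoeffding bound in relative-entropy form applies:
\[
\Prob\{\mathrm{Bin}(n+1,\theta^\dagger)\le S\}\le\exp\!\left(-(n+1)\,\KL\!\left(\tfrac{S}{n+1},\theta^\dagger\right)\right),
\]
where $\KL$ denotes the binary relative entropy (matching the paper's abuse of notation in the Bernoulli case).

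To finish I would use two monotonicity facts. Since $\tfrac{S}{n+1}\le\tfrac{n\mu^\dagger}{n+1}<\mu^\dagger<\theta^\dagger$ and $x\mapsto\KL(x,\theta^\dagger)$ is strictly decreasing on $[0,\theta^\dagger)$ (its derivative $\log\tfrac{x(1-\theta^\dagger)}{\theta^\dagger(1-x)}$ is negative there), one gets $\KL(\tfrac{S}{n+1},\theta^\dagger)\ge\KL(\mu^\dagger,\theta^\dagger)$; and since $n+1=N_a(t)+1\ge L+1$ with $\KL\ge 0$, shrinking the multiplier only increases the bound, giving $\exp(-(n+1)\KL(\mu^\dagger,\theta^\dagger))\le\exp(-(L+1)\KL(\mu^\dagger,\theta^\dagger))$. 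Chaining the three inequalities yields the claim.

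I expect the only delicate point to be bookkeeping of the ``$+1$''s produced by the uniform prior: the posterior carries pseudo-counts $\mathrm{Beta}(S+1,n-S+1)$, which is precisely what turns the Binomial count $n$ into $n+1$ and keeps $\tfrac{S}{n+1}$ strictly below $\mu^\dagger$ even though $S$ is only $\le n\mu^\dagger$; this is also what produces $L+1$ rather than $L$ in the exponent. I would double-check the index alignment in the Beta--Binomial identity (the usual source of error) against the degenerate case $S=0$, where the identity reduces to $\Prob\{\theta_a(t)>\theta^\dagger\mid\mathcal{F}(t)\}=(1-\theta^\dagger)^{n+1}=\exp(-(n+1)\KL(0,\theta^\dagger))$ and the monotonicity step still goes through.
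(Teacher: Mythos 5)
Your proof is correct and follows essentially the same route as the paper's: both convert the Beta posterior tail into a Binomial lower-tail probability with $N_a(t)+1$ trials (the paper cites Fact~3 of Agrawal and Goyal, 2012, for this step), apply the Chernoff--Hoeffding bound in relative-entropy form, and finish with monotonicity of $x\mapsto\KL(x,\theta^\dagger)$ together with $N_a(t)+1\ge L+1$. The only cosmetic difference is that the paper first bounds the success count by $(N_a(t)+1)\hat{\mu}_a(t)$ and applies Chernoff at level $\hat{\mu}_a(t)$, whereas you apply it directly at $S/(N_a(t)+1)$; the two orderings of the same inequalities are interchangeable.
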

\begin{proof}
From Fact 3 in \cite{Agrawal&Goyal2012} \citep[also used in][]{Agrawal&Goyal2011,Kaufmannetal2012b,Kaufmannetal2012},
\begin{align*}
\Prob\left(\left.\theta_a(t)>\theta^\dagger\right|\mathcal{F}(t)\right)&= \Prob\left(\left.\sum_{n=1}^{N_a(T)+1} Z_n\le \sum_{n=1}^{N_a(T)} \Ind\{X_{a,n}=0\} \right|\mathcal{F}(t)\right),
\end{align*}
where $\{Z_n\}$ is an i.i.d. sequence (independent of all other quantities under consideration) of Bernoulli random variables with mean $\theta^\dagger$. Upper bounding the right-hand side yields
\begin{align*}
\Prob\left(\left.\theta_a(t)>\theta^\dagger\right|\mathcal{F}(t)\right)&\le \Prob\left(\left.\frac{1}{N_a(T)+1}\sum_{n=1}^{N_a(T)+1} Z_n\le \hat{\mu}_a(T) \right|\mathcal{F}(t)\right).
\end{align*}
Using that $\mu^\dagger<\theta^\dagger$, the Chernoff-Hoeffding bound gives that $\Prob\left(\left.\theta_a(t)>\theta^\dagger\right|\mathcal{F}(t)\right)$ is no larger than $e^{-[N_a(t)+1]\KL(\hat{\mu}_a(t),\theta^\dagger)}$. Multiplying the left-hand side by $I\left\{\hat{\mu}_a(t)\le \mu^\dagger,N_a(t)\ge L\right\}$, this yields the upper bound $e^{-(L+1)\KL(\mu^\dagger,\theta^\dagger)}$.
\qed\end{proof}

\begin{proof}[Proof of Lemma~\ref{lem:suboptasopt}]
Let $\tilde{\theta}_{\tilde{a}}(t)=\theta_{\tilde{a}}(t)$ for all $\tilde{a}\not=a^\star$ and let $\tilde{\theta}_{a^\star}(t)=-\infty$. Define the event $B\equiv \left\{\rho^\star(\tilde{\theta}_a(t) : a=1,\ldots,K+1)<\theta_a(t)/\cc_a\le \rho^\ddagger\right\}$. Observe that
\begin{align}
\Prob&\left\{\hat{\rho}^\star(t)\le \frac{\theta_a(t)}{\cc_a}\le \rho^\ddagger,\frac{\theta_{a^\star}(t)}{\cc_{a^\star}}< \hat{\rho}^\star(t)\middle| \mathcal{F}(t)\right\} \nonumber \\
&= \Prob\left(\left\{\hat{\rho}^\star(t)\le \frac{\theta_a(t)}{\cc_a},\frac{\theta_{a^\star}(t)}{\cc_{a^\star}}< \hat{\rho}^\star(t)\right\}\cap B\middle| \mathcal{F}(t)\right) \nonumber \\
&\le \Prob\left(\left\{\frac{\theta_{a^\star}(t)}{\cc_{a^\star}}\le\rho^\ddagger\right\}\cap B\middle| \mathcal{F}(t)\right). \label{eq:firstordering}
\end{align}
The event $\{\theta_{a^\star}(t)/\cc_{a^\star}> \rho^\ddagger\}$ is independent of the event $B$ conditional on $\mathcal{F}(t)$, and so the fact that $\left\{\theta_{a^\star}(t)/\cc_{a^\star}> \rho^\ddagger\right\}\cap B\subseteq\left\{\theta_{a^\star}(t)/\cc_{a^\star}\ge \hat{\rho}^{\star}(t)\right\}$ yields
\begin{align*}
\Prob(B|\mathcal{F}(t))\le \frac{\Prob\left(\left.\theta_{a^\star}(t)/\cc_{a^\star}\ge \hat{\rho}^{\star}(t)\right|\mathcal{F}(t)\right)}{\Prob\left(\left.\theta_{a^\star}(t)/\cc_{a^\star}> \rho^\ddagger\right|\mathcal{F}(t)\right)}.
\end{align*}
We note that $\Prob\left(\left.\theta_{a^\star}(t)>\cc_{a^\star}\rho^\ddagger\right|\mathcal{F}(t)\right)$ is positive (a beta distribution with at least one success is larger than $\cc_{a^\star}\rho^\ddagger<1$ with positive probability). Finally, since $a\in\A(t+1)$ implies that $\theta_{a^\star}(t)/\cc_{a^\star}\ge \hat{\rho}^{\star}(t)$, \eqref{eq:firstordering} yields
\begin{align*}
\Prob&\left(\left.a\in\A(t+1),\theta_a(t)/\cc_a\le \rho^\ddagger,\theta_{a^\star}(t)/\cc_{a^\star}< \hat{\rho}^\star(t)\right|\mathcal{F}(t)\right) \\
&\le \Prob\left(\left\{\theta_{a^\star}(t)/\cc_{a^\star}\le\rho^\ddagger\right\}\cap B\middle| \mathcal{F}(t)\right) \\
&= \Prob\left(\theta_{a^\star}(t)/\cc_{a^\star}\le\rho^\ddagger\middle| \mathcal{F}(t)\right)\Prob\left(\left.B\right|\mathcal{F}(t)\right) \\
&\le \Prob\left(\theta_{a^\star}(t)/\cc_{a^\star}\le\rho^\ddagger\middle| \mathcal{F}(t)\right)\frac{\Prob\left(\left.\theta_{a^\star}(t)/\cc_{a^\star}\ge \hat{\rho}^{\star}(t)\right|\mathcal{F}(t)\right)}{\Prob\left(\left.\theta_{a^\star}(t)/\cc_{a^\star}> \rho^\ddagger\right|\mathcal{F}(t)\right)}.
\end{align*}
\qed\end{proof}

\begin{proof}[Proof of Lemma~\ref{lem:thomtransferTtoN}]
Using \eqref{eq:calSvscalA}, one can write
\begin{align*}
\E&\left[\sum_{t=0}^{T-1} \frac{1-p_{a^\star}^{\rho^\ddagger}(t)}{p_{a^\star}^{\rho^\ddagger}(t)}\Prob\left(\frac{\theta_{a^\star}(t)}{\cc_{a^\star}}\ge \hat{\rho}^{\star}(t)\middle|\mathcal{F}(t)\right)\right] \\
&\le \mathring{q}_{a^\star}^{-1} \E\left[\sum_{t=0}^{T-1} \frac{1-p_{a^\star}^{\rho^\ddagger}(t)}{p_{a^\star}^{\rho^\ddagger}(t)}\Prob\left(a^\star\in\A(t+1)\middle|\mathcal{F}(t)\right)\right] \\
&= \mathring{q}_{a^\star}^{-1} \E\left[\sum_{t=0}^{T-1} \frac{1-p_{a^\star,N_{a^\star}(t)}}{p_{a^\star,N_{a^\star}(t)}}\Ind\left\{a^\star\in\A(t+1)\right\}\right] \\
&= \mathring{q}_{a^\star}^{-1} \E\left[\sum_{t=0}^{T-1} \sum_{n=0}^{T-1} \frac{1-p_{a^\star,n}^{\rho^\ddagger}}{p_{a^\star,n}^{\rho^\ddagger}}\Ind\left\{\tau_{a^\star,n+1}=t+1\right\}\right] \\
&\le \mathring{q}_{a^\star}^{-1} \E\left[\sum_{n=0}^{T-1} \frac{1-p_{a^\star,n}^{\rho^\ddagger}}{p_{a^\star,n}^{\rho^\ddagger}}\right],
\end{align*}
where the latter inequality holds because $\tau_{a^\star,n+1}=t+1$ for at most one $t$ in $\{0,\ldots,T-1\}$.
\qed\end{proof}

\begin{proof}[Proof of Lemma~\ref{lem:termiii}]
Let $L^\dagger(T)\equiv \frac{\log T}{\KL(\cc_a \rho^\dagger,\cc_a \rho^\ddagger)}$. We have that
\begin{align}
\sum_{t=0}^{T-1}& \Prob\left\{a\in\A(t+1),\theta_a(t)>\cc_a \rho^\ddagger,\hat{\mu}_a(t)\le\cc_a \rho^\dagger\right\} \nonumber \\
=&\, \E\left[\sum_{t=0}^{T-1} \Ind\left\{N_a(t)< L^\dagger(T)-1,\hat{\mu}_a(t)\le\cc_a \rho^\dagger\right\}\Prob\left(\left.a\in\A(t+1),\theta_a(t)>\cc_a \rho^\ddagger\right|\mathcal{F}(t)\right)\right] \nonumber \\
&+ \E\left[\sum_{t=0}^{T-1} \Ind\left\{N_a(t)\ge L^\dagger(T)-1,\hat{\mu}_a(t)\le\cc_a \rho^\dagger\right\}\Prob\left(\left.a\in\A(t+1),\theta_a(t)>\cc_a \rho^\ddagger\right|\mathcal{F}(t)\right)\right] \nonumber \\
\le&\, \E\left[\sum_{t=0}^{T-1} \Ind\left\{N_a(t)< L^\dagger(T)-1\right\}\Prob\left(\left.a\in\A(t+1)\right|\mathcal{F}(t)\right)\right] \nonumber \\
&+ \E\left[\sum_{t=0}^{T-1} \Ind\left\{N_a(t)\ge L^\dagger(T)-1,\hat{\mu}_a(t)\le\cc_a \rho^\dagger\right\}\Prob\left(\left.\theta_a(t)>\cc_a \rho^\ddagger\right|\mathcal{F}(t)\right)\right]. \label{eq:thompsconstbd}
\end{align}
The first term in the right hand side equals $\E\left[\sum_{t=0}^{T-1} \Ind\left\{N_a(t)< L^\dagger(T)-1,a\in\A(t+1)\right\}\right]$. Hence it is no larger than $L^\dagger(T)-1$ (the sum has at most $L^\dagger(T)-1$ nonzero terms). For the second term, Lemma~\ref{lem:betabinomial} yields
\begin{align*}
&\Ind\left\{\hat{\mu}_a(t)<\cc_a \rho^\dagger,N_a(T)\ge L^\dagger(T)-1\right\}\Prob\left(\left.\theta_a(t)>\cc_a \rho^\ddagger\right|\mathcal{F}(t)\right)\le e^{-L^\dagger(T)\KL(\cc_a \rho^\dagger,\cc_a \rho^\ddagger)}= T^{-1}.
\end{align*}
It follows that the second term on the right of \eqref{eq:thompsconstbd} is upper bounded by $\sum_{t=0}^{T-1} T^{-1} = 1$. This completes the proof.
\qed\end{proof}

\begin{proof}[Proof of Lemma~\ref{lem:TermA} for Thompson sampling in the setting of Theorem~\ref{thm:thom}]
Fix $a\in\underline{\cU}\cup\overline{\cU}$ and $\epsilon\in(0,1)$. For ease of notation, we analyze $\E[M_a^{a^\star}(T)]$ rather than $\E[M_a^{a^\star}(T^{(G)})]$, but for fixed $G<\infty$ there is no loss of generality in doing so. If $a\in\underline{\cU}$, then let $\mu^\dagger= \cc_a\rho_{a^\star}$, and otherwise, fix $\mu^\dagger\in(\mu_a,\mu_{+})$. Let $\rho^\dagger$ and $\rho^\ddagger$ satisfy $\rho_a<\rho^\dagger<\rho^\ddagger<\mu^\dagger/\cc_a$ (exact quantities to be specified at the end of the proof). Note that
\begin{align*}
&\left\{a\in\A(t+1), \frac{\theta_{a^\star}(t)}{\cc_{a^\star}}<\hat{\rho}^\star(t)\right\} \\
&\subseteq \left\{a\in\A(t+1),\frac{\theta_a(t)}{\cc_a}\le\rho^\ddagger, \frac{\theta_{a^\star}(t)}{\cc_{a^\star}}<\hat{\rho}^\star(t)\right\}\cup \left\{a\in\A(t+1), \frac{\theta_a(t)}{\cc_a}>\rho^\ddagger\right\}.
\end{align*}
Recalling that $\E[M_a^{a^\star}(T)]$ is equal to $\sum_{t=0}^{T-1} \Prob\left\{a\in\A(t+1), \theta_{a^\star}(t)/\cc_{a^\star}<\hat{\rho}^\star(t)\right\}$, the above yields
\begin{align}
\E[M_a^{a^\star}(T)]\le&\, \sum_{t=0}^{T-1}\Prob\left\{a\in\A(t+1),\frac{\theta_a(t)}{\cc_a}\le\rho^\ddagger, \frac{\theta_{a^\star}(t)}{\cc_{a^\star}}<\hat{\rho}^\star(t)\right\} \nonumber \\
&+ \sum_{t=0}^{T-1} \Prob\left\{a\in\A(t+1), \frac{\hat{\mu}_a(t)}{\cc_a}>\rho^\dagger\right\} \nonumber \\
&+ \sum_{t=0}^{T-1} \Prob\left\{a\in\A(t+1), \frac{\theta_a(t)}{\cc_a}>\rho^\ddagger,\frac{\hat{\mu}_a(t)}{\cc_a}\le\rho^\dagger\right\}. \label{eq:termsitoiiiagain}
\end{align}
Note that the right-hand side of the above is almost identical to \eqref{eq:termsitoiii}. Note that all of the results used to control the three terms on the right-hand side of \eqref{eq:termsitoiii} hold for any $a$ with $\rho_a\le\rho^\star$ provided $\rho_a<\rho^\dagger<\rho^\ddagger<\mu^\dagger/\cc_a$. In particular, we are referring to Lemma~\ref{lem:suboptasopt}, \eqref{eq:suboptasoptsum}, Lemma~\ref{lem:carefulbinomialbound}, \eqref{eq:termii}, Lemma~\ref{lem:termii}, and Lemma~\ref{lem:termiii}. Hence, $\E[M_a^{a^\star}(T)]\le \frac{\log T}{\KL(\cc_a \rho^\dagger,\cc_a \rho^\ddagger)} + o(\log T)$.

Selecting $\rho^\dagger$ and $\rho^\ddagger$ as in the proof of \eqref{eq:subopt} and \eqref{eq:suboptindifference} from Theorem~\ref{thm:thom} yields $\E[M_a^{a^\star}(T)]\le (1+\epsilon)^2\frac{\log T}{\KL(\mu_a,\mu^\dagger)} + o(\log T)$. As $\epsilon$ was arbitrary, dividing both sides by $\log T$ and taking $T\rightarrow\infty$ followed by $\epsilon\rightarrow 0$ yields that $\E[M_a^{a^\star}(T)]\le \frac{\log T}{\KL(\mu_a,\mu^\dagger)} + o(\log T)$. If $a\in\underline{\cU}$, then replacing $T$ by $T^{(G)}$ (for $T$ large enough so that $T^{(G)}>1$) gives the desired $\E[M_a^{a^\star}(T^{(G)})]\le (1-\delta)^G\frac{\log T}{\Kinf(\nu_a,\mu^\dagger)} + o(\log T)$ in light of the fact that $\mu^\dagger=\cc_a\rho_{a^\star}$. If, on the other hand, $a\in\overline{\cU}$, then the same arguments used to conclude the $a\in\overline{\cU}$ result in the proof of Lemma~\ref{lem:TermA} for KL-UCB, namely selecting an appropriate sequence $\mu^\dagger(T)\rightarrow\mu_{+}$, can be used to show that $\E[M_a^{a^\star}(T^{(G)})]=o(\log T)$.
\qed\end{proof}

\begin{proof}[Proof of Lemma~\ref{lem:TermB} for Thompson sampling in the setting of Theorem~\ref{thm:thom}]
Fix $g\in\mathbb{N}$, an arm $a\in\underline{\cU}\subset\cM\cup\cN$, and $T^{(g)}$ such that $T^{(g)}>1$. Let $\rho^\dagger$ and $\rho^\ddagger$ satisfy $\rho_a<\rho^\dagger<\rho^\ddagger<\rho_{a^\star}$ and $\KL(\cc_a \rho^\dagger,\cc_a \rho^\ddagger)\ge (1-\delta)\KL(\mu_a,\cc_a \rho_{a^\star})$. By the same arguments used for \eqref{eq:termsitoiiiagain},
\begin{align}
\E&[M_a^{a^\star}(T^{(g-1)})-M_a^{a^\star}(T^{(g)})] \nonumber \\
\le&\, \sum_{t=T^{(g)}}^{T^{(g-1)}-1}\Prob\left\{a\in\A(t+1),\frac{\theta_a(t)}{\cc_a}\le\rho^\ddagger,\frac{\theta_{a^\star}(t)}{\cc_{a^\star}} < \hat{\rho}^\star(t)\right\} \nonumber \\
&+ \sum_{t=T^{(g)}}^{T^{(g-1)}-1} \Prob\left\{a\in\A(t+1), \frac{\hat{\mu}_a(t)}{\cc_a}>\rho^\dagger\right\} \nonumber \\
&+ \sum_{t=T^{(g)}}^{T^{(g-1)}-1} \Prob\left\{a\in\A(t+1), \frac{\theta_a(t)}{\cc_a}>\rho^\ddagger,\frac{\hat{\mu}_a(t)}{\cc_a}\le\rho^\dagger\right\}. \label{eq:thomNdiff}
\end{align}
The first two sums are trivially upper bounded by the sums from $t=0$ to $T-1$, and thus are $o(\log T)$ by Lemma~\ref{lem:suboptasopt}, \eqref{eq:suboptasoptsum}, Lemma~\ref{lem:carefulbinomialbound}, \eqref{eq:termii}, and Lemma~\ref{lem:termii}. If $a\in\cN$, then let $b_a(T,g)\equiv(1-\delta)\frac{\log T^{(g)}}{\KL(\mu_a,\cc_a \rho^\star)}$, and if $a\in\cM$ then let $b_a(T,g)\equiv\frac{\log T^{(g)}}{(1-\delta)\KL(\mu_a,\cc_a \rho_{a^\star})}$. We have that
\begin{align}
&\sum_{t=T^{(g)}}^{T^{(g-1)}-1} \Prob\left\{a\in\A(t+1), \frac{\theta_a(t)}{\cc_a}>\rho^\ddagger,\frac{\hat{\mu}_a(t)}{\cc_a}\le\rho^\dagger\right\} \nonumber \\
=&\, \E\left[\sum_{t=T^{(g)}}^{T^{(g-1)}-1} \Ind\left\{\frac{\hat{\mu}_a(t)}{\cc_a}\le\rho^\dagger,N_a(t)\ge b_a(T,g)\right\}\Prob\left\{a\in\A(t+1), \frac{\theta_a(t)}{\cc_a}>\rho^\ddagger\middle|\mathcal{F}(t)\right\}\right] \nonumber \\
&+ \E\left[\sum_{t=T^{(g)}}^{T^{(g-1)}-1} \Ind\left\{\frac{\hat{\mu}_a(t)}{\cc_a}\le\rho^\dagger,N_a(t)< b_a(T,g)\right\}\Prob\left\{a\in\A(t+1), \frac{\theta_a(t)}{\cc_a}>\rho^\ddagger\middle|\mathcal{F}(t)\right\}\right]. \label{eq:thomNdecomp}
\end{align}
If $a\in\cN$, then Lemma~\ref{lem:betabinomial} and $\KL(\cc_a\rho^\dagger,\cc_a\rho^\ddagger)\ge (1-\delta)\KL(\mu_a,\cc_a \rho_{a^\star})$ yield that the first term on the right is upper bounded by
\begin{align*}
\sum_{t=T^{(g)}}^{T^{(g-1)}-1} &\exp\left[-(1-\delta)^2\frac{\log T^{(g-1)}}{\KL(\mu_a,\cc_a \rho^\star)}\KL(\mu_a,\cc_a \rho_{a^\star})\right] \\
&\le T^{(g-1)}\exp\left[-(1-\delta)^2\frac{\log T^{(g-1)}}{\KL(\mu_a,\cc_a \rho^\star)}\KL(\mu_a,\cc_a \rho_{a^\star})\right]\le 1,
\end{align*}
where the second inequality holds because $\delta$ satisfies \eqref{eq:deltalogic}. If $a\in\cM$, then we instead have that this term is no larger than
\begin{align*}
\sum_{t=T^{(g)}}^{T^{(g-1)}-1} \exp\left[-\frac{\log T^{(g-1)}}{\KL(\mu_a,\cc_a \rho_{a^\star})}\KL(\mu_a,\cc_a \rho_{a^\star})\right]\le 1.
\end{align*}
For the second term in \eqref{eq:thomNdecomp}, note that
\begin{align*}
\E&\left[\sum_{t=T^{(g)}}^{T^{(g-1)}-1} \Ind\left\{\frac{\hat{\mu}_a(t)}{\cc_a}\le\rho^\dagger,N_a(t)< b_a(T,g)\right\}\Prob\left\{a\in\A(t+1), \frac{\theta_a(t)}{\cc_a}>\rho^\ddagger\middle|\mathcal{F}(t)\right\}\right] \\
&\le \E\left[\sum_{t=T^{(g)}}^{T^{(g-1)}-1} \Ind\left\{N_a(t)< b_a(T,g)\right\}\Prob\left\{a\in\A(t+1)\middle|\mathcal{F}(t)\right\}\right] \\
&= \E\left[\sum_{t=T^{(g)}}^{T^{(g-1)}-1} \Ind\left\{N_a(t)< b_a(T,g),a\in\A(t+1)\right\}\right] \\
&= \E\left[\Ind\left\{N_a\left(T^{(g)}\right)< b_a(T,g)\right\}\sum_{t=T^{(g)}}^{T^{(g-1)}-1} \Ind\left\{N_a(t)< b_a(T,g),a\in\A(t+1)\right\}\right] \\
&\le b_a(T,g)\Prob\left\{N_a\left(T^{(g)}\right)< b_a(T,g)\right\},
\end{align*}
where the final inequality uses that the sum inside the expectation is at most $b_a(T,g)$. By the uniform efficiency of the algorithm established in Lemma~\ref{lem:cons} and \eqref{eq:problb} from Lemma~\ref{lem:lb}, the probability in the final inequality is $o(1)$, and thus the above is $o(b_a(T,g))=o(\log T)$. Thus \eqref{eq:thomNdecomp} is $o(\log T)$.

Plugging this into \eqref{eq:thomNdiff} yields that $\E[M_a^{a^\star}(T^{(g-1)})-M_a^{a^\star}(T^{(g)})]=o(\log T)$ for each $a\in\underline{\cU}\subset\cM\cup\cN$ and each $g\le G$. As Term B is a sum of finitely many such terms, Term B is $o(\log T)$. 
\qed\end{proof}

\end{document}